\newtheorem{condition}[theorem]{Condition}
\newcommand{\la}{\langle}
\newcommand{\ra}{\rangle}
\newcommand{\myrow}[7]{#1 & #2 & #3 & #4 & #5 & #6 & #7 \\}
\newcolumntype{P}[1]{>{\centering\arraybackslash}m{#1}}
\newcommand{\algname}{{NeuralTS}\xspace}
\newcommand{\relu}{\mathrm{ReLU}}
\def \CC {}
\title{Neural Thompson Sampling}
\author{Weitong Zhang \\
Department of Computer Science \\
University of California, Los Angeles\\
Los Angeles, CA, USA, 90095\\
\texttt{wt.zhang@ucla.edu}
\And 
Dongruo Zhou \\
Department of Computer Science \\
University of California, Los Angeles\\
Los Angeles, CA, USA, 90095\\
\texttt{drzhou@cs.ucla.edu}
\And 
Lihong Li \\
Google Research\\
USA\\
\texttt{lihong@google.com}
\And 
Quanquan Gu \\
Department of Computer Science \\
University of California, Los Angeles\\
Los Angeles, CA, USA, 90095\\
\texttt{qgu@cs.ucla.edu}
}
\begin{document}

\maketitle

\begin{abstract}
Thompson Sampling (TS) is one of the most effective algorithms for solving contextual multi-armed bandit problems. In this paper, we propose a new algorithm, called Neural Thompson Sampling,
which adapts deep neural networks for both exploration and exploitation. At the core of our algorithm is a novel posterior distribution of the reward, where its mean is the neural network approximator, and its variance is built upon the neural tangent features of the corresponding neural network. We prove that, provided the underlying reward function is bounded, the proposed algorithm is guaranteed to achieve a cumulative regret of $\cO(T^{1/2})$, which matches the regret of other contextual bandit algorithms in terms of total round number $T$. Experimental comparisons with other benchmark bandit algorithms on various data sets corroborate our theory.
\end{abstract}

\section{Introduction}
The stochastic multi-armed bandit~\citep{bubeck12regret,lattimore20bandit} has been extensively studied, as an important model to optimize the trade-off between exploration and exploitation in sequential decision making. Among its many variants, the contextual bandit is widely used in real-world applications such as recommendation~\citep{li2010contextual}, advertising~\citep{graepel10web}, 
robotic control~\citep{mahler2016dex}, and healthcare~\citep{greenewald17action}.

In each round of a contextual bandit, the agent observes a feature vector (the ``context'') for each of the $K$ arms, pulls one of them, and in return receives a scalar reward.  The goal is to maximize the cumulative reward, or minimize the regret (to be defined later), in a total of $T$ rounds.  To do so, the agent must find a trade-off between exploration and exploitation.  One of the most effective and widely used techniques is \emph{Thompson Sampling}, or TS~\citep{thompson1933likelihood}.  The basic idea is to compute the posterior distribution of each arm being optimal for the present context, and sample an arm from this distribution.
TS is often easy to implement, and has found great success in practice~\citep{chapelle2011empirical,graepel10web,kawale15efficient,russo2017tutorial}.

Recently, a series of work has applied TS or its variants to explore in contextual bandits with neural network models~\citep{blundell15weight,kveton20randomized,lu17ensemble,riquelme2018deep}. \citet{riquelme2018deep} proposed NeuralLinear, which maintains a neural network and chooses the best arm in each round according to a Bayesian linear regression on top of the last network layer. \citet{kveton20randomized} proposed DeepFPL, which trains a neural network based on perturbed training data and chooses the best arm in each round based on the neural network output.  Similar approaches have also been used in more general reinforcement learning problem~\citep[e.g.,][]{azizzadenesheli18efficient,fortunato18noisy,lipton18bbq,osband16deep}.
Despite the reported empirical success, strong regret guarantees for TS remain limited to relatively simple models, under fairly restrictive assumptions on the reward function.  Examples are linear functions~\citep{abeille17linear,agrawal2013thompson,kocak14spectral,russo14learning}, generalized linear functions~\citep{kveton20randomized,russo14learning},
or functions with small RKHS norm induced by a properly selected kernel~\citep{chowdhury2017kernelized}.

In this paper, we provide, to the best of our knowledge, the first near-optimal regret bound for neural network-based Thompson Sampling.  Our contributions are threefold.
First, we propose a new algorithm, \emph{Neural Thompson Sampling (\algname)}, to incorporate TS exploration with neural networks.  It differs from NeuralLinear~\citep{riquelme2018deep} by considering weight uncertainty in \emph{all} layers, and from other neural network-based TS implementations~\citep{blundell15weight,kveton20randomized} by sampling the estimated reward from the posterior (as opposed to sampling parameters).

Second, we give a regret analysis for the algorithm, and obtain an $\tilde \cO(\tilde d\sqrt{T})$ regret, where $\tilde d$ is the \emph{effective dimension} and $T$ is the number of rounds.  This result is comparable to previous bounds when specialized to the simpler, linear setting where the effective dimension coincides with the feature dimension~\citep{agrawal2013thompson,chowdhury2017kernelized}.

Finally, we corroborate the analysis with an empirical evaluation of the algorithm on several benchmarks.  Experiments show that \algname yields competitive performance, in comparison with state-of-the-art baselines, thus suggest its practical value in addition to strong theoretical guarantees.

\paragraph{Notation:} Scalars and constants are denoted by lower and upper case letters, respectively. Vectors are denoted by lower case bold face letters $\xb$, and matrices by upper case bold face letters $\Ab$. We denote by $[k]$ the set $\{1, 2, \cdots, k\}$ for positive integers $k$. For two non-negative sequence $\{a_n\}, \{b_n\}$, $a_n = \cO(b_n)$ means that there exists a positive constant $C$ such that $a_n \le Cb_n$, and we use $\tilde \cO(\cdot)$ to hide the $\log$ factor in $\cO(\cdot)$. We denote by $\|\cdot\|_2$ the Euclidean norm of vectors and the spectral norm of matrices, and by $\|\cdot\|_{\mathrm F}$ the Frobenius norm of a matrix.

\section{Problem Setting and Proposed Algorithm}

In this work, we consider contextual $K$-armed bandits, where the total number of rounds $T$ is known. At round $t \in [T]$, the agent observes $K$ contextual vectors $\{\xb_{t, k} \in \RR^d~|~ k \in [K]\}$. Then the agent selects an arm $a_t$ and receives a reward $r_{t, a_t}$. Our goal is to minimize the following pseudo regret:
\vspace{-2mm}
\begin{align}
    R_T = \EE\bigg[\sum_{t=1}^T(r_{t, a_t^*} - r_{t, a_t})\bigg],\label{eq:regret}
\end{align}
where $a^*_t$ is the optimal arm at round $t$ that has the maximum expected reward: $a_t^* = \argmax_{a \in [K]}\EE[r_{t, a}]$. 
To estimate the unknown reward given a contextual vector $\xb$, we use a fully connected neural network $f(\xb; \btheta)$ of depth $L \ge 2$, defined recursively by
\begin{align}\label{def:function}
    &f_1 = \Wb_1~\xb, \notag \\
    &f_l = \Wb_l~\text{ReLU}(f_{l-1}), \quad 2 \leq l \leq L, \notag \\
    &f(\xb; \btheta) = \sqrt{m}f_L,
\end{align}
where $\relu(x) := \max\{x, 0\}$, \CC{$m$ is the width of neural network}, $\Wb_1 \in \RR^{m \times d}, \Wb_l \in \RR^{m \times m}, 2 \leq l < L$, $\Wb_L \in \RR^{1 \times m}$, 
$\btheta = ( \mathrm{vec}(\Wb_1);  \cdots ; \mathrm{vec}(\Wb_L))\in \RR^p$ is the collection of parameters of the neural network, $p = dm+m^2(L-2)+m$, \CC{and $\gb(\xb; \btheta) = \nabla_{\btheta} f(\xb; \btheta)$ is the gradient of $f(\xb; \btheta)$ w.r.t. $\btheta$}. 

Our Neural Thompson Sampling is given in Algorithm~\ref{alg:main}.  It maintains a Gaussian distribution for each arm's reward.  When selecting an arm, it samples the reward of each arm from the reward's posterior distribution, and then pulls the greedy arm (lines~\ref{op:for1}--\ref{op:pull}).  Once the reward is observed, it updates the posterior (lines \ref{op:sgd} \& \ref{op:cov}).  The mean of the posterior distribution is set to the output of the neural network, whose parameter is the solution to the following minimization problem:
\vspace{-2mm}
\begin{align}
    \min_{\btheta}L(\btheta) = \sum_{i=1}^t[f(\xb_{i, a_i}; \btheta) - r_{i, a_i}]^2 / 2 + m\lambda\|\btheta - \btheta_0\|_2^2 / 2.\label{eq:lossfunction}
    \vspace{-1mm}
\end{align}
We can see that \eqref{eq:lossfunction} is an $\ell_2$-regularized square loss minimization problem, where the regularization term centers at the randomly initialized network parameter $\btheta_0$. We adapt gradient descent to solve \eqref{eq:lossfunction} with step size $\eta$ and total number of iterations $J$.

\begin{algorithm}[htbp]
\caption{Neural Thompson Sampling (\algname)}
\label{alg:main}
\begin{algorithmic}[1]
\REQUIRE Number of rounds $T$, exploration variance $\nu$, network width $m$, regularization parameter $\lambda$.

\STATE Set $\Ub_0 = \lambda\Ib$
\STATE Initialize $\btheta_0 = ( \mathrm{vec}(\Wb_1);  \cdots ; \mathrm{vec}(\Wb_L))\in \RR^p$, where for each $1 \le l \le L - 1$, $\Wb_l = (\Wb, \zero; \zero, \Wb)$, each entry of $\Wb$ is generated independently from $N(0, 4/m)$; $\Wb_L = (\wb^\top, -\wb^\top)$, each entry of $\wb$ is generated independently from $N(0, 2/m)$.
\FOR {$t = 1, \cdots, T$}
\FOR {$k = 1, \cdots, K$}\label{op:for1}
\STATE $\sigma^2_{t, k} = \lambda \, \gb^\top(\xb_{t, k}; \btheta_{t-1}) \, \Ub_{t-1}^{-1} \, \gb(\xb_{t, k}; \btheta_{t-1})/ m$\label{op:sigma}
\STATE Sample estimated reward $\tilde r_{t, k} \sim \cN(f(\xb_{t, k}; \btheta_{t-1}), \nu^2\sigma^2_{t, k})$\label{op:rew}
\ENDFOR
\STATE Pull arm $a_t$ and receive reward $r_{t, a_t}$, where $a_t = \argmax_a \tilde r_{t, a}$ \label{op:pull}
\STATE Set $\btheta_t$ to be the output of gradient descent for solving  \eqref{eq:lossfunction} \label{op:sgd} 
\STATE $\Ub_t = \Ub_{t-1} + \gb(\xb_{t,a_t}; \btheta_t)\gb(\xb_{t,a_t}; \btheta_t)^\top / m$\label{op:cov}
\ENDFOR
\end{algorithmic}
\end{algorithm}

A few observations about our algorithm are in place.
First, compared to typical ways of implementing Thompson Sampling with neural networks, \algname samples from the posterior distribution of the \emph{scalar reward}, instead of the network parameters.  It is therefore simpler and more efficient, as the number of parameters in practice can be large.

Second, the algorithm maintains the posterior distributions related to parameters of all layers of the network, as opposed to the last layer only~\citep{riquelme2018deep}.  This difference is crucial in our regret analysis.
It allows us to build a connection between Algorithm~\ref{alg:main} and recent work about deep learning theory \citep{allen2018convergence, cao2019generalization},
in order to obtain theoretical guarantees as will be shown in the next section.

Third, different from linear or kernelized TS~\citep{agrawal2013thompson,chowdhury2017kernelized}, whose posterior can be computed in closed forms, \algname solves a non-convex optimization problem \eqref{eq:lossfunction} by gradient descent.  This difference requires additional techniques in the regret analysis. 
Moreover, \emph{stochastic} gradient descent can be used to solve the optimization problem with a similar theoretical guarantee \citep{allen2018convergence,du2018gradient,zou2019gradient}.  For simplicity of exposition, we will focus on the exact gradient descent approach.

\section{Regret Analysis}

In this section, we provide a regret analysis of \algname. We assume that there exists an unknown reward function $h$ such that for any $1 \leq t \leq T$ and $1 \leq k \leq K$, 
\begin{align*}
    r_{t, k} = h(\xb_{t, k}) + \xi_{t, k}, \quad {\text{with}} \quad |h(\xb_{t, k})| \le 1
\end{align*}
where $\{\xi_{t, k}\}$ forms an $R$-sub-Gaussian martingale difference sequence
with constant $R > 0$, i.e., $\EE[\exp(\lambda \xi_{t, k})|\xi_{1:t-1,k},\xb_{1:t,k}] \le \exp(\lambda^2R^2) $ for all $\lambda \in \RR$. Such an assumption on the noise sequence is widely adapted in contextual bandit literature~\citep{agrawal2013thompson, bubeck12regret, chowdhury2017kernelized, chu2011contextual, lattimore20bandit, valko2013finite}. 

Next, we provide necessary background on the neural tangent kernel (NTK) theory~\citep{jacot2018neural}, which plays a crucial role in our analysis.
In the analysis, we denote by $\{\xb^{i}\}_{i=1}^{TK}$ the set of observed contexts of all arms and all rounds: $\{\xb_{t, k}\}_{1 \le t \le T,1 \le k \le K}$ where $i = K(t - 1) + k$.  

\begin{definition}[\citet{jacot2018neural}]\label{def:ntk}
Define
\vspace{-2mm}
\begin{align*}
    \tilde \Hb_{i,j}^{(1)} &= \bSigma_{i,j}^{(1)} = \la \xb^i, \xb^j\ra,  \Ab_{i,j}^{(l)} = 
    \begin{pmatrix}
    \bSigma_{i,i}^{(l)} & \bSigma_{i,j}^{(l)} \\
    \bSigma_{i,j}^{(l)} & \bSigma_{j,j}^{(l)} 
    \end{pmatrix},\notag \\
    \bSigma_{i,j}^{(l+1)} &= 2\EE_{(u, v)\sim N(\zero, \Ab_{i,j}^{(l)})} \max\{u, 0\}\max\{v, 0\},\notag \\
    \tilde \Hb_{i,j}^{(l+1)} &= 2\tilde \Hb_{i,j}^{(l)}\EE_{(u, v)\sim N(\zero, \Ab_{i,j}^{(l)})} \ind(u \ge 0)\ind(v \ge 0) + \bSigma_{i,j}^{(l+1)}.\notag
    \vspace{-1mm}
\end{align*}
Then, $\Hb = (\tilde \Hb^{(L)} + \bSigma^{(L)})/2$ is called the neural tangent kernel matrix on the context set. 
\end{definition}

The NTK technique builds a connection between deep neural networks and kernel methods. It enables us to adapt some complexity measures for kernel methods to describe the complexity of the neural network, as given by the following definition.
\begin{definition}\label{def:eff-dim}
The \emph{effective dimension} $\tilde d$ of matrix $\Hb$ with regularization parameter $\lambda$ is defined as 
\begin{align*}
    \tilde d = \frac{\log\det(\Ib + \Hb / \lambda)}{\log(1 + TK / \lambda)}.
\end{align*}
\end{definition}
\begin{remark}
The effective dimension is a metric to describe the actual underlying dimension in the set of observed contexts, and has been used by~\citet{valko2013finite} for the analysis of kernel UCB.  Our definition here is adapted from \citet{yang2019reinforcement}, which also considers UCB-based exploration.
Compared with the maximum information gain $\gamma_t$ used in~\citet{chowdhury2017kernelized}, one can verify that their Lemma~3 shows that $\gamma_t \ge \log\det(\Ib + \Hb / \lambda) / 2$.  Therefore, $\gamma_t$ and $\tilde d$ are of the same order up to a ratio of $1 / (2\log(1 + TK / \lambda))$. Furthermore, $\tilde d$ can be upper bounded if all contexts $\xb_i$ are nearly on some low-dimensional subspace of the RKHS space spanned by NTK (Appendix~\ref{app:tilded}).
\end{remark}

We will make a regularity assumption on the contexts and the corresponding NTK matrix $\Hb$. 
\begin{assumption}\label{assumption:input}
Let $\Hb$ be defined in Definition~\ref{def:ntk}.  There exists $\lambda_0>0$, such that $\Hb \succeq \lambda_0\Ib$. In addition, for any $t \in [T], k \in [K]$, $\|\xb_{t, k}\|_2 = 1$ and $[\xb_{t,k}]_j =[\xb_{t,k}]_{j+d/2}$. 
\end{assumption}
The assumption that the NTK matrix is positive definite has been considered in prior work on NTK ~\citep{arora2019fine,du2018gradient}. The assumption on context $\xb_{t,a}$ ensures that the initial output of neural network $f(\xb; \btheta_0)$ is $0$ with the random initialization suggested in Algorithm~\ref{alg:main}. The condition on $\xb$ is easy to satisfy, since for any context $\xb$, one can always construct a new context $\tilde\xb$ as $[\xb / (\sqrt 2\|\xb\|_2), \xb / (\sqrt 2\|\xb\|_2)]^\top$. 


We are now ready to present the main result of the paper:
\begin{theorem}\label{thm:main}
Under Assumption \ref{assumption:input}, 
set the parameters in Algorithm \ref{alg:main}
as 
$\lambda = 1 + 1 / T$, $\nu = B + R\sqrt{\tilde d \log(1 + TK/\lambda) + 2 + 2\log (1 / \delta)}$ where $B = \max\Big\{1 / (22\mathrm e\sqrt\pi), \sqrt{2\hb^\top\Hb^{-1}\hb}\Big\}$ with $\hb=(h(\xb^1) ,\dots, h(\xb^{TK}))^\top$, and $R$ is the sub-Gaussian parameter. In line \ref{op:sgd} of Algorithm \ref{alg:main}, set $\eta = C_1(m\lambda + mLT)^{-1}$ and $J = (1 + LT / \lambda)(
C_2+ \log (T^3L\lambda^{-1}\log(1 / \delta))) / C_1$ for some positive constants $C_1, C_2$. If the network width $m$ satisfies:
\begin{align*}
    m \ge \mathrm{poly}\Big(\lambda, T, K, L, \log(1 / \delta), \lambda_0^{-1}\Big),
\end{align*}
then, with probability at least $1 - \delta$, the regret of Algorithm \ref{alg:main} is bounded as
\begin{align*}
    R_T \le C_3(1 + c_T)\nu\sqrt{2\lambda L(\tilde d\log(1 + TK) + 1)T} + (4 + C_4(1 + c_T)\nu L)\sqrt{2\log(3 /\delta) T} + 5,
\end{align*}
where $C_3, C_4$ are some positive absolute constants, and $c_T = \sqrt{4\log T + 2\log K}$.  
\end{theorem}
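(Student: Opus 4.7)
The plan is to reduce the neural-network Thompson Sampling to a linearized Thompson Sampling in the neural tangent feature space, and then run an Agrawal--Goyal / Chowdhury--Gopalan-style regret decomposition on the linearized problem, carefully tracking the approximation errors introduced by finite width $m$ and by the fact that line~\ref{op:sgd} only takes $J$ gradient descent steps rather than compute an exact posterior mean.

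First I would invoke the NTK-based optimization results (Allen-Zhu--Li--Song, Cao--Gu) to control the iterate $\btheta_{t-1}$: provided $m$ is polynomially large, $\btheta_{t-1}$ stays in a small $\ell_2$-ball around $\btheta_0$; the network is well-approximated by its first-order expansion, i.e.\ $f(\xb;\btheta_{t-1}) \approx \langle \gb(\xb;\btheta_0), \btheta_{t-1} - \btheta_0\rangle$; and the feature $\gb(\xb;\btheta_{t-1})/\sqrt m$ is close to $\gb(\xb;\btheta_0)/\sqrt m$. This turns both the mean $f(\xb_{t,k};\btheta_{t-1})$ and the variance proxy $\sigma^2_{t,k}$ in the algorithm into, up to a $1/\mathrm{poly}(m)$ error, the ridge regression prediction and posterior variance in the \emph{fixed} NTK feature space. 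The matrix $\Ub_{t-1}/m$ plays the role of the Gram matrix in this linearization, and the effective dimension $\tilde d$ from Definition~\ref{def:eff-dim} relates $\log\det(\Ub_{t-1}/(m\lambda))$ to $\tilde d \log(1+TK/\lambda)$.

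Next I would build a high-probability confidence interval on the mean. Combining Step~1 with a self-normalized martingale concentration bound (Abbasi-Yadkori et al.) applied to the linearized residuals, I would show that with probability at least $1-\delta$, simultaneously for all $t,a$,
\begin{align*}
\bigl|h(\xb_{t,a}) - f(\xb_{t,a};\btheta_{t-1})\bigr| \le \gamma_t\,\sigma_{t,a} + \epsilon_m,
\end{align*}
with $\gamma_t = B + R\sqrt{\tilde d\log(1+TK/\lambda)+2+2\log(1/\delta)}$ and $\epsilon_m\to 0$ as $m\to\infty$. The appearance of $B = \max\{1/(22\mathrm e\sqrt\pi),\sqrt{2\hb^\top\Hb^{-1}\hb}\}$ is exactly what makes this concentration valid without any parametric assumption on $h$: it is the NTK-RKHS norm of the interpolant of $h$ on the observed contexts. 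Then, following the TS template, I would define the optimism event $E^O_t = \{\tilde r_{t,a_t^*} \ge h(\xb_{t,a_t^*})\}$ and use Gaussian anti-concentration to show $\Pr(E^O_t\mid\mathcal F_{t-1})\ge \Omega(1)$, and a union bound + Gaussian maximal inequality over the $K$ sampled rewards to show $\max_a |\tilde r_{t,a} - f(\xb_{t,a};\btheta_{t-1})| \le c_T\,\nu\,\max_a\sigma_{t,a}$ with $c_T = \sqrt{4\log T + 2\log K}$. Combining these yields a per-round (instantaneous, conditional) regret bound of $(1+c_T)\nu\,\sigma_{t,a_t}$ up to the small width error.

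Finally, summing and applying Cauchy--Schwarz gives $\sum_t \sigma_{t,a_t} \le \sqrt{T \sum_t \sigma^2_{t,a_t}}$, and an elliptical-potential-style lemma applied to the NTK features produces $\sum_t \sigma^2_{t,a_t} \lesssim \lambda L\log\det(\Ib+\Hb/\lambda) \lesssim \lambda L(\tilde d\log(1+TK)+1)$ via the definition of $\tilde d$. Converting the conditional-expectation regret into the high-probability statement in the theorem uses an Azuma--Hoeffding bound on the martingale $h(\xb_{t,a_t^*})-h(\xb_{t,a_t}) - \EE[\cdot\mid\mathcal F_{t-1}]$, which produces both $\sqrt{2\log(3/\delta)T}$ additive terms in the final bound. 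The main obstacle is keeping the three error sources -- finite-width NTK approximation, the optimization error after $J$ GD steps, and the TS statistical error -- simultaneously dominated by $\sigma_{t,a}$ for every $(t,a)$: this is what forces the polynomial lower bound on $m$ and the specific choice of $\eta$ and $J$ in the theorem, and it is also what requires redoing the standard linear TS anti-concentration argument with a perturbed (rather than exactly Gaussian) posterior on the reward, which is the only place where a genuine new calculation beyond the linear TS proof is needed.
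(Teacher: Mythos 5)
Your proposal matches the paper's proof in all essentials: linearization of both the mean $f(\xb;\btheta_{t-1})$ and the variance proxy $\sigma_{t,k}$ via NTK lemmas with $1/\mathrm{poly}(m)$ errors, a self-normalized concentration bound giving the event $|f-h|\le\nu\sigma_{t,k}+\epsilon(m)$ with exactly the $B$ you describe, Gaussian anti-concentration (with the $\epsilon(m)$ shift) for optimism, the Agrawal--Goyal saturated/unsaturated-arm decomposition to turn constant-probability optimism into a per-round bound $\lesssim(1+c_T)\nu\,\sigma_{t,a_t}$, and finally Cauchy--Schwarz plus the elliptical-potential lemma on the initial-gradient features together with an Azuma--Hoeffding super-martingale step, with the choice of $\eta$, $J$ and the polynomial width making the optimization and width errors $O(1)$ in total. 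This is essentially the same route as the paper, down to the role of the effective dimension and the $\sqrt{2\log(3/\delta)T}$ terms.
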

\begin{remark}
The definition $B$ in Theorem \ref{thm:main} is inspired by the RKHS norm of the reward function defined in~\citet{chowdhury2017kernelized}. \CC{It can be verified that when the reward function $h$ belongs to the function space induced by NTK, i.e., $\|h\|_{\cH}<\infty$, we have $\sqrt{\hb^\top \Hb^{-1}\hb} \leq \|h\|_{\cH}$ according to \citet{zhou2019neural}, which suggests that $B \leq \max\{1/(22\text{e}\sqrt{\pi}), \sqrt{2}\|h\|_{\cH}\}$.}
\end{remark}

\begin{remark}
Theorem \ref{thm:main} implies the regret of \algname is on the order of $\tilde O(\tilde d T^{1/2})$. This result matches the state-of-the-art regret bound in \citet{chowdhury2017kernelized, agrawal2013thompson, zhou2019neural, kveton20randomized}. 

\end{remark}
\CC{
\begin{remark}
In Theorem \ref{thm:main}, the requirement of $m$ is specified in Condition \ref{cond:ntk} and the proof of Theorem~\ref{thm:main}, which is a high-degree polynomial in the time horizon $T$, number of layers $L$ and number of actions $K$. However, in our experiments, we can choose reasonably small $m$ (e.g., $m=100$) to obtain good performance of~\algname. See Appendix \ref{sec:paratune} for more details. This discrepancy between theory and practice is due to the limitation of current NTK theory \citep{du2018gradient, allen2018convergence, zou2019gradient}. Closing the gap is a venue for future work. 
\end{remark}
}
\CC{
\begin{remark}
Theorem \ref{thm:main} suggests that we need to know $T$ before we run the algorithm in order to set $m$. When $T$ is unknown, we can use the standard doubling trick (See e.g., \citet{cesa2006prediction}) to set $m$ adaptively.  In detail, we decompose the time interval $(0, +\infty)$ as a union of non-overlapping intervals $[2^s, 2^{s+1})$. When $2^s \leq t < 2^{s+1}$, we restart NeuralTS with the input $T = 2^{s+1}$. It can be verified that similar $\tilde O(\tilde d\sqrt{T})$ regret still holds. 
\end{remark}
}

\section{Proof of the Main Theorem}\label{sec:proof}

This section sketches the proof of Theorem~\ref{thm:main}, with supporting lemmas and technical details provided in Appendix~\ref{app:a}.  
While the proof roadmap is similar to previous work on Thompson Sampling~\citep[e.g.,][]{agrawal2013thompson, chowdhury2017kernelized, kocak2014spectral,kveton20randomized}, our proof needs to carefully track the approximation error of neural networks for approximating the reward function. 
To control the approximation error, the following condition on the neural network width is required in several technical lemmas.

\begin{condition}\label{cond:ntk}
The network width $m$ satisfies 
\begin{align*}
&m \ge C\max\Big\{\sqrt{\lambda}L^{-3/2}[\log(TKL^2/\delta)]^{3/2},  T^6K^6L^6\log(TKL/\delta)\max\{\lambda_0^{-4}, 1\}, \Big\}\\
&m[\log m]^{-3} \ge CTL^{12}\lambda^{-1} + CT^{7}\lambda^{-8}L^{18}(\lambda + LT)^{6} + CL^{21}T^{7}\lambda^{-7}(1+\sqrt{T/\lambda})^6,
\end{align*}
where $C$ is a positive absolute constant.  
\end{condition}



For any $t$, 
we define an event $\cE^{\sigma}_t$ as follows
\begin{align}
    \cE^{\sigma}_t = \{\omega \in \cF_{t+1}: \forall k \in [K],\quad |\tilde r_{t, k} - f(\xb_{t, k}; \btheta_{t-1})|\le c_t \nu\sigma_{t, k}\},\label{eq:esigma}
\end{align}
where $c_t = \sqrt{4\log t + 2\log K}$. Under event $\cE^{\sigma}_t$, the difference between the sampled reward $\tilde r_{t, k}$ and the estimated mean reward $f(\xb_{t, k}; \btheta_{t-1})$ can be controlled by the reward's posterior variance.

We also define an event $\cE^{\mu}_t$ as follows
\begin{align}
    \cE^{\mu}_t = \{\omega \in \cF_t:\forall k \in [K],\quad |f(\xb_{t, k}; \btheta_{t-1}) - h(\xb_{t, k})| \le \nu\sigma_{t, k} + \epsilon(m)\},\label{eq:emu}
\end{align}
where $\epsilon(m)$ is defined as
\begin{align}
    \epsilon(m) &=  \epsilon_p(m) +  C_{\epsilon,1}(1 - \eta m\lambda)^J\sqrt{TL / \lambda}\notag\\
    \epsilon_p(m) &= C_{\epsilon, 2} T^{2/3}m^{-1/6}\lambda^{-2/3}L^3\sqrt{\log m} + C_{\epsilon,3}m^{-1/6}\sqrt{\log m}L^4T^{5/3}\lambda^{-5/3}(1 + \sqrt{T / \lambda})\notag  \\
    &\quad+  C_{\epsilon,4}\Big(B + R\sqrt{\log\det (\Ib + \Hb / \lambda) + 2 + 2\log(1 / \delta)}\Big)\sqrt{\log m}T^{7/6}m^{-1/6}\lambda^{-2/3}L^{9/2},\label{eq:eps}
\end{align}
and $\{C_{\epsilon,i}\}_{i=1}^4$ are some positive absolute constants. 
Under event $\cE^{\mu}_t$, the estimated mean reward $f(\xb_{t, k}; \btheta_{t-1})$ based on the neural network is similar to the true expected reward $h(\xb_{t, k})$. Note that the additional term $\epsilon(m)$ is the approximate error of the neural networks for approximating the true reward function. 
This is a key difference in our proof from previous regret analysis of Thompson Sampling \cite{agrawal2013thompson, chowdhury2017kernelized}, where there is no approximation error.

The following two lemmas show that both events $\cE^\sigma_t$ and $\cE^\mu_t$ happen with high probability. 
\begin{lemma}\label{lm:boundvariance}
For any $t \in [T]$, $\Pr\big(\cE^\sigma_t\big | \cF_t\big) \ge 1 - t^{-2}$.
\end{lemma}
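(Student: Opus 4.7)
The plan is to condition on the filtration $\cF_t$. Under this conditioning, lines \ref{op:sigma}--\ref{op:rew} of Algorithm~\ref{alg:main} render $f(\xb_{t,k};\btheta_{t-1})$ and $\sigma_{t,k}$ deterministic, and the only remaining randomness comes from the independent draws $\tilde r_{t,k}\sim\cN(f(\xb_{t,k};\btheta_{t-1}),\nu^2\sigma_{t,k}^2)$ for $k\in[K]$. Equivalently, defining $Z_{t,k}:=(\tilde r_{t,k}-f(\xb_{t,k};\btheta_{t-1}))/(\nu\sigma_{t,k})$, the $Z_{t,k}$'s are (conditional on $\cF_t$) i.i.d.\ standard normal, and the event $\cE^\sigma_t$ reduces to $\{\max_{k\in[K]}|Z_{t,k}|\le c_t\}$.

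From here I would apply the Mills-ratio Gaussian tail bound, $\Pr(|Z|\ge c)\le \tfrac{2}{c\sqrt{2\pi}}\exp(-c^2/2)$ for $c>0$, to each $Z_{t,k}$. With $c_t=\sqrt{4\log t+2\log K}$, the exponential factor simplifies to $\exp(-c_t^2/2)=t^{-2}K^{-1}$, so a union bound over the $K$ arms yields
\[
\Pr\bigl((\cE^\sigma_t)^{\mathrm c}\,\big|\,\cF_t\bigr)\,\le\,K\cdot \frac{2}{c_t\sqrt{2\pi}}\cdot \frac{1}{t^{2}K}\,=\,\frac{2}{c_t\sqrt{2\pi}}\,t^{-2}.
\]
To reach the stated bound, I would verify that $2/(c_t\sqrt{2\pi})\le 1$, i.e.\ $c_t^2\ge 2/\pi$. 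This holds whenever $(t,K)\ne(1,1)$ since then $c_t^2\ge 2\log 2>2/\pi$; in the degenerate case $t=K=1$ the claim $1-t^{-2}=0$ is vacuous.

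There is essentially no technical obstacle: the lemma is a one-step concentration statement for independent, conditionally Gaussian random variables, together with a careful specification of what is and is not $\cF_t$-measurable. The only subtlety worth flagging is that one must use the Mills-ratio refinement rather than the crude tail $\Pr(|Z|\ge c)\le 2e^{-c^2/2}$, since the latter would deliver $2t^{-2}$ instead of $t^{-2}$ after the union bound. The calibration of $c_t$ is thus tight: the $\log K$ term is what is consumed by the union bound over arms, and the $4\log t$ term is what drives the probability down to $t^{-2}$, a rate that will later be summed over $t\in[T]$ in the downstream regret argument.
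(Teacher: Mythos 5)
Your proof is correct and follows essentially the same route as the paper: conditional on $\cF_t$ the sampled rewards are Gaussian with the stated means and variances, a per-arm Gaussian tail bound plus a union bound over the $K$ arms gives the result, with $c_t$ calibrated exactly so that $K\exp(-c_t^2/2)=t^{-2}$. The only difference is the choice of tail inequality: the paper invokes the factor-free bound $\Pr\big(|X-\mu|\le\beta\sigma\big)\ge 1-\exp(-\beta^2/2)$ (its Lemma~\ref{lm:gaussian_cent}), so your Mills-ratio refinement and the check $c_t^2\ge 2/\pi$ (with the degenerate case $t=K=1$) are a valid but unnecessary detour, and your remark that the Mills ratio is the only way to avoid the extra factor of $2$ is the one small inaccuracy.
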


\begin{lemma}\label{lm:bounddifference}
Suppose the width of the neural network $m$ satisfies Condition~\ref{cond:ntk}. Set $\eta = C(m\lambda +mLT)^{-1}$, then we have $\Pr\big(\forall t \in [T], \cE^\mu_t\big) \ge 1 - \delta$, where $C$ is an positive absolute constant. 
\end{lemma}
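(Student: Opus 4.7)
\textbf{Proof proposal for Lemma~\ref{lm:bounddifference}.}
The plan is to decompose the approximation error $|f(\xb_{t,k};\btheta_{t-1}) - h(\xb_{t,k})|$ into a neural-network linearization error, a gradient-descent optimization error, and a statistical error from self-normalized concentration on NTK features, and then show that the first two contribute the $\epsilon(m)$ term while the third contributes the $\nu\sigma_{t,k}$ term. First I would write $f(\xb;\btheta) \approx f(\xb;\btheta_0) + \gb(\xb;\btheta_0)^\top(\btheta - \btheta_0)$ and use the initialization scheme (which forces $f(\xb;\btheta_0)=0$ under Assumption~\ref{assumption:input}) together with standard NTK perturbation bounds from \citet{allen2018convergence, cao2019generalization} to control both the deviation between $\gb(\xb;\btheta)$ and $\gb(\xb;\btheta_0)$, and the deviation of $f$ from its linearization, within the region of parameters visited by gradient descent. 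These bounds are precisely where the $m^{-1/6}\sqrt{\log m}$ factors in $\epsilon_p(m)$ come from, and Condition~\ref{cond:ntk} is what guarantees the iterates stay close enough to $\btheta_0$ for the bounds to hold.

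Next, I would analyze the effect of running only $J$ steps of gradient descent on the regularized loss \eqref{eq:lossfunction}. Because the linearized objective is a strongly convex quadratic with condition number governed by $m\lambda + mLT$, gradient descent with step size $\eta = C(m\lambda+mLT)^{-1}$ contracts the distance to the linearized ridge-regression minimizer by a factor $(1-\eta m\lambda)$ per iteration, yielding the $(1-\eta m\lambda)^J \sqrt{TL/\lambda}$ term in $\epsilon(m)$. Combining this optimization residual with the linearization bound via the triangle inequality reduces the problem to controlling $|\hat f_{\mathrm{lin}}(\xb_{t,k}) - h(\xb_{t,k})|$, where $\hat f_{\mathrm{lin}}$ is the ridge regression predictor using the random features $\gb(\cdot;\btheta_0)/\sqrt{m}$.

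For the statistical piece, I would invoke a self-normalized Hoeffding-type inequality (in the spirit of Abbasi-Yadkori et al., as also adapted in \citet{chowdhury2017kernelized}) applied to the noise sequence $\{\xi_{i,a_i}\}$, using the fact that $\sigma^2_{t,k}$ is exactly the predictive variance for ridge regression in the $\gb/\sqrt m$ features with the regularizer $\Ub_{t-1}$. This yields a bound of the form $|\hat f_{\mathrm{lin}}(\xb_{t,k}) - \langle \gb(\xb_{t,k};\btheta_0)/\sqrt m,\btheta^*\rangle| \le \nu \sigma_{t,k}$ with probability $\ge 1-\delta$, where $\nu$ absorbs the sub-Gaussian constant $R$, the effective dimension bound via $\log\det(\Ib+\Hb/\lambda) \le \tilde d\log(1+TK/\lambda)$, and the RKHS-norm surrogate $B$ for $h$. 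The bias term $|\langle \gb(\xb_{t,k};\btheta_0)/\sqrt m,\btheta^*\rangle - h(\xb_{t,k})|$ arising from the fact that $h$ is not exactly representable as a linear function of the random NTK features is then absorbed into the last line of $\epsilon_p(m)$, using that the population NTK matrix $\Hb$ is close to the empirical Gram matrix $\Zb\Zb^\top/m$ for sufficiently wide $m$.

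Finally, a union bound over $t \in [T]$ yields the stated probability $1-\delta$ (after re-parameterizing $\delta$ appropriately). The main obstacle will be the bookkeeping in step one and step three: one needs uniform bounds, valid for all iterates $\btheta_{t-1}$ produced by gradient descent across all rounds, on (a) the size of $\|\btheta_{t-1}-\btheta_0\|_2$, (b) the operator-norm closeness of $\gb(\xb;\btheta_{t-1})$ to $\gb(\xb;\btheta_0)$, and (c) the spectral closeness of the empirical NTK Gram matrix to $\Hb$, all in a form that plugs back into the self-normalized martingale inequality. Tuning the width polynomial in Condition~\ref{cond:ntk} so that every one of the error terms is dominated by the form prescribed in \eqref{eq:eps} is the most delicate part; everything else is a triangle-inequality combination of results already in the NTK and linear Thompson-sampling literature.
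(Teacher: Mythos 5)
Your roadmap is essentially the paper's: linearize the network around $\btheta_0$, bound the gradient-descent residual at rate $(1-\eta m\lambda)^J$ using the step size $\eta = C(m\lambda+mLT)^{-1}$, control the noise with a self-normalized (Chowdhury--Gopalan style) inequality in the NTK features, relate the empirical Gram matrix to $\Hb$, and finish with a union bound. Two details, however, are handled differently from the paper and one of them is a genuine gap as written. First, the event $\cE^\mu_t$ is stated with the algorithm's variance $\sigma_{t,k}$, which is computed from $\gb(\xb_{t,k};\btheta_{t-1})$ and from $\Ub_{t-1}$ built out of gradients at the \emph{trained} parameters, whereas your self-normalized argument naturally produces a bound in terms of the initialization-feature variance $\bar\sigma_{t,k} = \sqrt{\lambda\,\gb^\top(\xb_{t,k};\btheta_0)\bar\Ub_{t-1}^{-1}\gb(\xb_{t,k};\btheta_0)/m}$. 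You list gradient closeness $\|\gb(\xb;\btheta_{t-1})-\gb(\xb;\btheta_0)\|$ as bookkeeping, but you never convert that into a bound on $|\sigma_{t,k}-\bar\sigma_{t,k}|$ and then multiply it by the (large) factor $B + R\sqrt{\log\det(\Ib+\Hb/\lambda)+2+2\log(1/\delta)}$; this conversion is exactly the paper's Lemma~\ref{lm:bd-sigma} (a Lipschitz argument for the map from feature vectors to the ridge predictive standard deviation), and it is what produces the last line of $\epsilon_p(m)$ in \eqref{eq:eps}. Without it your bound only establishes $|f - h| \le \nu\bar\sigma_{t,k} + \cdots$, which is not the event $\cE^\mu_t$.

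Second, your treatment of the bias of $h$ differs from the paper's and is partly unnecessary: under Assumption~\ref{assumption:input} ($\Hb\succeq\lambda_0\Ib$) the paper invokes Lemma~\ref{lm:equal} (from \citet{zhou2019neural}) to get \emph{exact} representability on the observed contexts, $h(\xb^i)=\la\gb(\xb^i;\btheta_0),\btheta^*-\btheta_0\ra$ with $\sqrt m\|\btheta^*-\btheta_0\|_2\le B$, so the only "bias" is the regularization bias $\lambda\,|\gb^\top\bar\Ub_{t-1}^{-1}(\btheta^*-\btheta_0)|\le B\,\bar\sigma_{t,k}$, which is folded into $\nu\sigma_{t,k}$ rather than into $\epsilon_p(m)$; the Gram-to-$\Hb$ closeness (Lemma~\ref{lm:kernel-1}) is used only to replace $\log\det(\Ib+\lambda^{-1}\Kb_{t})$ by $\log\det(\Ib+\lambda^{-1}\Hb)$ inside $\nu$. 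Your plan of absorbing a representation-bias term into $\epsilon_p(m)$ via Gram-matrix concentration would require a separate quantitative bound that you do not supply, and it misidentifies where the $T^{7/6}m^{-1/6}$ term in \eqref{eq:eps} comes from; adopting the exact-interpolation route removes this issue, but the $\sigma$ versus $\bar\sigma$ conversion above still needs to be added for the proof to close.
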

The next lemma gives a lower bound of the probability that the sampled reward $\tilde r$ is larger than true reward up to the approximation error $\epsilon(m)$.
\begin{lemma}\label{lm:anti}
For any $t \in [T]$, $k \in [K]$, we have 
    $\Pr\big(\tilde r_{t, k} + \epsilon(m) > h(\xb_{t,k})\big|\cF_t, \cE^\mu_t\big) \ge (4\mathrm e\sqrt \pi) ^{-1}$.
\end{lemma}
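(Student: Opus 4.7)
The plan is to reduce the claim to a one-line Gaussian anti-concentration fact, using the event $\cE^\mu_t$ only to cap the relevant normalized threshold at $1$. By line~\ref{op:rew} of Algorithm~\ref{alg:main}, given $\cF_t$ the sampled reward is $\tilde r_{t,k}\sim\cN\!\bigl(f(\xb_{t,k};\btheta_{t-1}),\,\nu^2\sigma_{t,k}^2\bigr)$, so
\[
Z_{t,k}\ :=\ \frac{\tilde r_{t,k}-f(\xb_{t,k};\btheta_{t-1})}{\nu\sigma_{t,k}}
\]
is standard normal conditional on $\cF_t$. Because $\cE^\mu_t\in\cF_t$ (it is built only from quantities available at the start of round $t$), conditioning further on $\cE^\mu_t$ leaves $Z_{t,k}$ standard normal.

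Rewrite the event of interest as $\{Z_{t,k}>g_{t,k}\}$, where
\[
g_{t,k}\ :=\ \frac{h(\xb_{t,k})-f(\xb_{t,k};\btheta_{t-1})-\epsilon(m)}{\nu\sigma_{t,k}}.
\]
On $\cE^\mu_t$, the definition \eqref{eq:emu} gives $h(\xb_{t,k})-f(\xb_{t,k};\btheta_{t-1})\le \nu\sigma_{t,k}+\epsilon(m)$, and hence $g_{t,k}\le 1$. Therefore
\[
\Pr\!\bigl(\tilde r_{t,k}+\epsilon(m)>h(\xb_{t,k})\,\big|\,\cF_t,\cE^\mu_t\bigr)\ =\ \Pr(Z_{t,k}>g_{t,k}\mid\cF_t,\cE^\mu_t)\ \ge\ \Pr(Z>1),
\]
with $Z\sim\cN(0,1)$.

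To finish, I would invoke the classical Mill's-ratio anti-concentration inequality $\Pr(Z>z)\ge \frac{1}{\sqrt{2\pi}}\cdot\frac{z}{z^2+1}\,e^{-z^2/2}$ at $z=1$, yielding $\Pr(Z>1)\ge \frac{1}{2\sqrt{2\pi\mathrm e}}$. The target constant then follows from the numerical comparison $\tfrac{1}{2\sqrt{2\pi\mathrm e}}\ge \tfrac{1}{4\mathrm e\sqrt\pi}$, which rearranges to $4\mathrm e\sqrt\pi\ge 2\sqrt{2\pi\mathrm e}$, i.e.\ $2\mathrm e\ge 1$.

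There is essentially no obstacle to this argument beyond careful bookkeeping; the only subtlety worth emphasizing is the $\cF_t$-measurability of $\cE^\mu_t$, which is what allows conditioning on $\cE^\mu_t$ to commute with the fresh Gaussian randomness used to draw $\tilde r_{t,k}$. Everything else is a direct standardization and a textbook tail bound for $\cN(0,1)$.
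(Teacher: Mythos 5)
Your proof is correct and follows essentially the same route as the paper: standardize $\tilde r_{t,k}$ given $\cF_t$, use the $\cF_t$-measurable event $\cE^\mu_t$ to cap the normalized threshold at $1$, and invoke a standard normal lower tail bound. The only cosmetic difference is that you use the Mills-ratio bound $\Pr(Z>1)\ge \tfrac{1}{2\sqrt{2\pi \mathrm e}}$ and then compare constants, whereas the paper applies its anti-concentration lemma $\Pr\bigl((X-\mu)/\sigma>\beta\bigr)\ge e^{-\beta^2}/(4\sqrt{\pi}\beta)$ at $\beta=1$, which yields $(4\mathrm e\sqrt{\pi})^{-1}$ directly.
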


Following~\citet{agrawal2013thompson}, for any time $t$, we divide the arms into two groups: saturated and unsaturated arms, based on whether the standard deviation of the estimates for an arm is smaller than the standard deviation for the optimal arm or not. Note that the optimal arm is included in the group of unsaturated arms.
More specifically, we define the set of saturated arms $S_t$ as follows
\begin{align}
    S_t = \big\{k \big| k \in [K], h(\xb_{t, a_t^*}) - h(\xb_{t, k}) \ge (1 + c_t)\nu\sigma_{t,k} + 2\epsilon(m)\big\}.\label{def:sat}
\end{align}
Note that we have taken the approximate error $\epsilon(m)$ into consideration when defining saturated arms, which 
differs from the Thompson Sampling literature~\citep{agrawal2013thompson, chowdhury2017kernelized}.
It is now easy to show that the immediate regret of playing an unsaturated arm can be bounded by the standard deviation plus the approximation error $\epsilon(m)$.

The following lemma shows that the probability of pulling a saturated arm is small in Algorithm~\ref{alg:main}. 
\begin{lemma}\label{lm:sat}
Let $a_t$ be the arm pulled at round $t \in [T] $.  Then, 
    $\Pr\big(a_t \notin S_t|\cF_t, \cE^\mu_t \big) \ge \frac{1}{4\mathrm e\sqrt \pi} - \frac1{t^2}$.
\end{lemma}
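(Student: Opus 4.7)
The plan is to follow the classical Thompson Sampling argument (\citet{agrawal2013thompson}, \citet{chowdhury2017kernelized}), with careful bookkeeping of the neural approximation error $\epsilon(m)$. First I would establish, deterministically on $\cE^\sigma_t\cap\cE^\mu_t$, that every saturated arm has a sampled reward strictly below $h(\xb_{t,a_t^*})-\epsilon(m)$. Concretely, for $k\in S_t$, combining \eqref{eq:esigma} ($\tilde r_{t,k}\le f(\xb_{t,k};\btheta_{t-1})+c_t\nu\sigma_{t,k}$), the mean-error bound \eqref{eq:emu} ($f(\xb_{t,k};\btheta_{t-1})\le h(\xb_{t,k})+\nu\sigma_{t,k}+\epsilon(m)$), and the saturation definition \eqref{def:sat} ($h(\xb_{t,a_t^*})-h(\xb_{t,k})\ge(1+c_t)\nu\sigma_{t,k}+2\epsilon(m)$) yields the chain $\tilde r_{t,k}\le h(\xb_{t,k})+(1+c_t)\nu\sigma_{t,k}+\epsilon(m)\le h(\xb_{t,a_t^*})-\epsilon(m)$.

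Next I would invoke Lemma~\ref{lm:anti} applied to the optimal arm to obtain the anti-concentration event $A=\{\tilde r_{t,a_t^*}+\epsilon(m) > h(\xb_{t,a_t^*})\}$, which satisfies $\Pr(A\mid\cF_t,\cE^\mu_t)\ge (4\mathrm{e}\sqrt\pi)^{-1}$. Note that $a_t^*\notin S_t$, since the gap $h(\xb_{t,a_t^*})-h(\xb_{t,a_t^*})$ is $0$ while the right-hand side of the saturation threshold is strictly positive, so $a_t^*$ is a legitimate witness of an unsaturated arm.

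On the intersection $A\cap\cE^\sigma_t$, the two steps above combine to give $\tilde r_{t,a_t^*} > h(\xb_{t,a_t^*})-\epsilon(m)\ge \tilde r_{t,k}$ for every $k\in S_t$, and therefore the greedy rule $a_t=\argmax_k\tilde r_{t,k}$ cannot select a saturated arm. Finally, since $\cE^\mu_t\in\cF_t$, Lemma~\ref{lm:boundvariance} gives $\Pr(\cE^\sigma_t\mid\cF_t,\cE^\mu_t)\ge 1-t^{-2}$, and the elementary inequality $\Pr(A\cap B)\ge \Pr(A)+\Pr(B)-1$ applied conditionally on $(\cF_t,\cE^\mu_t)$ yields $\Pr(a_t\notin S_t\mid\cF_t,\cE^\mu_t)\ge (4\mathrm{e}\sqrt\pi)^{-1}-t^{-2}$, as required.

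The main obstacle is not conceptual but rather notational: the approximation error $\epsilon(m)$ has to be propagated consistently through each inequality, entering both the mean-estimate bound \eqref{eq:emu} and the saturation threshold \eqref{def:sat}. The factor of $2$ in front of $\epsilon(m)$ in the definition of $S_t$ is precisely tuned so that Step~1 produces the strict margin $-\epsilon(m)$ needed to dominate the anti-concentration bound from Lemma~\ref{lm:anti} in Step~3. No new concentration or optimization arguments are required beyond Lemmas \ref{lm:boundvariance} and \ref{lm:anti}.
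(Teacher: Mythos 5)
Your proposal is correct and follows essentially the same route as the paper's proof: on $\cE^\sigma_t\cap\cE^\mu_t$ every saturated arm's sample is forced below $h(\xb_{t,a_t^*})-\epsilon(m)$, Lemma~\ref{lm:anti} applied to the (unsaturated) optimal arm supplies the $(4\mathrm{e}\sqrt\pi)^{-1}$ anti-concentration mass, and the $t^{-2}$ loss from Lemma~\ref{lm:boundvariance} is removed by the same elementary intersection bound the paper uses in the form $\Pr(\cA)+\Pr(\bar{\cE^\sigma_t})\ge\Pr(\cC)$. No substantive differences.
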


The next lemma bounds the expectation of the regret at each round conditioned on $\cE^\mu_t$.
\begin{lemma}\label{lm:exprwd}
Suppose the width of the neural network $m$ satisfies Condition~\ref{cond:ntk}. Set $\eta = C_1(m\lambda +mLT)^{-1}$, then with probability at least $1 - \delta$, we have for all $t \in [T]$ that
\vspace{-2mm}
\begin{align*}
    \EE [h(\xb_{t, a^*_t}) - h(\xb_{t, a_t}) |\cF_t, \cE^\mu_t] \le C_2(1 + c_t)\nu\sqrt{L}\EE[\min\{\sigma_{t, a_t}, 1\} | \cF_t, \cE^\mu_t] + 4\epsilon(m) + 2t^{-2},
\end{align*}
where $C_1, C_2$ are some positive absolute constants.
\end{lemma}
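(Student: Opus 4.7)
}
The plan is to follow the Agrawal--Goyal style regret-per-round argument, adapted to carry the neural-network approximation error $\epsilon(m)$ throughout. First, on the event $\cE^\mu_t$ (which is $\cF_t$-measurable by its definition) the set $[K]\setminus S_t$ is non-empty since $a^*_t\notin S_t$ by definition~\eqref{def:sat}, so we may set $\bar a_t \in \argmin_{k\notin S_t}\sigma_{t,k}$. The goal is to bound the per-round regret $h(\xb_{t,a^*_t})-h(\xb_{t,a_t})$ by a multiple of $\sigma_{t,\bar a_t}$ and $\sigma_{t,a_t}$, then convert $\sigma_{t,\bar a_t}$ into $\EE[\min\{\sigma_{t,a_t},1\}\,|\,\cF_t,\cE^\mu_t]$ using Lemma~\ref{lm:sat}.

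Second, I would work on the joint event $\cE^\sigma_t\cap\cE^\mu_t$. Combining \eqref{eq:esigma} and \eqref{eq:emu} with the triangle inequality gives, for every arm $k$,
\begin{align*}
|\tilde r_{t,k} - h(\xb_{t,k})| \;\le\; (1+c_t)\nu\sigma_{t,k} + \epsilon(m).
\end{align*}
Since $\bar a_t \notin S_t$, the saturation definition yields $h(\xb_{t,a^*_t})-h(\xb_{t,\bar a_t}) \le (1+c_t)\nu\sigma_{t,\bar a_t}+2\epsilon(m)$; using $\tilde r_{t,a_t}\ge \tilde r_{t,\bar a_t}$ by the greedy selection in Algorithm~\ref{alg:main}, line~\ref{op:pull}, I can telescope to obtain
\begin{align*}
h(\xb_{t,a^*_t}) - h(\xb_{t,a_t}) \;\le\; (1+c_t)\nu\bigl(2\sigma_{t,\bar a_t} + \sigma_{t,a_t}\bigr) + 4\epsilon(m).
\end{align*}
On the complementary event $\overline{\cE^\sigma_t}$ I would use the trivial bound $h(\xb_{t,a^*_t})-h(\xb_{t,a_t})\le 2$ from $|h|\le 1$; by Lemma~\ref{lm:boundvariance} this event has conditional probability at most $t^{-2}$, contributing the residual $2t^{-2}$ in the target bound.

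Third, I would take conditional expectations. The quantity $\sigma_{t,\bar a_t}$ is $\cF_t$-measurable, and Lemma~\ref{lm:sat} gives $\Pr(a_t\notin S_t\,|\,\cF_t,\cE^\mu_t)\ge (4e\sqrt\pi)^{-1}-t^{-2}$; since on $\{a_t\notin S_t\}$ we have $\sigma_{t,a_t}\ge\sigma_{t,\bar a_t}$ by the choice of $\bar a_t$, this yields
\begin{align*}
\sigma_{t,\bar a_t} \;\le\; \bigl((4e\sqrt\pi)^{-1}-t^{-2}\bigr)^{-1}\EE[\sigma_{t,a_t}\,|\,\cF_t,\cE^\mu_t].
\end{align*}
For small $t$ where this constant is unfavourable (or even negative), the regret bound $\le 2$ is absorbed into the absolute constant $C_2$ since there are only finitely many such rounds; for $t$ above an absolute threshold the constant is at most, say, $8e\sqrt\pi$. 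Finally, to pass from $\sigma_{t,a_t}$ to $\min\{\sigma_{t,a_t},1\}$ and introduce the $\sqrt L$ factor, I would invoke the standard NTK gradient-norm bound $\|\gb(\xb;\btheta_{t-1})\|_2 \le C\sqrt{mL}$ valid under Condition~\ref{cond:ntk}, which with $\Ub_{t-1}\succeq\lambda\Ib$ yields $\sigma_{t,k}\le C'\sqrt L$ uniformly; then a two-case split ($\sigma_{t,a_t}\le 1$ versus $\sigma_{t,a_t}>1$) gives $\sigma_{t,a_t}\le C''\sqrt L\min\{\sigma_{t,a_t},1\}$, and the same for $\sigma_{t,\bar a_t}$.

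The main obstacle is bookkeeping rather than ideas: ensuring that the high-probability $1-\delta$ event (inherited from Lemma~\ref{lm:bounddifference}) suffices for the bound to hold \emph{uniformly} in $t$, while the $1/t^2$ slack from $\cE^\sigma_t$ is handled round-by-round through the explicit additive term $2t^{-2}$ rather than by a union bound. A minor subtlety is that conditioning on the $\cF_t$-measurable event $\cE^\mu_t$ does not change the conditional probability of $\cE^\sigma_t$ given $\cF_t$, so Lemma~\ref{lm:boundvariance} can be applied verbatim inside the conditioning on $\cE^\mu_t$.
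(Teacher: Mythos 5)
Your proposal follows the paper's argument essentially step for step: the same unsaturated comparator $\bar a_t=\argmin_{k\notin S_t}\sigma_{t,k}$, the same use of $\cE^\sigma_t\cap\cE^\mu_t$ to get $|h(\xb_{t,k})-\tilde r_{t,k}|\le(1+c_t)\nu\sigma_{t,k}+\epsilon(m)$, the same telescoping through $\tilde r_{t,a_t}\ge\tilde r_{t,\bar a_t}$ giving $(1+c_t)\nu(2\sigma_{t,\bar a_t}+\sigma_{t,a_t})+4\epsilon(m)$, the trivial bound $2$ on the complement of $\cE^\sigma_t$ producing the $2t^{-2}$ term, Lemma~\ref{lm:sat} to convert $\sigma_{t,\bar a_t}$ into $\EE[\sigma_{t,a_t}\,|\,\cF_t,\cE^\mu_t]$, and the $\sigma_{t,k}\le C\sqrt L$ bound (Lemma~\ref{lm:bd1}) to pass to $\min\{\sigma_{t,a_t},1\}$.

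The one place where you diverge, and where there is a genuine gap, is the treatment of the rounds where $(4\mathrm e\sqrt\pi)^{-1}-t^{-2}\le 0$ and Lemma~\ref{lm:sat} yields nothing. You propose to ``absorb the regret bound $\le 2$ into the absolute constant $C_2$ since there are only finitely many such rounds.'' That does not work for this lemma: the statement is a per-round conditional bound, and the term $C_2(1+c_t)\nu\sqrt L\,\EE[\min\{\sigma_{t,a_t},1\}\,|\,\cF_t,\cE^\mu_t]$ has no a priori lower bound (nothing prevents $\EE[\min\{\sigma_{t,a_t},1\}\,|\,\cF_t,\cE^\mu_t]$ from being arbitrarily small), so no choice of absolute constant can dominate an additive $2$; and ``finitely many rounds'' would only help if you were allowed to push the slack into the cumulative regret, which happens later (Lemma~\ref{lm:martingle}), not inside this per-round statement. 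The paper closes this case with an extra ingredient you omit: it first caps the per-round regret by $2$ using $|h|\le 1$, writes the bound as $\min\{44\mathrm e\sqrt\pi(1+c_t)\nu\,\EE[\sigma_{t,a_t}\,|\,\cF_t,\cE^\mu_t],\,2\}+4\epsilon(m)+2t^{-2}$, and then invokes $\nu\ge B\ge 1/(22\mathrm e\sqrt\pi)$ — this is precisely why $B$ is defined with that floor in Theorem~\ref{thm:main} — so that $44\mathrm e\sqrt\pi(1+c_t)\nu\ge 2$ and the cap can be rewritten as $44\mathrm e\sqrt\pi(1+c_t)\nu\min\{\EE[\sigma_{t,a_t}\,|\,\cF_t,\cE^\mu_t],1\}$ before applying the $\sqrt L$ conversion. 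Adding this step (or otherwise supplying a lower bound on $\sigma_{t,a_t}$ in the early rounds) is needed to make your argument complete; the rest of your proposal matches the paper's proof.
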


Based on Lemma~\ref{lm:exprwd}, we define
$\bar\Delta_t := (h(\xb_{t, a^*_t}) - h(\xb_{t, a_t}))\ind(\cE^\mu_t)$, and
\vspace{-2mm}
\begin{align}
    &X_t := \bar \Delta_t - \big(C_{\Delta}(1 + c_t)\nu\sqrt{L}\min\{\sigma_{t, a_t}, 1\} + 4\epsilon(m) + 2t^{-2}\big),\quad Y_t = \sum_{i=1}^t X_i,\label{eq:martingle}
\vspace{-1mm}
\end{align}
where $C_{\Delta}$ is the same with constant $C$ in Lemma~\ref{lm:exprwd}. By Lemma~\ref{lm:exprwd}, we can verify that with probability at least $1 - \delta$, $\{Y_t\}$ forms a super martingale sequence since $\EE(Y_t - Y_{t-1}) = \EE X_t \le 0$. By Azuma-Hoeffding inequality \citep{hoeffding1963probability},  we can prove the following lemma. 
\begin{lemma}\label{lm:martingle}
Suppose the width of the neural network $m$ satisfies Condition~\ref{cond:ntk}. Then set $\eta = C_1(m\lambda +mLT)^{-1}$, we have, with probability at least $1 - \delta$, that
\begin{align*}
    \sum_{i=1}^T \bar \Delta_i &\le 4T\epsilon(m) + \pi^2 / 3 + C_2(1 + c_T)\nu\sqrt{L}\sum_{i=1}^T\min\{\sigma_{t, a_t}, 1\} \\
    &\quad +(4 + C_3(1 + c_T)\nu L + 4\epsilon(m))\sqrt{2\log(1 / \delta)T},
\end{align*}
where $C_1, C_2, C_3$ are some positive absolute constants.
\end{lemma}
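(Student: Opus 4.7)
\textbf{Proof proposal for Lemma \ref{lm:martingle}.} The plan is a two-event union-bound combined with a bounded-differences concentration inequality applied to the adjusted sequence $\{Y_t\}$ already defined in \eqref{eq:martingle}. First I would invoke Lemma~\ref{lm:exprwd}: on an event $\cE_\Delta$ of probability at least $1-\delta$, the conditional expectation $\EE[X_t \mid \cF_t, \cE_t^\mu]$ is nonpositive for every $t\in[T]$, since Lemma~\ref{lm:exprwd} upper-bounds $\EE[h(\xb_{t,a_t^*})-h(\xb_{t,a_t})\mid \cF_t,\cE_t^\mu]$ by exactly the quantity subtracted inside $X_t$, and the factor $\ind(\cE_t^\mu)$ inside $\bar\Delta_t$ makes the unconditional expectation consistent with this. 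Thus, on $\cE_\Delta$ the sequence $\{Y_t\}$ is a supermartingale adapted to $\{\cF_{t+1}\}$.

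Next I would obtain a uniform almost-sure bound on the increments $|Y_t-Y_{t-1}|=|X_t|$. Using $|h|\le 1$ gives $0\le \bar\Delta_t\le 2$; the truncation inside $X_t$ gives $\min\{\sigma_{t,a_t},1\}\le 1$; and $c_t\le c_T$, $2t^{-2}\le 2$. Combining, there is an absolute constant $C_3$ such that
\begin{align*}
|X_t| \le 4 + C_3(1+c_T)\nu\sqrt{L} + 4\epsilon(m),
\end{align*}
which is deterministic and uniform in $t$. (The $\sqrt{L}$ here can be absorbed into $L$ at the cost of enlarging $C_3$, matching the form stated in the lemma.) I would then apply the Azuma--Hoeffding inequality to $\{Y_t\}$ on the event $\cE_\Delta$: with probability at least $1-\delta$,
\begin{align*}
Y_T - Y_0 \le \bigl(4 + C_3(1+c_T)\nu L + 4\epsilon(m)\bigr)\sqrt{2T\log(1/\delta)}.
\end{align*}

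Rearranging the definition $Y_T = \sum_{i=1}^T X_i$ and using $\sum_{i=1}^T 2i^{-2}\le 2\cdot\pi^2/6 = \pi^2/3$, I would move the negative terms in $X_t$ to the right-hand side to obtain the stated bound on $\sum_{i=1}^T \bar\Delta_i$. Finally, a union bound over $\cE_\Delta$ and the Azuma--Hoeffding event gives the claimed $1-2\delta$ probability, which can be rescaled to $1-\delta$ by a constant adjustment absorbed into the definition of $\delta$ (or stated as is, since the analysis is only sensitive to $\log(1/\delta)$ factors).

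The main obstacle is the technical subtlety that $\{Y_t\}$ is only a supermartingale on the high-probability event $\cE_\Delta$ from Lemma~\ref{lm:exprwd}, rather than unconditionally. This is handled cleanly by working conditionally on $\cE_\Delta$ (or, equivalently, by introducing a stopped process that coincides with $Y_t$ on $\cE_\Delta$ and has nonpositive expected increments unconditionally), after which Azuma--Hoeffding applies directly because the increment bound on $|X_t|$ is deterministic. Beyond this, the proof is essentially bookkeeping: verifying the supermartingale property from Lemma~\ref{lm:exprwd}, confirming the deterministic increment bound, and assembling the pieces with the $\sum t^{-2}$ tail summation.
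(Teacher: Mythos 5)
Your proposal is correct and follows essentially the same route as the paper: use Lemma~\ref{lm:exprwd} to make $\{Y_t\}$ a supermartingale on a probability-$(1-\delta)$ event, bound the increments $|X_t|$ uniformly, apply Azuma--Hoeffding, and rearrange with $\sum_t 2t^{-2}\le \pi^2/3$. The only (harmless) difference is that you bound the increment via $\min\{\sigma_{t,a_t},1\}\le 1$ and $\sqrt{L}\le L$, whereas the paper invokes Lemma~\ref{lm:bd1} ($\sigma_{t,a_t}\le C\sqrt{L}$) and hence takes a three-way union bound; both yield the stated form.
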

The last lemma is used to control $\sum_{i=1}^T\min\{\sigma_{t, a_t}, 1\}$ in Lemma~\ref{lm:martingle}.
\begin{lemma}\label{lm:summation}
Suppose the width of the neural network $m$ satisfies Condition~\ref{cond:ntk}. Then set $\eta = C_1(m\lambda +mLT)^{-1}$, we have, with probability at least $1 - \delta$, it holds that
\begin{align*}
    \sum_{i=1}^T \min\{\sigma_{t, a_t}, 1\} \le \sqrt{2\lambda T(\tilde d\log(1 + TK) + 1)} + C_2T^{13/6}\sqrt{\log m}m^{-1/6}\lambda^{-2/3}L^{9/2},
\end{align*}
where $C_1, C_2$ are some positive absolute constants. 
\end{lemma}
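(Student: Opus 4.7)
The plan is to reduce this to a standard elliptic potential argument in the gradient-feature space, and then convert the resulting log-determinant into one involving the neural tangent kernel $\Hb$, whose effective dimension is $\tilde d$. Introduce the abbreviations $\hat\gb_t := \gb(\xb_{t,a_t};\btheta_0)/\sqrt{m}$, $\bar\gb_t := \gb(\xb_{t,a_t};\btheta_t)/\sqrt{m}$, and $\bar\gb_t' := \gb(\xb_{t,a_t};\btheta_{t-1})/\sqrt{m}$, so that lines~\ref{op:sigma} and~\ref{op:cov} of Algorithm~\ref{alg:main} read
\begin{align*}
\Ub_t = \lambda\Ib + \sum_{i=1}^{t}\bar\gb_i\bar\gb_i^\top,\qquad \sigma_{t,a_t}^2 = \lambda\,(\bar\gb_t')^\top \Ub_{t-1}^{-1}\bar\gb_t'.
\end{align*}
First I would invoke the NTK-based gradient perturbation lemmas that underlie Condition~\ref{cond:ntk} to show that, up to an additive error of order $\mathrm{poly}(T,L,\lambda^{-1})\sqrt{\log m}\,m^{-1/6}$, the three feature vectors $\bar\gb_t$, $\bar\gb_t'$, and $\hat\gb_t$ may be used interchangeably; in particular $\|\btheta_t-\btheta_0\|_2$ and $\|\btheta_{t-1}-\btheta_0\|_2$ are $\tilde\cO(\sqrt{T/\lambda})$, which is precisely the regime where the NTK theory of \citet{allen2018convergence,cao2019generalization} applies.

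Once the gradients are stabilized, I would apply the classical elliptic potential inequality to the sequence $\{\bar\gb_t\}$ driving $\{\Ub_t\}$,
\begin{align*}
\sum_{t=1}^T \min\bigl\{\bar\gb_t^\top \Ub_{t-1}^{-1}\bar\gb_t,\,1\bigr\} \le 2\log\det\bigl(\Ub_T/(\lambda\Ib)\bigr),
\end{align*}
and translate it into a bound on $\sum_t \min\{\sigma_{t,a_t}^2,1\}$ after paying the perturbation cost and using $\lambda\ge 1$ to swap $\min\{\cdot/\lambda,1\}$ for $\min\{\cdot,1\}/\lambda$. To turn the right-hand side into an NTK quantity, set $\Phi := [\hat\gb_1,\dots,\hat\gb_T]$ and use Sylvester's identity $\log\det(\Ib_p+\Phi\Phi^\top/\lambda) = \log\det(\Ib_T + \Phi^\top\Phi/\lambda)$; the NTK approximation then guarantees that $\Phi^\top\Phi$ is close to the principal submatrix $\Hb_S$ of $\Hb$ on the indices of the pulled arms, which in turn satisfies $\log\det(\Ib+\Hb_S/\lambda)\le \log\det(\Ib+\Hb/\lambda)=\tilde d\log(1+TK/\lambda)$ by Definition~\ref{def:eff-dim}. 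A final application of Cauchy-Schwarz,
\begin{align*}
\sum_{t=1}^T \min\{\sigma_{t,a_t},1\} \le \sqrt{T\sum_{t=1}^T \min\{\sigma_{t,a_t}^2,1\}},
\end{align*}
then yields the stated leading term $\sqrt{2\lambda T(\tilde d\log(1+TK)+1)}$, with the ``$+1$'' absorbing the harmless constants from the swap $\log(1+TK/\lambda)\mapsto\log(1+TK)$ when $\lambda=1+1/T$.

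The main obstacle will be packaging the three distinct perturbation contributions --- stability of $\gb$ along the gradient-descent trajectory, closeness of $\gb(\cdot;\btheta_0)/\sqrt m$ to the NTK feature map, and the Weyl-type inequality that converts spectral matrix perturbations into log-determinant perturbations --- into the single residual $C_2 T^{13/6}\sqrt{\log m}\,m^{-1/6}\lambda^{-2/3}L^{9/2}$. The log-det step is the delicate one, since a naive bound would multiply through by the matrix dimension $T$; I expect to avoid this by combining $\log\det(\Ab+\Eb) \le \log\det(\Ab) + \mathrm{tr}(\Ab^{-1}\Eb)$ with $\|\Ub_{t-1}^{-1}\|_{\mathrm{op}}\le \lambda^{-1}$ and the operator-norm bound on $\Phi^\top\Phi-\Hb_S$ from the NTK approximation, so that a single factor of $T$ enters multiplicatively against the per-round error $\mathrm{poly}(T,L,\lambda^{-1})m^{-1/6}$ before taking the square root in the Cauchy-Schwarz step. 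This is what gives the final $T^{13/6}$, $\lambda^{-2/3}$, and $L^{9/2}$ dependence appearing in the lemma.
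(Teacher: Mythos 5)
Your proposal is correct and follows essentially the same route as the paper: a gradient-perturbation step to trade the trajectory gradients for the initialization gradients (the source of the $T^{13/6}\sqrt{\log m}\,m^{-1/6}\lambda^{-2/3}L^{9/2}$ residual), the elliptic potential lemma, the Sylvester identity plus the first-order $\log\det$ perturbation bound to pass from the empirical Gram matrix to $\Hb$ and hence to $\tilde d$, and a final Cauchy--Schwarz step. The only cosmetic difference is ordering: the paper first replaces $\sigma_{t,a_t}$ by $\bar\sigma_{t,a_t}$ via Lemma~\ref{lm:bd-sigma} and runs the potential argument entirely on the initialization-gradient matrices $\bar\Ub_t$ (then invokes Lemma~\ref{lm:kernel-1}), whereas you run it on the trajectory features and convert the resulting $\log\det$ afterwards, which is equivalent after paying the same perturbation costs.
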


With all the above lemmas, we are ready to prove Theorem~\ref{thm:main}.
\begin{proof}[Proof of Theorem~\ref{thm:main}]
By Lemma~\ref{lm:bounddifference}, $\cE^{\mu}_t$ holds for all $t \in [T]$ with probability at least $1 - \delta$. Therefore, with probability at least $1 - \delta$, we have
\begin{align*}
    R_T &= \sum_{i=1}^T (h(\xb_{t, a^*_t}) - h(\xb_{t, a_t}))\ind(\cE^\mu_t)\notag \\
    &\le 4T\epsilon(m) + \frac{\pi^2}3 + \bar C_1(1 + c_T)\nu\sqrt{L}\sum_{i=1}^T\min\{\sigma_{t, a_t}, 1\} \\
    &\quad +(4 + \bar C_2(1 + c_T)\nu L + 4\epsilon(m))\sqrt{2\log(1 / \delta)T} \\
    &\le \bar C_1(1 + c_T)\nu\sqrt{L}\bigg(\sqrt{2\lambda T(\tilde d\log(1 + TK) + 1)} + \bar C_3T^{13/6}\sqrt{\log m}m^{-1/6}\lambda^{-2/3}L^{9/2}\bigg)\\
    &\quad + \frac{\pi^2}3 + 4T\epsilon(m) +  4\epsilon(m)\sqrt{2\log(1 / \delta)T} + (4 + \bar C_2(1 + c_T)\nu L)\sqrt{2\log(1 / \delta)T},\\
    &= \bar C_1(1 + c_T)\nu\sqrt{L}\bigg(\sqrt{2\lambda T(\tilde d\log(1 + TK) + 1)} + \bar C_3T^{13/6}\sqrt{\log m}m^{-1/6}\lambda^{-2/3}L^{9/2}\bigg)\\
    &\quad + \frac{\pi^2}3 + \epsilon_p(m)(4T + \sqrt{2\log(1 / \delta)T}) + (4 + \bar C_2(1 + c_T)\nu L)\sqrt{2\log(1 / \delta)T}\\
    &\quad +  C_{\epsilon,1}(1 - \eta m\lambda)^J\sqrt{TL / \lambda}(4T + \sqrt{2\log(1 / \delta)T}),
\end{align*}
where $\bar C_1,\bar C_2, \bar C_3$ are some positive absolute constants, the first inequality is due to Lemma~\ref{lm:martingle}, and the second inequality is due to Lemma~\ref{lm:summation}. The third equation is from \eqref{eq:eps}. By setting $\eta = \bar C_4(m\lambda +mLT)^{-1}$ and $J = (1 + LT / \lambda)(\log(24C_{\epsilon, 1}) + \log(T^3L\lambda^{-1}\log(1 / \delta))) / \bar C_4$, we have
\begin{align*}
    C_{\epsilon, 1}(1 - \eta m\lambda)^J\sqrt{TL / \lambda}(4T + \sqrt{2\log(1 / \delta)T}) \le \frac{1}{3},
\end{align*}
Then choosing $m$ such that
\begin{align*}
    &\bar C_1\bar C_3(1 + c_T)\nu T^{13 / 6}\sqrt{\log m}m^{-1/6}\lambda^{-2/3}L^5 \le \frac{1}{3},\quad \epsilon_p(m)(4T + \sqrt{2\log(1 / \delta) T}) \le \frac{1}{3}.
\end{align*}

$R_T$ can be further bounded by
\begin{align*}
    R_T \le \bar C_1(1 + c_T)\nu\sqrt{2\lambda L(\tilde d\log(1 + TK) + 1)T} + (4 + \bar C_2(1 + c_T)\nu L)\sqrt{2\log(1 /\delta) T} + 5.
\end{align*}
Taking union bound over Lemmas~\ref{lm:bounddifference},~\ref{lm:martingle} and~\ref{lm:summation}, the above inequality  holds with probability $1 - 3\delta$. By replacing $\delta$ with $\delta / 3$ and rearranging terms, we complete the proof.
\end{proof}

\section{Experiments}\label{sec:e}

This section gives an empirical evaluation of our algorithm in several public benchmark datasets, including \texttt{adult}, \texttt{covertype}, \texttt{magic telescope}, \texttt{mushroom} and \texttt{shuttle}, all from UCI~\citep{Dua:2019}, as well as \texttt{MNIST}~\citep{lecun2010mnist}. 
The algorithm is compared to several typical baselines: 
linear and kernelized Thompson Sampling~\citep{agrawal2013thompson,chowdhury2017kernelized},
linear and kernelized UCB~\citep{chu2011contextual,valko2013finite}, BootstrapNN~\citep{osband2016deep, riquelme2018deep}, and $\epsilon$-greedy for neural networks.
BootstrapNN trains multiple neural networks with subsampled data, and at each step pulls the greedy action based on a randomly selected network.  It has been proposed as a way to approximate Thompson Sampling~\citep{osband2015bootstrapped, osband2016deep}.

\subsection{Experiment setup}

To transform these classification problems into multi-armed bandits, we adapt the disjoint models~\citep{li2010contextual} to build a context feature vector for each arm:
given an input feature $\xb \in \RR^d$ of a $k$-class classification problem, we build the context feature vector with dimension $kd$ as: $\xb_1 =  \big(\xb ; \mathbf 0 ; \cdots ; \mathbf 0 \big), \xb_2 =  \big(\mathbf 0 ; \xb ; \cdots ; \mathbf 0 \big), \cdots, \xb_k = \big( \mathbf 0 ; \mathbf 0 ; \cdots ; \xb \big)$.
Then, the algorithm generates a set of predicted reward following Algorithm~\ref{alg:main} and pulls the greedy arm. For these classification problems, if the algorithm selects a correct class by pulling the corresponding arm, it will receive a reward as $1$, otherwise $0$. The cumulative regret over time horizon $T$ is measured by the total mistakes made by the algorithm. All experiments are repeated $8$ times with reshuffled data.

We set the time horizon of our algorithm to $10\,000$ for all data sets, except for \texttt{mushroom} which contains only $8\,124$ data.
In order to speed up training for the NeuralUCB and Neural Thompson Sampling,
we use the inverse of the diagonal elements of $\Ub$ as an approximation of $\Ub^{-1}$. Also, since calculating the kernel matrix is expensive, 
we stop training at $t = 1000$ and keep evaluating the performance for the rest of the time, similar to previous work~\citep{riquelme2018deep, zhou2019neural}. 
Due to space limit, we defer the results on \texttt{adult}, \texttt{covertype} and \texttt{magic telescope}, as well as further experiment details, to Appendix~\ref{app:e}. In this section, we only show the results on \texttt{mushroom}, \texttt{shuttle} and \texttt{MNIST}.

\subsection{Experiment I: Performance of Neural Thompson Sampling}

The experiment results of Neural Thompson Sampling and other benchmark algorithms are shown in Figure~\ref{fig}. A few observations are in place. 
First, Neural Thompson Sampling's performance is among the best in 6 datasets and is significantly better than all other baselines in 2 of them. 
Second, the function class used by an algorithm is important. Those with linear representations tend to perform worse due to the nonlinearity of rewards in the data.  
Third, Thompson Sampling is competitive with, and sometimes better than, other exploration strategies with the same function class, in particular when neural networks are used. 

\begin{figure*}[htbp!]
    \begin{center}
    \subfigure[MNIST]{
    \includegraphics[height=0.205\linewidth]{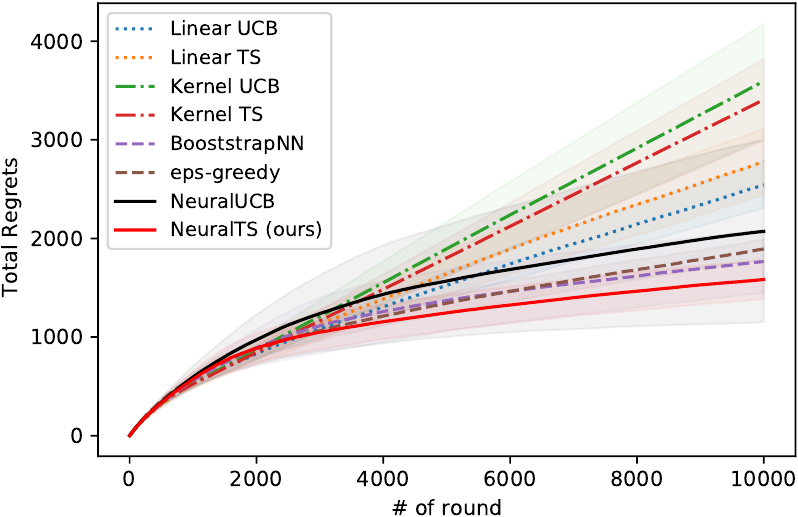}}
    \subfigure[Mushroom]{
    \includegraphics[height=0.205\linewidth]{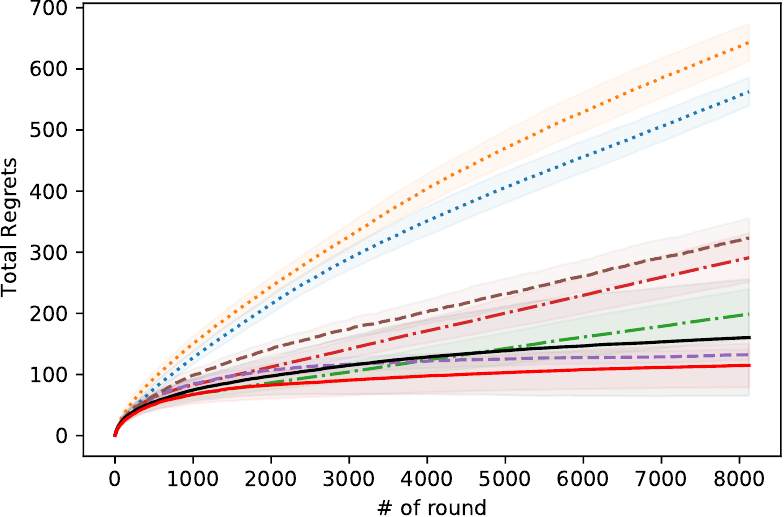}}
    \subfigure[Shuttle]{
    \includegraphics[height=0.205\linewidth]{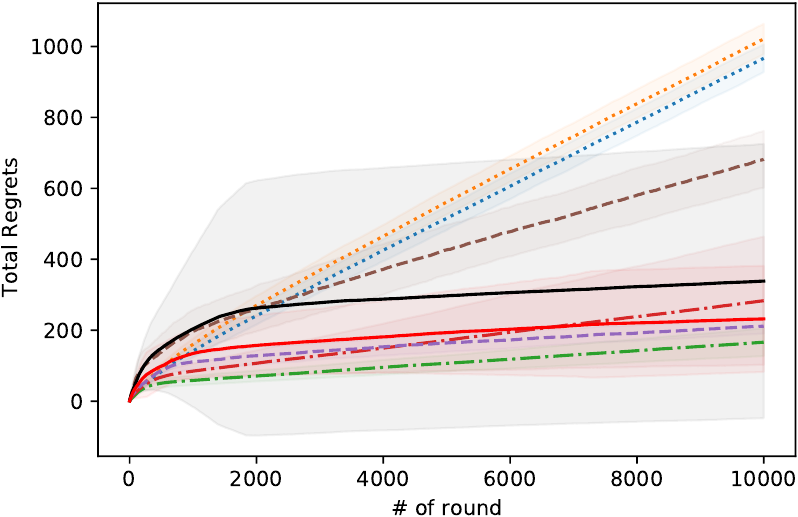}}
    \caption{Comparison of Neural Thompson Sampling and baselines on UCI datasets and MNIST dataset.  The total regret measures cumulative classification errors made by an algorithm.  Results are averaged over $8$ runs with standard errors shown as shaded areas.}\label{fig}
    \end{center}
\end{figure*}

\subsection{Experiment II: Robustness to Reward Delay}

This experiment is inspired by practical scenarios where reward signals are delayed, due to various constraints, as described by \citet{chapelle2011empirical}.  We study how robust the two most competitive methods from Experiment I, Neural UCB and Neural Thompson Sampling, are when rewards are delayed.
More specifically, the reward after taking an action is not revealed immediately, but arrive in batches when the algorithms will update their models. 
The experiment setup is otherwise identical to Experiment~I.
Here, we vary the batch size (i.e., the amount of reward delay), and Figure~\ref{fig:delay} shows the corresponding total regret. 
Clearly, we recover the result in Experiment I when the delay is $0$.
Consistent with previous findings~\citep{chapelle2011empirical}, Neural TS degrades much more gracefully than Neural UCB when the reward delay increases.
The benefit may be explained by the algorithm's randomized exploration nature that encourages exploration between batches. We, therefore, expect wider applicability of Neural TS in practical applications.

\begin{figure*}[htbp!]
    \begin{center}
    \subfigure[MNIST]{
    \includegraphics[height=0.205\linewidth]{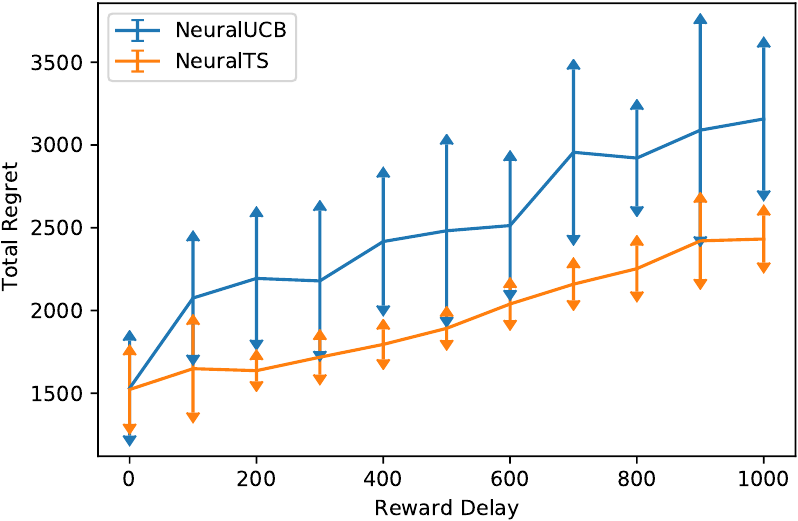}}
    \subfigure[Mushroom]{
    \includegraphics[height=0.205\linewidth]{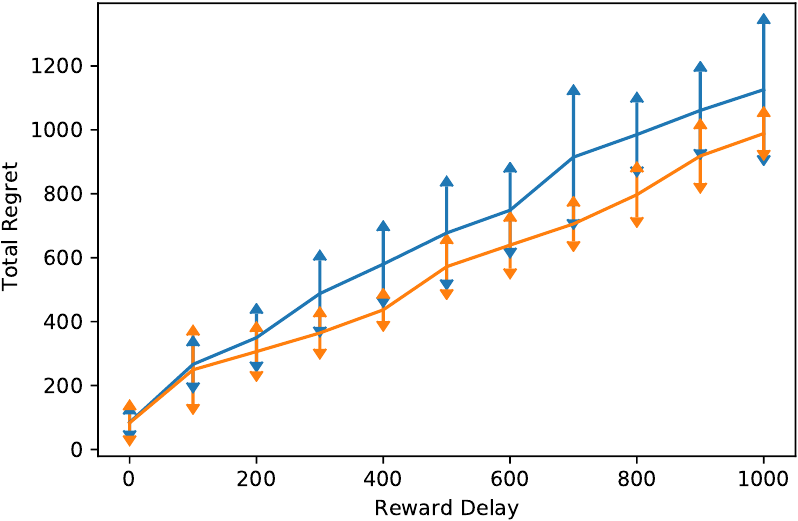}}
    \subfigure[Shuttle]{
    \includegraphics[height=0.205\linewidth]{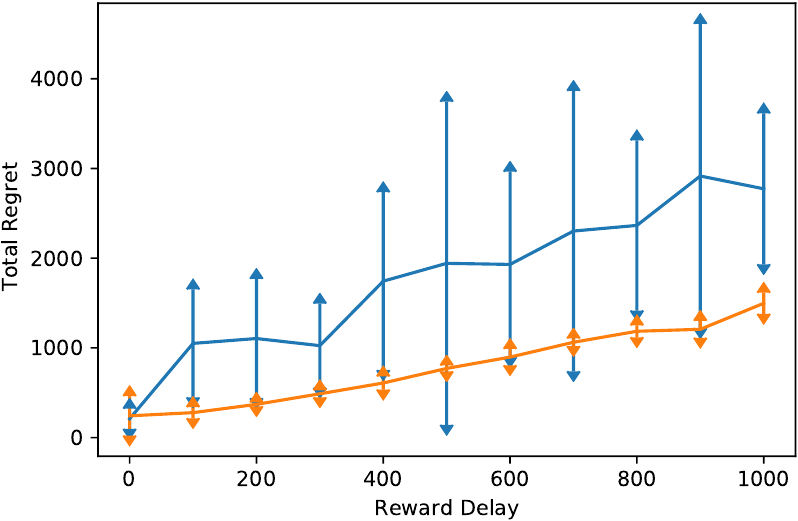}}
    \caption{Comparison of Neural Thompson Sampling and Neural UCB on UCI datasets and MNIST dataset under different scale of delay. The total regret measures cumulative classification errors made by an algorithm. Results are averaged over $8$ runs with standard errors shown as error bar.}\label{fig:delay}
    \end{center}
\end{figure*}

\section{Related Work}

Thompson Sampling was proposed as an exploration heuristic almost nine decades ago~\citep{thompson1933likelihood}, and has received significant interest in the last decade.  Previous works related to the present paper are discussed in the introduction, and are not repeated here.

Upper confidence bound or UCB~\citep{ agrawal1995sample,auer02finite,lai1985asymptotically} is a widely used alternative to Thompson Sampling for exploration.  This strategy is shown to achieve near-optimal regrets in a range of settings, such as linear bandits~\citep{abbasi2011improved,auer2002using,chu2011contextual}, generalized linear bandits \citep{filippi2010parametric,jun17scalable,li2017provably}, and kernelized contextual bandits~\citep{valko2013finite}.

Neural networks are increasingly used in contextual bandits.  In addition to those mentioned earlier~\citep{blundell15weight,kveton20randomized,lu17ensemble,riquelme2018deep}, 
\citet{zahavy2019deep} used a deep neural network to provide a feature mapping and explored only at the last layer. \citet{schwenk2000boosting} proposed an algorithm by boosting the estimation of multiple deep neural networks. 
While these methods all show promise empirically, no regret guarantees are known.
\CC{Recently, \citet{foster2020beyond} proposed a special regression oracle and randomized exploration for contextual bandits with a general function class (including neural networks) along with theoretical analysis. 
\citet{zhou2019neural} proposed a neural UCB algorithm with near-optimal regret based on UCB exploration, while this paper focuses on Thompson Sampling.}

\section{Conclusions}

In this paper, we adapt Thompson Sampling to neural networks. Building on recent advances in deep learning theory, we are able to show that the proposed algorithm, \algname, enjoys a $\tilde \cO(\tilde d T^{1/2})$ regret bound. We also show the algorithm works well empirically on benchmark problems, in comparison with multiple strong baselines.

The promising results suggest a few interesting directions for future research.  First, our analysis needs \algname to perform multiple gradient descent steps to train the neural network in each round. It is interesting to analyze the case where \algname only performs one gradient descent step in each round, and in particular, the trade-off between optimization precision and regret minimization.
Second, when the number of arms is finite, $\tilde \cO(\sqrt{dT})$ regret has been established for parametric bandits with linear and generalized linear reward functions.  It is an open problem how to adapt \algname to achieve the same rate.
Third, \citet{allen2019can} suggested that neural networks may behave differently from a neural tangent kernel under some parameter regimes.  It is interesting to investigate whether similar results hold for neural contextual bandit algorithms like \algname.

\subsubsection*{Acknowledgement}
We would like to thank the anonymous reviewers for their helpful comments.
WZ, DZ and QG are partially supported by the National Science Foundation CAREER Award 1906169 and IIS-1904183. The views and conclusions contained in this paper are those of the authors and should not be
interpreted as representing any funding agencies.

\bibliography{iclr2021_conference}
\bibliographystyle{iclr2021_conference}

\clearpage
\appendix
\section{Further Detail of the Experiments in Section~\ref{sec:e}}\label{app:e}

\subsection{Parameter Tuning}\label{sec:paratune}
In the experiments, we shuffle all datasets randomly, and normalize the features so that their $\ell_2$-norm is unity.
One-hidden-layer neural networks with $100$ neurons are used. \CC{Note that we do not choose $m$ as suggested by theory, and such a disconnection has its root in the current deep learning theory based on neural tangent kernel, which is not specific in this work.} During posterior updating, gradient descent is run for $100$ iterations with learning rate $0.001$.
For BootstrapNN, we use $10$ identical networks, and to train each network, data point at each round has probability $0.8$ to be included for training ($p=10, q = 0.8$ in the original paper~\citep{schwenk2000boosting})  For $\epsilon$-Greedy, we tune $\epsilon$ with a grid search on $\{0.01, 0.05, 0.1\}$.  For $(\lambda, \nu)$ used in linear and kernel UCB / Thompson Sampling, we set $\lambda=1$ following previous works~\citep{agrawal2013thompson, chowdhury2017kernelized}, and do a grid search of $\nu \in \{1, 0.1, 0.01\}$ to select the parameter with best performance. For the Neural UCB / Thompson Sampling methods, we use a grid search on $\lambda \in \{1, 10^{-1}, 10^{-2}, 10^{-3}\}$ and $\nu \in \{10^{-1}, 10^{-2}, 10^{-3}, 10^{-4}, 10^{-5}\}$.  All experiments are repeated \CC{$20$} times, and the average and standard error are reported.

\subsection{Detailed Results}
Table~\ref{tab:detail} summarizes the total regrets measured at the last round on different data sets, with mean and standard deviation error computed based on \CC{$20$} independent runs. The \textbf{Bold Faced} data is the top performance over $8$ experiments. Table~\ref{tab:detail2} shows the number of times the algorithm in that row significantly outperforms, ties, or significantly underperforms, compared with other algorithm with $t$-test at 90\% significance level. Figure~\ref{fig1} shows the performance of Neural Thompson Sampling compared with other baseline method. Figure~\ref{fig1:delay} shows the comparison between Neural Thompson Sampling and Neural UCB in delay reward settings.

\begin{table}[hbtp!]
\caption{Total regrets get at the last step with standard deviation attached}\label{tab:detail}
\centering
\begin{tabular}{P{0.15\linewidth}P{0.08\linewidth}P{0.11\linewidth}P{0.08\linewidth}P{0.10\linewidth}P{0.10\linewidth}P{0.08\linewidth}}
\toprule
\myrow{}{Adult}{Covertype}{Magic\tablefootnote{Magic is short for data set MagicTelescope}}{MNIST}{Mushroom}{Shuttle}
\midrule
\myrow{Round\#}{$10\,000$}{$10\,000$}{$10\,000$}{$10\,000$}{$8\,124$}{$10\,000$}
\myrow{Input Dim\tablefootnote{Using disjoint encoding thus is \texttt{NumofClass} $\times$ \texttt{NumofFeatures}}}{$2 \times 15$}{$2 \times 55$}{$2 \times 12$}{$10 \times 784$}{$2 \times 23$}{$7 \times 9$}
\myrow{Random\tablefootnote{Random pulling an arm at each round}}{$5000$}{$5000$}{$5000$}{$9000$}{$4062$}{$8571$}
\midrule
\myrow{Linear UCB}{$2097.5$ $\pm50.3$}{$3222.7$ $\pm67.2$}{$2604.4$ $\pm34.6$}{$2544.0$ $\pm235.4$}{$562.7$ $\pm23.1$}{$966.6$ $\pm39.0$}
\myrow{Linear TS}{$2154.7$ $\pm40.5$}{$4297.3$ $\pm328.7$}{$2700.5$ $\pm46.7$}{$2781.4$ $\pm338.3$}{$643.3$ $\pm30.4$}{$1020.9$ $\pm42.8$}
\myrow{Kernel UCB}{$2080.1$ $\pm44.8$}{$3546.2$ $\pm175.7$}{$2406.5$ $\pm79.4$}{$3595.8$ $\pm580.1$}{$199.0$ $\pm41.0$}{$\mathbf{166.5}$ $\mathbf{\pm39.4}$}
\myrow{Kernel TS}{$2111.5$ $\pm87.4$}{$3659.9$ $\pm113.8$}{$2442.6$ $\pm64.5$}{$3406.0$ $\pm411.7$}{$291.2$ $\pm40.0$}{$283.3$ $\pm180.5$}
\myrow{BooststrapNN}{$2097.3$ $\pm39.3$}{$3067.0$ $\pm56.1$}{$2269.4$ $\pm27.9$}{$1765.6$ $\pm321.1$}{$132.3$ $\pm8.6$}{$211.7$ $\pm20.9$}
\myrow{eps-greedy}{$2328.5$ $\pm50.4$}{$3334.2$ $\pm72.6$}{$2381.8$ $\pm37.3$}{$1893.2$ $\pm93.7$}{$323.2$ $\pm32.5$}{$682.0$ $\pm79.8$}
\myrow{NeuralUCB}{$2061.8$ $\pm42.8$}{$3012.1$ $\pm87.0$}{$\mathbf{2033.0}$ $\mathbf{\pm48.6}$}{$2071.6$ $\pm922.2$}{$160.4$ $\pm95.3$}{$338.6$ $\pm386.4$}
\myrow{NeuralTS (ours)}{$\mathbf{2092.5}$ $\mathbf{\pm48.0}$}{$\mathbf{2999.1}$ $\mathbf{\pm74.3}$}{$2037.4$ $\pm61.3$}{$\mathbf{1583.4}$ $\mathbf{\pm198.5}$}{$\mathbf{115.0}$ $\mathbf{\pm35.8}$}{$232.0$ $\pm149.5$}

\bottomrule
\end{tabular}
\end{table}

\begin{table}[hbtp!]
\caption{Performance on total regret comparing with other methods on all datasets. Tuple $(w/t/l)$ indicates the times of the algorithm at that row $w$ins, $t$ies with or $l$oses, compared to all other $7$ algorithms with $t$-test at 90\% significant level.}\label{tab:detail2}
\centering
\begin{tabular}{P{0.15\linewidth}P{0.08\linewidth}P{0.11\linewidth}P{0.08\linewidth}P{0.10\linewidth}P{0.10\linewidth}P{0.08\linewidth}}
\toprule
\myrow{}{Adult}{Covertype}{Magic\tablefootnote{Magic is short for data set MagicTelescope}}{MNIST}{Mushroom}{Shuttle}
\midrule
\myrow{Linear UCB}{2/3/2}{4/0/3}{1/0/6}{2/2/3}{1/0/6}{1/0/6}
\myrow{Linear TS}{1/0/6}{0/0/7}{0/0/7}{2/1/4}{0/0/7}{0/0/7}
\myrow{Kernel UCB}{4/3/0}{2/0/5}{3/1/3}{0/0/7}{4/0/3}{7/0/0}
\myrow{Kernel TS}{2/3/2}{1/0/6}{2/0/5}{1/0/6}{3/0/4}{3/3/1}
\myrow{BooststrapNN}{2/4/1}{5/0/2}{5/0/2}{4/3/0}{5/2/0}{3/3/1}
\myrow{eps-greedy}{0/0/7}{3/0/4}{3/1/3}{4/2/1}{2/0/5}{2/0/5}
\myrow{NeuralUCB}{6/1/0}{6/1/0}{6/1/0}{3/3/1}{5/1/1}{3/3/1}
\myrow{NeuralTS (ours)}{2/4/1}{6/1/0}{6/1/0}{6/1/0}{6/1/0}{3/3/1}
\bottomrule
\end{tabular}
\end{table}
\begin{figure*}[tbp!]
    \begin{center}
    \subfigure[Adult]{
    \includegraphics[height=0.205\linewidth]{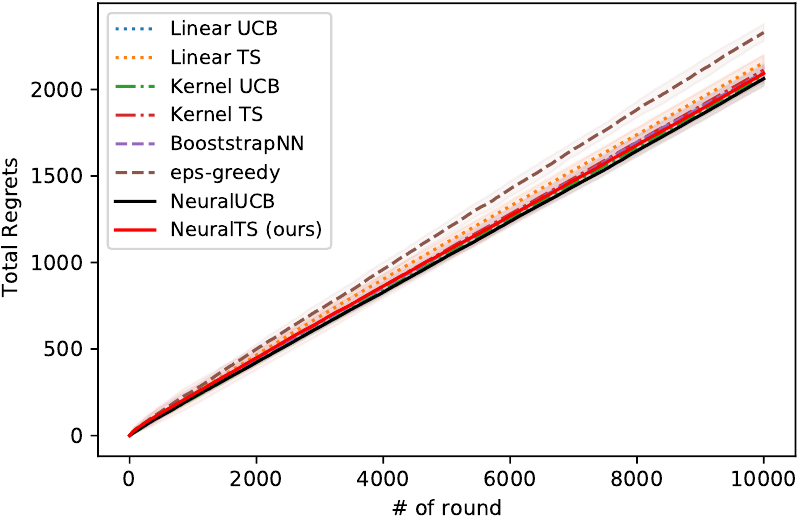}}
    \subfigure[Covertype]{
    \includegraphics[height=0.205\linewidth]{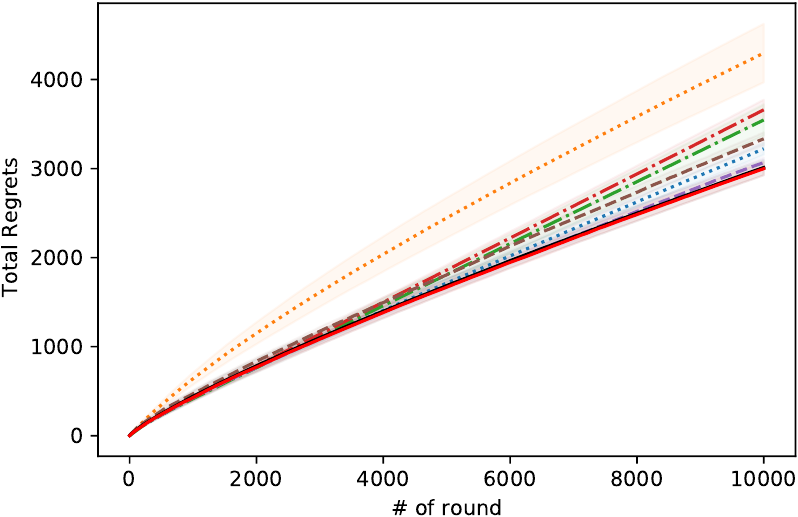}}
    \subfigure[Magic Telescope]{
    \includegraphics[height=0.205\linewidth]{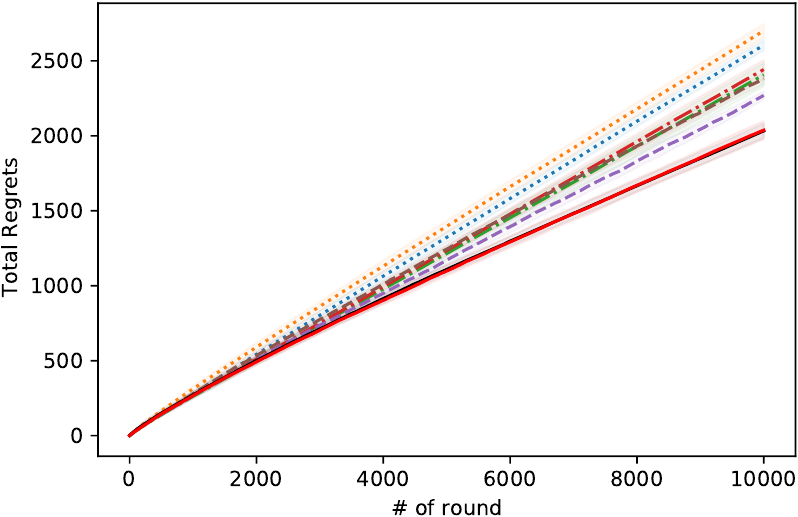}}
    \caption{Comparison of Neural Thompson Sampling and baselines on UCI datasets and MNIST dataset.  The total regret measures cumulative classification errors made by an algorithm.  Results are averaged over multiple runs with standard errors shown as shaded areas.}\label{fig1}
    \end{center}
\end{figure*}
\begin{figure*}[tbp!]
    \begin{center}
    \subfigure[Adult]{
    \includegraphics[height=0.205\linewidth]{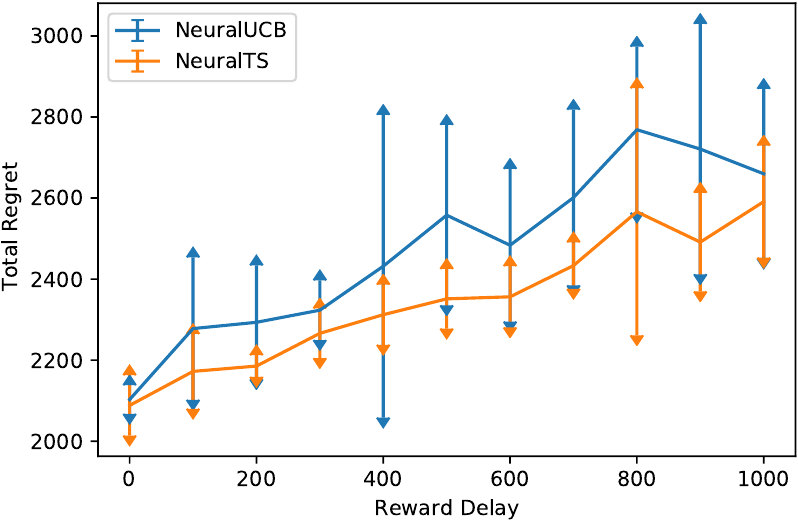}}
    \subfigure[Covertype]{
    \includegraphics[height=0.205\linewidth]{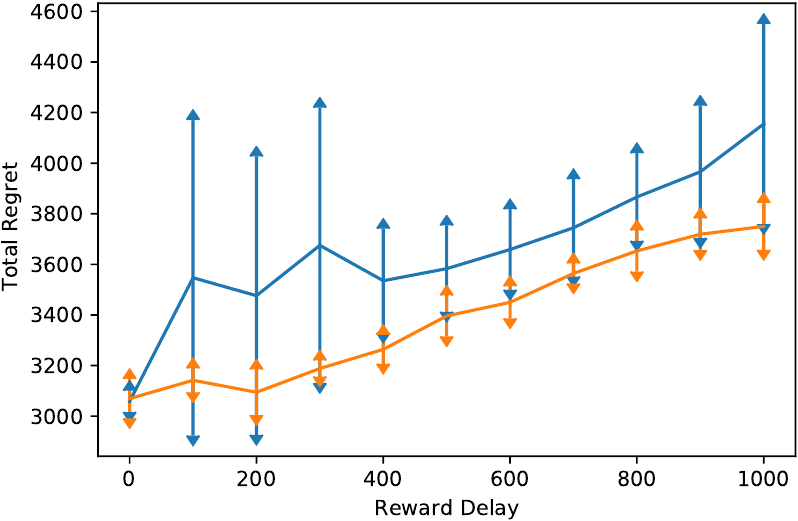}}
    \subfigure[Magic Telescope]{
    \includegraphics[height=0.205\linewidth]{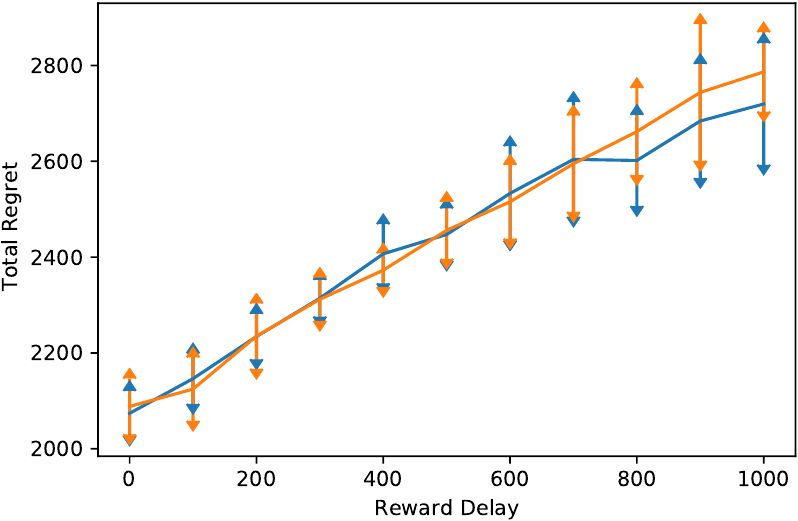}}
    \caption{Comparison of Neural Thompson Sampling and Neural UCB on UCI datasets and MNIST dataset under different scale of delay. The total regret measures cumulative classification errors made by an algorithm. Results are averaged over multiple runs with standard errors shown as error bar.}\label{fig1:delay}
    \end{center}
\end{figure*}

\subsection{Run time analysis}

\CC{We compare the run time of the four algorithms based on neural networks: BootstrapNN, $\epsilon$-greedy for neural networks, NeuralUCB, and NeuralTS. The comparison is shown in Figure \ref{fig1:time}. We can see that NeuralTS and NeuralUCB are about 2 to 3 times slower than $\epsilon$-greedy, which is due to the extra calculation of the neural network gradient for each input context.  BootstrapNN is often more than 5 times slower than $\epsilon$-greedy because it has to train several neural networks at each round.  
}
\begin{figure*}[tbp!]
    \begin{center}
    \subfigure[Adult]{
    \includegraphics[height=0.19\linewidth]{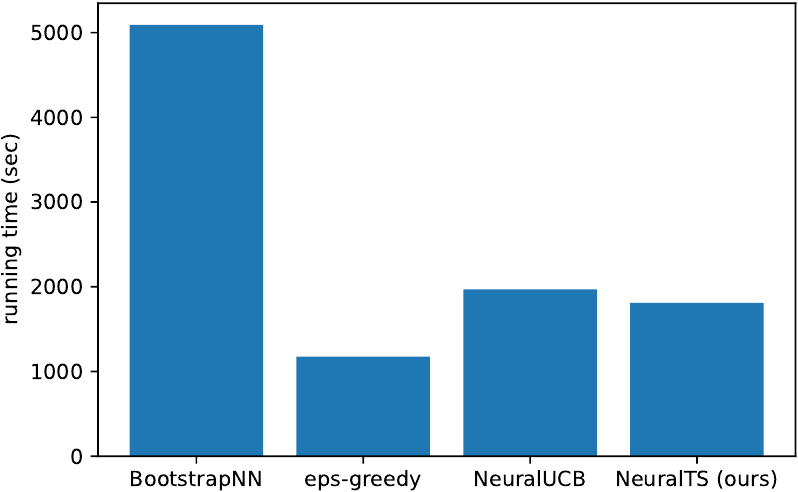}}
    \subfigure[Covertype]{
    \includegraphics[height=0.19\linewidth]{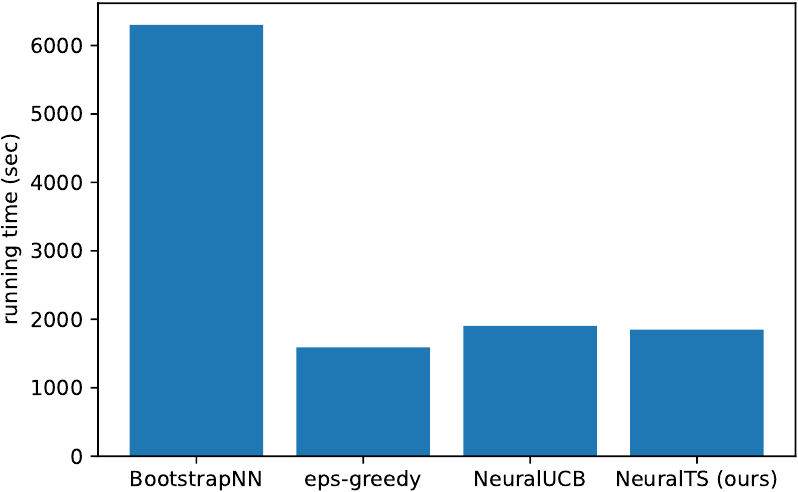}}
    \subfigure[Magic Telescope]{
    \includegraphics[height=0.19\linewidth]{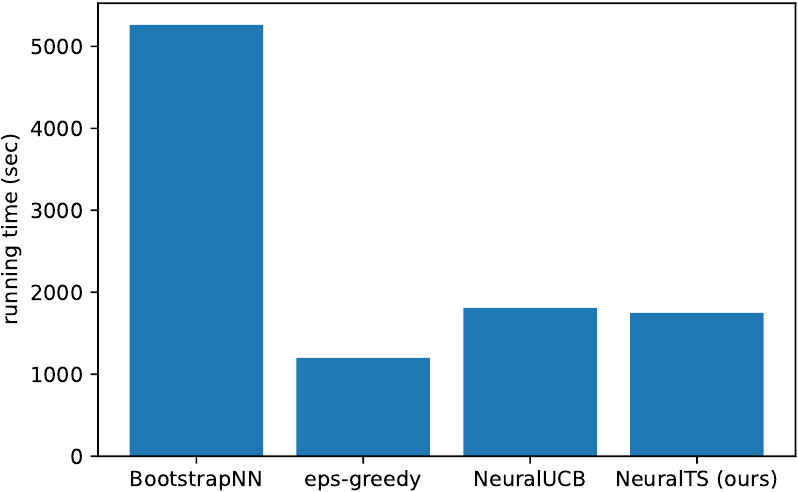}}
    \subfigure[MNIST]{
    \includegraphics[height=0.19\linewidth]{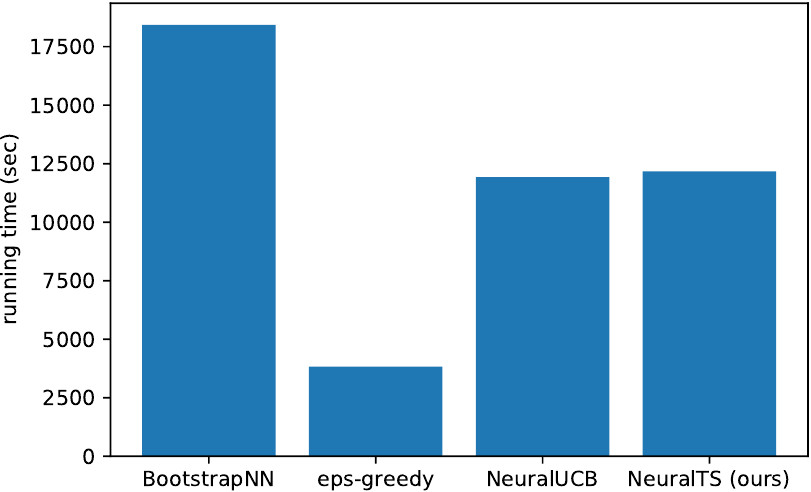}}
    \subfigure[mushroom]{
    \includegraphics[height=0.19\linewidth]{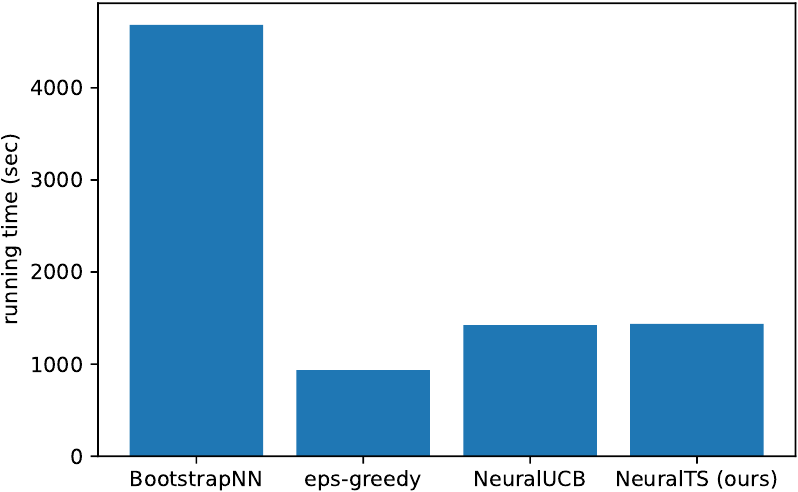}}
    \subfigure[shuttle]{
    \includegraphics[height=0.19\linewidth]{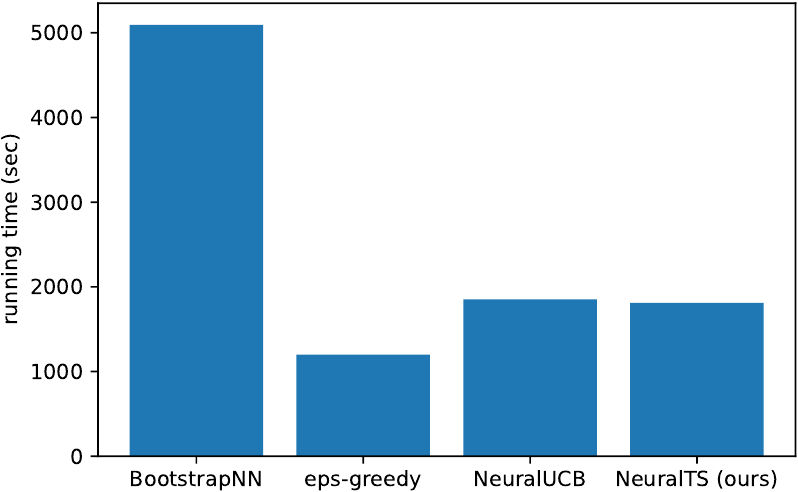}}
    \caption{Comparison of the running time for Neural TS, Neural UCB and $\epsilon$-greedy for neural networks on UCI datasets and MNIST dataset.}\label{fig1:time}
    \end{center}
\end{figure*}
\section{Proof of Lemmas in Section~\ref{sec:proof}}\label{app:a}

Under Condition \ref{cond:ntk}, we can show that the following inequalities hold.
\begin{align*}
    &2\sqrt{1/\lambda} \geq C_{m,1}m^{-1}L^{-3/2}[\log(TKL^2/\delta)]^{3/2},\notag \\
    &2\sqrt{T/\lambda} \leq C_{m,2}\min\big\{ m^{1/2}L^{-6}[\log m]^{-3/2},m^{7/8}\big((\lambda\eta)^2L^{-6}T^{-1}(\log m)^{-1}\big)^{3/8} \big\},\notag \\
    &m^{1/6}\geq  C_{m, 3}\sqrt{\log m}L^{7/2}T^{7/6}\lambda^{-7/6}(1+\sqrt{T/\lambda})\\
    &m \ge C_{m, 4}T^6K^6L^6\log(TKL/\delta)\max\{\lambda_0^{-4}, 1\},
\end{align*}
where $\{C_{m, 1},C_{m, 2},\ldots,C_{m, 4}\}$ are some positive absolute constants.

\subsection{Proof of Lemma~\ref{lm:boundvariance}}

The following concentration bound on Gaussian distributions will be useful in our proof.  \begin{lemma}[\citet{hoffman2013exploiting}]\label{lm:gaussian_cent}
Consider a normally distributed random variable $X \sim \cN (\mu, \sigma^2)$ and $\beta \ge 0$. The probability that $X$ is within a radius of $\beta\sigma$ from its mean can then be written as
\begin{align*}
    \Pr\big(|X - \mu| \le \beta \sigma\big) \ge 1 - \exp(-\beta^2 / 2).
\end{align*}
\end{lemma}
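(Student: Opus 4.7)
The plan is to reduce to a one-dimensional bound on the standard normal and then close that bound by a short calculus argument. First I would set $Z := (X - \mu)/\sigma$, so that $Z \sim \cN(0,1)$ and the claim becomes the purely scalar inequality $\Pr(|Z| \le \beta) \ge 1 - e^{-\beta^2/2}$ for all $\beta \ge 0$, equivalently the two-sided tail bound $\Pr(|Z| > \beta) \le e^{-\beta^2/2}$. This standardization is immediate because $|X-\mu| \le \beta\sigma$ iff $|Z| \le \beta$, and it eliminates the parameters $\mu,\sigma$ from the problem.

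The main step is to prove the scalar inequality on all of $[0,\infty)$. A Chernoff/MGF bound only yields $\Pr(|Z|>\beta) \le 2e^{-\beta^2/2}$, and a Mills-ratio-type bound $\Pr(Z>\beta) \le (\beta\sqrt{2\pi})^{-1}e^{-\beta^2/2}$ gives the right constant only for $\beta \ge \sqrt{2/\pi}$, so neither estimate suffices uniformly. Instead I would work directly with the function
$$g(\beta) := \bigl(1 - e^{-\beta^2/2}\bigr) - \Pr(|Z| \le \beta) = \bigl(1 - e^{-\beta^2/2}\bigr) - \bigl(2\Phi(\beta) - 1\bigr),$$
and show $g(\beta) \le 0$ on $[0,\infty)$, which is exactly the desired inequality rearranged.

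Differentiating, and using $\Phi'(\beta) = (2\pi)^{-1/2} e^{-\beta^2/2}$, I get
$$g'(\beta) = \beta\, e^{-\beta^2/2} - \sqrt{\tfrac{2}{\pi}}\, e^{-\beta^2/2} = e^{-\beta^2/2}\bigl(\beta - \sqrt{2/\pi}\bigr),$$
so $g$ decreases on $[0,\sqrt{2/\pi}]$ and increases on $[\sqrt{2/\pi},\infty)$, with a unique interior minimum at $\beta=\sqrt{2/\pi}$. The two boundary values are $g(0) = 0$ and $\lim_{\beta\to\infty}g(\beta) = 1 - 0 - 1 = 0$. Since $g$ vanishes at both endpoints and has a single valley in between, $g(\beta) \le 0$ for every $\beta \ge 0$, which is the claim.

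I do not expect a serious obstacle here: the only subtlety is that the standard off-the-shelf Gaussian tail inequalities are each a constant factor loose in the relevant range, so one has to do the elementary shape analysis of $g$ to pick up the sharp constant $1$ in front of $e^{-\beta^2/2}$. The two endpoint evaluations (at $\beta=0$ and $\beta\to\infty$), together with the sign of $g'$, are what pins the inequality down with equality exactly at the endpoints.
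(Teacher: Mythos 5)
Your argument is correct and complete. The paper itself gives no proof of this lemma --- it is simply imported as a cited result from Hoffman et al.\ --- so your write-up supplies a derivation the paper omits. The standardization step is immediate, and the key scalar inequality $\Pr(|Z|>\beta)\le e^{-\beta^2/2}$ is exactly the sharp (factor-$1$ rather than factor-$2$) sub-Gaussian tail bound that the lemma needs. Your shape analysis of $g(\beta)=\bigl(1-e^{-\beta^2/2}\bigr)-\bigl(2\Phi(\beta)-1\bigr)$ checks out: $g'(\beta)=e^{-\beta^2/2}\bigl(\beta-\sqrt{2/\pi}\bigr)$, so $g$ has a single interior minimum at $\beta=\sqrt{2/\pi}$, and since $g(0)=0$ and $g(\beta)\to 0$ as $\beta\to\infty$, the function is nonpositive throughout, which is the claim. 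You are also right that the off-the-shelf bounds do not close the gap uniformly: the Chernoff/MGF route loses a factor of $2$, and the Mills-ratio bound only dominates for $\beta\ge\sqrt{2/\pi}$, so the elementary monotonicity argument (or an equivalent comparison of densities) is genuinely needed to get the constant $1$. No gaps.
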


\begin{proof}[Proof of Lemma~\ref{lm:boundvariance}]
Since the estimated reward $\tilde r_{t,k}$ is sampled from $\cN(f(\xb_{t, k}; \btheta_{t-1}), \nu^2\sigma^2_{t, k})$ if given filtration $\cF_t$, 
Lemma~\ref{lm:gaussian_cent} implies that, conditioned on $\cF_t$ and given $t, k$,
\begin{align*}
    \Pr\big(|\tilde r_{t, k} - f(\xb_{t, k}; \btheta_{t-1})| \le c_t \nu\sigma_{t, k}\big| \cF_t\big) \ge 1 - \exp(-c_t^2 / 2).
\end{align*}
Taking a union bound over $K$ arms, we have that for any $t$
\begin{align*}
    \Pr\big(\forall k, |\tilde r_{t, k} - f(\xb_{t, k}; \btheta_{t-1})|\le c_t \nu\sigma_{t, k}\big| \cF_t\big) \ge 1 - K\exp(-c_t^2 / 2).
\end{align*}
Finally, choose $c_t = \sqrt{4\log t + 2\log K}$ as defined in ~\eqref{eq:esigma}, we get the bound that 
\begin{align*}
    \Pr\big(\cE^\sigma_t\big|\cF_t\big) = \Pr\big(\forall k, |\tilde r_{t, k} - f(\xb_{t, k}; \btheta_{t-1})|\le c_t \nu\sigma_{t, k}\big| \cF_t\big) \ge 1 - \frac1{t^2}.
\end{align*}
\end{proof}
\subsection{Proof of Lemma~\ref{lm:bounddifference}}

Before going into the proof, some notation is needed about linear and kernelized models.
\begin{definition}\label{def:linear}
Define $\bar{\Ub}_t = \lambda\Ib + \sum_{i=1}^t \gb(\xb_{i,a_i}; \btheta_0)\gb(\xb_{t,a_t}; \btheta_0)^\top / m$ and based on $\bar \Ub_t$, we further define $\bar{\sigma}^2_{t,k} = \lambda\gb^\top(\xb_{t, k}; \btheta_0)\bar\Ub_{t-1}^{-1}\gb(\xb_{t, k}; \btheta_0) / m$.
Furthermore, for convenience we define
\begin{align*}
    &\Jb_t = \begin{pmatrix} \gb(\xb_{1, a_1}; \btheta_t) & \cdots & \gb(\xb_{t, a_t}; \btheta_t)\end{pmatrix},
    \\
    &\bar \Jb_t = \begin{pmatrix} \gb(\xb_{1, a_1}; \btheta_0) & \cdots & \gb(\xb_{t, a_t}; \btheta_0)\end{pmatrix} \\
    &\hb_t = \begin{pmatrix} h(\xb_{1, a_1}) & \cdots & h(\xb_{t, a_t})\end{pmatrix}^\top, \\
    &\rb_t = \begin{pmatrix} r_1 &  \cdots & r_t \end{pmatrix}^\top,\\ &\bepsilon_t = \begin{pmatrix} h(\xb_{1, a_1}) - r_1 & \cdots & h(\xb_{t, a_t}) - r_t\end{pmatrix}^\top,
\end{align*}
where $\bepsilon_t$ is the reward noise. We can verify that $\Ub_t = \lambda \Ib + \Jb_t\Jb_t^\top / m$, $\bar \Ub_t = \lambda \Ib + \bar \Jb_t\bar \Jb_t^\top / m$
. We further define $\Kb_t = \bar \Jb_t^\top \bar \Jb_t / m$.
\end{definition}

The first lemma shows that the target function is well-approximated by the linearized neural network if the network width $m$ is large enough.
\begin{lemma}[Lemma 5.1,~\citet{zhou2019neural}]\label{lm:equal}
There exists some constant $C>0$ such that for any $\delta \in (0,1)$, if 
\begin{align*}
    m \ge CT^4K^4L^6\log(T^2K^2L/\delta)/\lambda_0^4,
\end{align*}
then with probability at least $1-\delta$ over the random initialization of $\btheta_0$, there exists a $\btheta^* \in \RR^p$ such that
\begin{align}
    h(\xb^i) = \la \gb(\xb^i; \btheta_0), \btheta^* - \btheta_0\ra,
    \quad
    \sqrt{m}\|\btheta^* - \btheta_0\|_2 \leq \sqrt{2\hb^\top\Hb^{-1}\hb} \le B, \label{eq:equal_0}
\end{align}
for all $i \in [TK]$, where $B$ is defined in Theorem~\ref{thm:main}. 
\end{lemma}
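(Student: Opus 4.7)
\textbf{Proof plan for Lemma~\ref{lm:equal}.} The target is to exhibit a parameter vector $\btheta^*$ at which the linearization of the neural network around $\btheta_0$ exactly interpolates the values $h(\xb^i)$ on the observed contexts, while keeping $\sqrt m\|\btheta^*-\btheta_0\|_2$ controlled by $\sqrt{2\hb^\top\Hb^{-1}\hb}$. My approach is to construct $\btheta^*-\btheta_0$ as the minimum-norm solution of the linear system $\bar\Jb_{TK}^\top \btheta = \hb$ in the random feature basis $\{\gb(\xb^i;\btheta_0)/\sqrt m\}$, and then use NTK concentration to compare the empirical Gram matrix to $\Hb$.

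\textbf{Step 1: NTK concentration of the initial feature Gram.} Let $\bar\Jb \in \RR^{p\times TK}$ be the matrix whose $i$-th column is $\gb(\xb^i;\btheta_0)$, and set $\Gb = \bar\Jb^\top\bar\Jb/m$. I would invoke the standard NTK concentration result (e.g., Arora et al.\ 2019, Lemma F.1; or Cao--Gu) which says that for any $\delta\in(0,1)$, as long as $m\ge C_0 T^4K^4L^6\log(T^2K^2L/\delta)/\lambda_0^4$, with probability at least $1-\delta$ over the random initialization it holds that $\|\Gb-\Hb\|_2\le \lambda_0/2$. Combining with Assumption~\ref{assumption:input} ($\Hb\succeq \lambda_0\Ib$) this gives $\Gb\succeq (\lambda_0/2)\Ib$ and, more usefully, $\Gb\succeq \Hb/2$ in Loewner order (since $\Hb-2\Gb \preceq \Hb - 2(\Hb-(\lambda_0/2)\Ib) = -\Hb+\lambda_0\Ib\preceq 0$).

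\textbf{Step 2: Construct $\btheta^*$ and check interpolation.} Define
\begin{equation*}
\btheta^* - \btheta_0 \;:=\; \frac{1}{m}\,\bar\Jb\,\Gb^{-1}\hb,
\end{equation*}
which is well-defined by Step~1. For each $i\in[TK]$,
\begin{equation*}
\la \gb(\xb^i;\btheta_0),\,\btheta^*-\btheta_0\ra
= \frac{1}{m}\,[\bar\Jb^\top\bar\Jb\,\Gb^{-1}\hb]_i
= [\Gb\,\Gb^{-1}\hb]_i = h(\xb^i),
\end{equation*}
so the interpolation identity in \eqref{eq:equal_0} holds on the nose.

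\textbf{Step 3: Norm bound.} A direct computation gives
\begin{equation*}
\|\btheta^*-\btheta_0\|_2^2
= \frac{1}{m^2}\,\hb^\top\Gb^{-1}\,(\bar\Jb^\top\bar\Jb)\,\Gb^{-1}\hb
= \frac{1}{m}\,\hb^\top\Gb^{-1}\hb.
\end{equation*}
By Step~1, $\Gb^{-1}\preceq 2\Hb^{-1}$, hence $\hb^\top\Gb^{-1}\hb \le 2\hb^\top\Hb^{-1}\hb$, yielding $\sqrt m\,\|\btheta^*-\btheta_0\|_2\le \sqrt{2\hb^\top\Hb^{-1}\hb}$. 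The final inequality $\sqrt{2\hb^\top\Hb^{-1}\hb}\le B$ is immediate from the definition $B=\max\{1/(22\mathrm e\sqrt\pi),\sqrt{2\hb^\top\Hb^{-1}\hb}\}$ given in Theorem~\ref{thm:main}.

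\textbf{Main obstacle.} The only non-routine ingredient is Step~1, the quantitative NTK concentration $\|\Gb-\Hb\|_2\le\lambda_0/2$ with the stated width requirement $m\gtrsim T^4K^4L^6\log(T^2K^2L/\delta)/\lambda_0^4$. This is where the polynomial dependence on $T,K,L$ and $\lambda_0^{-1}$ originates: one typically proves an entrywise concentration $|\Gb_{ij}-\Hb_{ij}|\lesssim L\sqrt{\log(T^2K^2L/\delta)/m}$ for a single pair via a layerwise induction through the ReLU activations (using sub-Gaussian concentration of the hidden activations at each layer under Gaussian initialization and the particular symmetric initialization of Algorithm~\ref{alg:main}), then union-bound over the $(TK)^2$ pairs and convert to spectral-norm control via $\|\Gb-\Hb\|_2\le TK\max_{i,j}|\Gb_{ij}-\Hb_{ij}|$. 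Choosing $m$ large enough to make this at most $\lambda_0/2$ gives exactly the width bound in the statement. I would cite the existing version of this result rather than redo the layerwise induction; the rest of the argument is the two-line linear-algebraic construction above.
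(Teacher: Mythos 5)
Your construction is correct and is essentially the same proof as in the cited source (Zhou et al., 2019, Lemma 5.1), which this paper imports without reproducing: one sets $\btheta^*-\btheta_0=\bar\Jb(\bar\Jb^\top\bar\Jb)^{-1}\hb$, gets interpolation by cancellation, and obtains the norm bound from the Gram--NTK concentration $\|\bar\Jb^\top\bar\Jb/m-\Hb\|_F\le\lambda_0/2$ (Lemma~\ref{lm:bd-kernel} with $\epsilon=\lambda_0/(2TK)$, which is exactly where the stated width requirement comes from) combined with $\Hb\succeq\lambda_0\Ib$. No gaps.
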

From Lemma~\ref{lm:equal}, it is easy to show that under this initialization parameter $\btheta_0$, we have that $\hb_t = \bar \Jb_t^\top(\btheta^* - \btheta_0)$

The next lemma bounds the difference between the $\bar \sigma_{t, k}$ from the linearized model and the $\sigma_{t, k}$ actually used in the algorithm.  Its proof, together with other technical lemmas', will be given in the next section.
\begin{lemma}\label{lm:bd-sigma}
Suppose the network size $m$ satisfies  Condition~\ref{cond:ntk}. Set $\eta = C_1(m\lambda +mLT)^{-1}$, then with probability at least $1 - \delta$,
\begin{align*}
    |\bar \sigma_{t,k} - \sigma_{t, k}| \le C_2\sqrt{\log m} t^{7/6}m^{-1/6}\lambda^{-2/3}L^{9/2},
\end{align*}
where $C_1, C_2$ are two positive constants.
\end{lemma}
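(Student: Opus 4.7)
The plan is to reduce the problem to two quantitative perturbation estimates: (i) how much the per-example gradient $\gb(\xb;\btheta)$ moves from its initialization value $\gb(\xb;\btheta_0)$, and (ii) how much the covariance matrix $\Ub_{t-1}$ moves from its linearized counterpart $\bar\Ub_{t-1}$. Both of these are standard consequences of the NTK/overparameterized-network theory (e.g.\ \citet{allen2018convergence,cao2019generalization,zhou2019neural}) once we know that the iterates $\btheta_1,\dots,\btheta_{t-1}$ produced by Algorithm~\ref{alg:main} stay in a small ball around $\btheta_0$.

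\emph{Step 1 (algebraic decomposition).} Set $\ab := \sqrt{\lambda/m}\,\bar\Ub_{t-1}^{-1/2}\gb(\xb_{t,k};\btheta_0)$ and $\bb := \sqrt{\lambda/m}\,\Ub_{t-1}^{-1/2}\gb(\xb_{t,k};\btheta_{t-1})$ so that $\bar\sigma_{t,k} = \|\ab\|_2$ and $\sigma_{t,k} = \|\bb\|_2$. Applying the reverse triangle inequality and splitting $\ab - \bb$ into a ``gradient'' part and a ``covariance'' part gives
\begin{align*}
|\bar\sigma_{t,k}-\sigma_{t,k}| \;\le\; \sqrt{\tfrac{\lambda}{m}}\bigl(\underbrace{\|\bar\Ub_{t-1}^{-1/2}\|\cdot\|\gb(\xb_{t,k};\btheta_0)-\gb(\xb_{t,k};\btheta_{t-1})\|_2}_{T_1} + \underbrace{\|\bar\Ub_{t-1}^{-1/2}-\Ub_{t-1}^{-1/2}\|\cdot\|\gb(\xb_{t,k};\btheta_{t-1})\|_2}_{T_2}\bigr).
\end{align*}

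\emph{Step 2 (iterate control).} I would invoke the GD convergence analysis under Condition~\ref{cond:ntk} with $\eta = C_1(m\lambda+mLT)^{-1}$ (which the authors use throughout the appendix) to obtain a uniform bound $\|\btheta_i-\btheta_0\|_2 \lesssim \sqrt{t/(m\lambda)}$ for every $i\le t-1$. Combined with the standard NTK gradient-perturbation estimate $\|\gb(\xb;\btheta)-\gb(\xb;\btheta_0)\|_2 \lesssim \sqrt{m\log m}\,L^{7/2}\|\btheta-\btheta_0\|_2^{1/3}$ valid in this ball, we get
\begin{align*}
\|\gb(\xb_{t,k};\btheta_{t-1})-\gb(\xb_{t,k};\btheta_0)\|_2 \;\lesssim\; m^{1/3}\sqrt{\log m}\,t^{1/6}\lambda^{-1/6}L^{7/2}.
\end{align*}
Together with $\|\bar\Ub_{t-1}^{-1/2}\|\le\lambda^{-1/2}$, this yields $T_1 \lesssim m^{-1/6}\sqrt{\log m}\,t^{1/6}\lambda^{-2/3}L^{7/2}$ after the $\sqrt{\lambda/m}$ prefactor is folded in.

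\emph{Step 3 (covariance perturbation).} For $T_2$, the operator-Lipschitzness of $x\mapsto x^{-1/2}$ on $[\lambda,\infty)$ gives $\|\bar\Ub_{t-1}^{-1/2}-\Ub_{t-1}^{-1/2}\| \le \tfrac12\lambda^{-3/2}\|\bar\Ub_{t-1}-\Ub_{t-1}\|$. I would bound the rank-one differences by $\|\gb\gb^\top-\bar\gb\bar\gb^\top\| \le (\|\gb\|+\|\bar\gb\|)\|\gb-\bar\gb\|$ and sum over $i\le t-1$, using $\|\gb(\xb_{i,a_i};\btheta_i)\|_2\lesssim\sqrt{mL}$ and the gradient perturbation above applied to each $\btheta_i$. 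This gives $\|\bar\Ub_{t-1}-\Ub_{t-1}\| \lesssim m^{-1/6}\sqrt{\log m}\,t^{7/6}\lambda^{-1/6}L^4$, so that $T_2 \lesssim m^{-1/6}\sqrt{\log m}\,t^{7/6}\lambda^{-2/3}L^{9/2}$ (the dominant term).

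\emph{Obstacle.} The routine matrix algebra is straightforward; the real work is already hidden in Steps~2--3, namely obtaining gradient-perturbation and iterate bounds that hold \emph{uniformly} over all $J$ GD iterations and all rounds $i\le t-1$ while producing the exact polynomial exponents $(m^{-1/6},t^{7/6},\lambda^{-2/3},L^{9/2})$. This is the part that forces the large-$m$ lower bounds in Condition~\ref{cond:ntk}: we need $m$ large enough that the ball of iterates is shallow enough for the local linearization to be valid, and we must track the $L$-dependence in the NTK perturbation lemmas carefully to avoid an extra $L^{1/2}$ slack. Adding $T_1$ and $T_2$ gives $|\bar\sigma_{t,k}-\sigma_{t,k}| \le C_2\sqrt{\log m}\,t^{7/6}m^{-1/6}\lambda^{-2/3}L^{9/2}$ as desired.
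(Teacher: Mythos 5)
Your proposal is sound and reaches the stated bound, but it follows a genuinely different decomposition from the paper. The paper views $\sigma_{t,k}$ and $\bar\sigma_{t,k}$ as two evaluations of a single map $\psi(\ab,\ab_1,\dots,\ab_{t-1})=\sqrt{\ab^\top(\lambda\Ib+\sum_i\ab_i\ab_i^\top)^{-1}\ab}$ at the rescaled gradients $\gb(\cdot;\btheta_i)/\sqrt m$ versus $\gb(\cdot;\btheta_0)/\sqrt m$, bounds the partial derivatives $\|\nabla_{\ab}\psi\|\le\lambda^{-1/2}$ and $\|\nabla_{\ab_i}\psi\|\lesssim L\lambda^{-1/2}$ on the ball $\|\ab\|,\|\ab_i\|\lesssim\sqrt L$, and then applies joint Lipschitz continuity together with the gradient-perturbation bound (Lemma~\ref{lm:bd-g}, via Lemmas~\ref{lm:bd-para} and~\ref{lm:bd-g-norm}); summing $t-1$ terms of size $\sqrt{\log m}\,t^{1/6}m^{-1/6}\lambda^{-1/6}L^{7/2}$ gives exactly $t^{7/6}m^{-1/6}\lambda^{-2/3}L^{9/2}$. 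You instead split $|\bar\sigma_{t,k}-\sigma_{t,k}|$ by the triangle inequality into a gradient-change term and a covariance-perturbation term, handling the latter through $\|\bar\Ub_{t-1}-\Ub_{t-1}\|$ and the operator Lipschitzness of $x\mapsto x^{-1/2}$ on $[\lambda,\infty)$. Both routes rest on the same three NTK inputs (iterate ball from Lemma~\ref{lm:bd-para}, gradient perturbation from Lemma~\ref{lm:bd-g}, gradient norm from Lemma~\ref{lm:bd-g-norm}) and produce the same $m,t,L,\log m$ exponents; yours is more modular and elementary matrix perturbation, while the paper's joint-$\psi$ computation keeps the $\lambda$ bookkeeping tighter. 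Indeed, if you track constants, your dominant term $T_2$ comes out as $\lambda^{-7/6}$ (from $\sqrt{\lambda}\cdot\lambda^{-3/2}\cdot\lambda^{-1/6}$) and $T_1$ as $\lambda^{-1/6}$, not the $\lambda^{-2/3}$ you wrote; this is harmless here because the paper sets $\lambda=1+1/T\in(1,2]$ (and the paper's own derivative bound for $\nabla_{\ab_i}\psi$ likewise implicitly uses $\lambda\ge1$), but you should state that you are using $\lambda\ge1$ when absorbing these exponents into $\lambda^{-2/3}$.
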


We next bound the difference between the outputs of the neural network and the linearized model.
\begin{lemma}\label{lm:bd-value}
Suppose the network width $m$ satisfies Condition~\ref{cond:ntk}.

Then, set $\eta = C_1(m\lambda +mLT)^{-1}$, with probability at least $1 - \delta$ over the random initialization of $\btheta_0$, we have
\begin{align*}
    \big|f(\xb_{t, k}; \btheta_{t-1}) - \big\la \gb(x_{t, k}; \btheta_0), \bar \Ub_{t-1}^{-1}\bar \Jb_{t-1}\rb_{t-1} / m\big\ra\big| &\le C_2t^{2/3}m^{-1/6}\lambda^{-2/3}L^3\sqrt{\log m} \\
    &\quad+ C_3(1 - \eta m\lambda)^J\sqrt{tL / \lambda} \\
    &\quad+ C_4m^{-1/6}\sqrt{\log m}L^4t^{5/3}\lambda^{-5/3}(1 + \sqrt{t / \lambda}),
\end{align*}
where $\{C_i\}_{i=1}^4$ are positive constants.
\end{lemma}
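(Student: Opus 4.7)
The plan is to identify an auxiliary reference point in parameter space that realizes the target linear predictor exactly, and then to decompose the error into a linearization error at $\xb_{t,k}$, a drift between nonlinear and linearized gradient descent, and a pure geometric convergence of ridge regression on the linearized loss.

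First, define the ridge regression minimizer
\begin{align*}
\btheta^\star = \btheta_0 + \bigl(\bar\Jb_{t-1}\bar\Jb_{t-1}^\top + m\lambda\Ib\bigr)^{-1}\bar\Jb_{t-1}\rb_{t-1}.
\end{align*}
Using the Woodbury-style identity $\bar\Ub_{t-1}^{-1}/m = (m\lambda\Ib + \bar\Jb_{t-1}\bar\Jb_{t-1}^\top)^{-1}$, which follows from $\bar\Ub_{t-1} = \lambda\Ib + \bar\Jb_{t-1}\bar\Jb_{t-1}^\top/m$, one checks that $\btheta^\star - \btheta_0 = \bar\Ub_{t-1}^{-1}\bar\Jb_{t-1}\rb_{t-1}/m$, so that the target quantity is exactly $\langle \gb(\xb_{t,k};\btheta_0),\btheta^\star-\btheta_0\rangle$. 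Observe also that $\btheta^\star$ is the unique global minimizer of the linearized ridge objective
\begin{align*}
\tilde L(\btheta) = \tfrac{1}{2}\sum_{i=1}^{t-1}\bigl(\la \gb(\xb_{i,a_i};\btheta_0),\btheta-\btheta_0\ra - r_i\bigr)^2 + \tfrac{m\lambda}{2}\|\btheta-\btheta_0\|_2^2,
\end{align*}
whose Hessian is $\bar\Jb_{t-1}\bar\Jb_{t-1}^\top + m\lambda\Ib$, with smallest eigenvalue at least $m\lambda$ and largest at most $m\lambda + \|\bar\Jb_{t-1}\|_2^2$.

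Second, let $\btheta^{\mathrm{lin}}_J$ denote the $J$-th iterate of gradient descent applied to $\tilde L$ from $\btheta_0$ with step size $\eta$, and decompose
\begin{align*}
f(\xb_{t,k};\btheta_{t-1}) - \la \gb(\xb_{t,k};\btheta_0),\btheta^\star-\btheta_0\ra
&= \underbrace{\bigl[f(\xb_{t,k};\btheta_{t-1}) - \la \gb(\xb_{t,k};\btheta_0),\btheta_{t-1}-\btheta_0\ra\bigr]}_{\text{(I): linearization at test point}}\\
&\quad + \underbrace{\la \gb(\xb_{t,k};\btheta_0),\btheta_{t-1}-\btheta^{\mathrm{lin}}_J\ra}_{\text{(II): drift between nonlinear and linearized GD}}\\
&\quad + \underbrace{\la \gb(\xb_{t,k};\btheta_0),\btheta^{\mathrm{lin}}_J-\btheta^\star\ra}_{\text{(III): linear GD convergence}}.
\end{align*}
For (III), since $\tilde L$ is $m\lambda$-strongly convex and $\btheta^{\mathrm{lin}}_J$ starts from $\btheta_0$, standard GD for quadratics gives $\|\btheta^{\mathrm{lin}}_J-\btheta^\star\|_2 \le (1-\eta m\lambda)^J\|\btheta_0-\btheta^\star\|_2$; combined with $\|\gb(\xb_{t,k};\btheta_0)\|_2 = O(\sqrt{mL})$ from the standard NTK norm bound and $\|\btheta_0-\btheta^\star\|_2 = O(\sqrt{t/(m\lambda)})$ using $\|\rb_{t-1}\|_2 = O(\sqrt{t})$, this yields the second asserted term $C_3(1-\eta m\lambda)^J\sqrt{tL/\lambda}$. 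For (I), apply the standard NTK linearization estimate $|f(\xb;\btheta)-\la \gb(\xb;\btheta_0),\btheta-\btheta_0\ra| \lesssim m^{-1/6}\sqrt{\log m}\,L^3\|\btheta-\btheta_0\|_2^{4/3}\sqrt{m}$ (from the deep-learning-theory lemmas invoked throughout Appendix~\ref{app:a}; see, e.g., Cao-Gu, Allen-Zhu et al.), together with the a priori bound $\|\btheta_{t-1}-\btheta_0\|_2 = O(\sqrt{t/(m\lambda)})$ available along the GD trajectory under Condition~\ref{cond:ntk}; this yields the first asserted term of order $t^{2/3}m^{-1/6}\lambda^{-2/3}L^3\sqrt{\log m}$.

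Third, the main obstacle is (II). The plan is to bound $\|\btheta_{t-1}-\btheta^{\mathrm{lin}}_J\|_2$ by inducting over the $J$ GD steps, comparing the update $\btheta_{s+1}=\btheta_s-\eta\nabla L(\btheta_s)$ with its linearized analogue $\btheta^{\mathrm{lin}}_{s+1}=\btheta^{\mathrm{lin}}_s-\eta\nabla\tilde L(\btheta^{\mathrm{lin}}_s)$. The per-step error is
\begin{align*}
(\btheta_{s+1}-\btheta^{\mathrm{lin}}_{s+1}) = (\Ib-\eta\nabla^2\tilde L)(\btheta_s-\btheta^{\mathrm{lin}}_s) + \eta\bigl(\nabla\tilde L(\btheta_s)-\nabla L(\btheta_s)\bigr),
\end{align*}
and the contraction factor $\|\Ib-\eta\nabla^2\tilde L\|_2 \le 1-\eta m\lambda$ gives a geometric series whose sum amplifies the perturbation by $1/(\eta m\lambda)$. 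The perturbation term is controlled using (a) $\|\gb(\xb;\btheta)-\gb(\xb;\btheta_0)\|_2 \lesssim m^{-1/6}\sqrt{\log m}\,L^{7/2}\|\btheta-\btheta_0\|_2^{1/3}\sqrt{m}$ and (b) the linearization bound used in (I), both applied at each training point $\xb_{i,a_i}$ with $\|\btheta_s-\btheta_0\|_2 = O(\sqrt{t/(m\lambda)})$ uniform in $s$. Summing these bounds over $t-1$ data points, accounting for an extra $\|\gb\|_2 = O(\sqrt{mL})$ factor for the Cauchy-Schwarz step turning $\|\btheta_{t-1}-\btheta^{\mathrm{lin}}_J\|_2$ into the inner product in (II), yields the third asserted term $C_4 m^{-1/6}\sqrt{\log m}L^4 t^{5/3}\lambda^{-5/3}(1+\sqrt{t/\lambda})$; here the extra $(1+\sqrt{t/\lambda})$ factor reflects the upper bound on $\|\bar\Jb_{t-1}\|_2^2/(m\lambda)$ that appears when bounding the largest eigenvalue of $\nabla^2\tilde L$.

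Finally, combining the three pieces via the triangle inequality and taking a union bound over the high-probability events on which the NTK approximation lemmas hold (using Condition~\ref{cond:ntk} on $m$ to absorb lower-order terms) produces the stated bound. The main difficulty is step (II): keeping track of both linearization and Jacobian-perturbation errors compounded over $J$ iterations without losing the $1/(\eta m\lambda)$ geometric-series amplification is delicate, and this is precisely where the somewhat elaborate $t^{5/3}\lambda^{-5/3}(1+\sqrt{t/\lambda})$ dependence emerges.
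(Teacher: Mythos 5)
Your proposal is correct and follows essentially the same route as the paper: the top-level decomposition is identical (NTK linearization error at the test point via Lemma~\ref{lm:func-value}, plus Cauchy--Schwarz pairing the gradient-norm bound of Lemma~\ref{lm:bd-g-norm} with a bound on $\|\btheta_{t-1}-\btheta_0-\bar\Ub_{t-1}^{-1}\bar\Jb_{t-1}\rb_{t-1}/m\|_2$). The only difference is that your steps (II)--(III) re-derive, by coupling the nonlinear GD trajectory with linearized ridge GD, precisely the parameter-space bound that the paper obtains by directly citing Lemma~\ref{lm:bd-para} (Lemma B.2 of Zhou et al.\ 2019); your sketch of that coupling (contraction factor $1-\eta m\lambda$, geometric-series amplification $1/(\eta m\lambda)$, and the uniform bound $\|\btheta_s-\btheta_0\|_2=O(\sqrt{t/(m\lambda)})$ along the trajectory) is exactly the content of that cited lemma, so nothing is missing in substance, though those details would need to be carried out in full if you intend the argument to be self-contained.
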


The next lemma, due to~\citet{chowdhury2017kernelized}, controls the quadratic value generated by an $R$-sub-Gaussian random vector $\bepsilon$:
\begin{lemma}[Theorem 1,~\citet{chowdhury2017kernelized}]\label{lm:det}
Let $\{\epsilon_t\}^\infty_{t=1}$ be a real-valued stochastic process such that for some $R \ge 0$ and for all $t \ge 1$, $\epsilon_t$ is $\cF_t$-measurable and $R$-sub-Gaussian conditioned on $\cF_t$, Recall $\Kb_t$ defined in Definition~\ref{def:linear}.  With probability $0 < \delta < 1$ and for a given $\eta > 0$, with probability $1 - \delta$, the following holds for all $t$,
\begin{align*}
    \epsilon_{1:t}^\top((\Kb_t + \eta \Ib)^{-1} + \Ib)^{-1}\epsilon_{1:t} \le R^2\log\det((1 + \eta)\Ib + \Kb_t) + 2R^2\log(1 / \delta).
\end{align*}
\end{lemma}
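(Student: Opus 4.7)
This is a self-normalized concentration inequality for an $R$-sub-Gaussian martingale difference sequence, ``normalized'' in the kernel Gram matrix $\Kb_t$. It is the RKHS extension of the Abbasi-Yadkori et al.\ bound, and the plan is to prove it by the \emph{method of mixtures} (Laplace method for supermartingales) of de la Pe\~na, Lai, and Shao, suitably adapted to the kernelized setting.

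First, I would construct an exponential supermartingale indexed by a ``direction'' $\mathbf{u}$ in the ambient feature space associated with $\Kb_t$. Let $\phi_i$ denote the $i$-th feature so that $(\Kb_t)_{ij}=\la \phi_i,\phi_j\ra$. For any fixed $\mathbf{u}$, define
\begin{align*}
M_t(\mathbf{u}) \;:=\; \exp\!\Bigl(R^{-2}\,\mathbf{u}^\top \textstyle\sum_{i=1}^t \phi_i \epsilon_i \;-\; (2R^2)^{-1}\,\mathbf{u}^\top \textstyle\sum_{i=1}^t \phi_i\phi_i^\top\, \mathbf{u}\Bigr).
\end{align*}
Since each $\phi_i$ is $\cF_i$-measurable and $\epsilon_i$ is $R$-sub-Gaussian conditionally on $\cF_i$, applying the sub-Gaussian MGF bound term by term together with the tower property gives $\EE M_t(\mathbf{u})\le 1$, so $\{M_t(\mathbf{u})\}_t$ is a non-negative supermartingale.

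Second, I would perform Gaussian mixing. Replace $\mathbf{u}$ by a random variable drawn from $\cN(\zero,\eta^{-1}\Ib)$ in the feature space and set $\bar M_t := \EE_{\mathbf{u}} M_t(\mathbf{u})$. By Fubini, $\bar M_t$ remains a non-negative supermartingale with $\EE \bar M_t \le 1$, and the integral can be evaluated in closed form by completing the square in $\mathbf{u}$. Invoking Sylvester's identity $\det(\Ib_p + AA^\top) = \det(\Ib_t + A^\top A)$ together with the push-through identity $\Phi^\top(\Phi\Phi^\top + c\Ib)^{-1}\Phi = \Kb_t(\Kb_t+c\Ib)^{-1}$ rewrites everything in terms of the $t\times t$ Gram matrix, producing $\bar M_t = g(\eta,\Kb_t)^{-1/2}\exp\bigl(Q_t(\bepsilon_{1:t})/(2R^2)\bigr)$ for an explicit determinant $g$ and quadratic form $Q_t$. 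With the mixing prior and the $R^{-2}$ scaling tuned to match the extra ``$+\Ib$'' inside $((\Kb_t+\eta\Ib)^{-1}+\Ib)^{-1}$, one obtains exactly $g(\eta,\Kb_t)=\det((1+\eta)\Ib+\Kb_t)$ and $Q_t(\bepsilon_{1:t}) = \bepsilon_{1:t}^\top((\Kb_t+\eta\Ib)^{-1}+\Ib)^{-1}\bepsilon_{1:t}$, which are the two pieces needed for the claim.

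Third, I would apply Ville's maximal inequality: any non-negative supermartingale with $\EE \bar M_t \le 1$ satisfies $\Pr(\sup_t \bar M_t \ge 1/\delta)\le \delta$. Substituting the closed form from the previous step and taking logarithms directly delivers the claimed bound uniformly in $t$, so no union bound over $t$ is needed. The main obstacle is the precise bookkeeping in the mixing step: the $(1+\eta)$ inside the determinant and the curious form $((\Kb_t+\eta\Ib)^{-1}+\Ib)^{-1}$ emerge only when the prior variance, the sub-Gaussian normalization $R^{-2}$, and the ``unit'' feature-side contribution are combined with the right constants; any off-by-one in these normalizations produces a similar-looking but strictly incorrect inequality. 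Once the closed form of $\bar M_t$ is in hand, the maximal inequality and rearrangement are essentially routine.
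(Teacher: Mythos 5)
Your overall strategy---conditional sub-Gaussian exponential supermartingale, Gaussian mixing, Ville's maximal inequality---is the right family of argument and is essentially how \citet{chowdhury2017kernelized} prove the cited Theorem 1 (the paper itself only imports the result without proof). The genuine gap is in your mixing step: a single Gaussian mixture over the feature-space direction $\mathbf{u}$ cannot produce the closed form you claim. Writing $\Phi_t=(\phi_1,\dots,\phi_t)$ and $\Vb_t=\Phi_t\Phi_t^\top$, completing the square with $\mathbf{u}\sim\cN(\zero,c\Ib)$ gives $\bar M_t=\det\big(\Ib+(c/R^2)\Kb_t\big)^{-1/2}\exp\big(\bepsilon_{1:t}^\top\Kb_t(\Kb_t+\mu\Ib)^{-1}\bepsilon_{1:t}/(2R^2)\big)$ with $\mu=R^2/c$, i.e.\ the Abbasi-Yadkori-type pair $\bepsilon_{1:t}^\top\Kb_t(\Kb_t+\mu\Ib)^{-1}\bepsilon_{1:t}$ and $\det(\Ib+\mu^{-1}\Kb_t)$. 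Since $((\Kb_t+\eta\Ib)^{-1}+\Ib)^{-1}=(\Kb_t+\eta\Ib)\big((1+\eta)\Ib+\Kb_t\big)^{-1}$, both sides of the target correspond to running this computation with the \emph{regularized} Gram matrix $\Kb_t+\eta\Ib$ in place of $\Kb_t$, and no tuning of $c$ or of the $R^{-2}$ scaling converts one pair into the other. Nor does the target follow from the single-mixture bound with $\mu=1+\eta$ by monotonicity: its left side is larger by $\eta\,\bepsilon_{1:t}^\top((1+\eta)\Ib+\Kb_t)^{-1}\bepsilon_{1:t}$, a quantity of order $\|\bepsilon_{1:t}\|_2^2$ that is not controlled without a separate concentration argument (the right side's extra slack is $R^2t\log(1+\eta)$, so the comparison is not deterministic).

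The missing ingredient is precisely the device of \citet{chowdhury2017kernelized}: augment the round-$i$ feature with an orthogonal coordinate $\sqrt{\eta}\,\mathbf{e}_i$ (equivalently, hypothetically perturb the noise with i.i.d.\ Gaussian variables of variance proportional to $\eta$), so that the Gram matrix becomes $\Kb_t+\eta\Ib$, and then mix over \emph{both} the RKHS direction and the growing sequence of augmentation coordinates---their ``double mixture''---before applying Ville's inequality. With that modification your outline goes through (and in this paper $\Kb_t$ is built from finite-dimensional NTK gradient features, so working in a finite-dimensional ambient space is legitimate); a final cosmetic point is that the paper's sub-Gaussian convention $\EE[\exp(\lambda\xi_{t,k})]\le\exp(\lambda^2R^2)$ differs by a factor of two from the MGF bound your supermartingale step uses, which only shifts constants.
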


Finally, the following lemma shows the linearized kernel and the neural tangent kernel are closed:

\begin{lemma}\label{lm:kernel-1}
For all $t \in [T]$, there exists a positive constants $C$ such that the following holds: if the network width $m$ satisfies
\begin{align*}
    m \ge C T^6L^6K^6\log(TKL / \delta),
\end{align*}
then with probability at least $1 - \delta$,
\begin{align*}
    \log \det(\Ib + \lambda^{-1}\Kb_t) \le \log \det (\Ib + \lambda^{-1}\Hb) + 1.
\end{align*}
\end{lemma}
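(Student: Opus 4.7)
The plan is to prove the inequality by passing through an intermediate quantity $\Hb_t$, the $t \times t$ principal submatrix of the $TK \times TK$ NTK matrix $\Hb$ obtained by selecting the rows and columns indexed by the pulled arm-round pairs (i.e.\ the indices $K(i-1) + a_i \in [TK]$ for $i \in [t]$). I will establish the two inequalities
\begin{align*}
\log\det(\Ib + \lambda^{-1}\Hb_t) \le \log\det(\Ib + \lambda^{-1}\Hb) \quad \text{and} \quad \log\det(\Ib + \lambda^{-1}\Kb_t) \le \log\det(\Ib + \lambda^{-1}\Hb_t) + 1,
\end{align*}
and simply chain them together.

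For the first inequality, I write $\Hb$ in block form with $\Hb_t$ as the leading block, $\Hb_{12}$ off-diagonal and $\Hb_{22}$ the other diagonal block, and apply the Schur complement identity to $\Ib + \lambda^{-1}\Hb$. The Schur complement $S = \Ib + \lambda^{-1}\Hb_{22} - \lambda^{-2}\Hb_{12}^\top(\Ib + \lambda^{-1}\Hb_t)^{-1}\Hb_{12}$ can be shown to satisfy $S \succeq \Ib$ by combining two ingredients: the relation $\lambda\Hb_t^\dagger \succeq (\Ib + \lambda^{-1}\Hb_t)^{-1}$, which follows by inverting $\lambda^{-1}\Hb_t \preceq \Ib + \lambda^{-1}\Hb_t$; and the standard Schur PSD inequality $\Hb_{22} \succeq \Hb_{12}^\top \Hb_t^\dagger \Hb_{12}$ coming from $\Hb \succeq 0$. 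Therefore $\det(S) \ge 1$, which yields the first inequality after taking logarithms.

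For the second inequality, I use concavity of $\log\det$ (whose gradient at a PD matrix $A$ is $A^{-1}$) to get
\begin{align*}
\log\det(\Ib + \lambda^{-1}\Kb_t) - \log\det(\Ib + \lambda^{-1}\Hb_t) \le \lambda^{-1}\operatorname{Tr}\bigl((\Ib + \lambda^{-1}\Hb_t)^{-1}(\Kb_t - \Hb_t)\bigr),
\end{align*}
which by the trace-Frobenius Cauchy--Schwarz inequality is at most $\lambda^{-1}\sqrt{t}\,\|\Kb_t - \Hb_t\|_F$, since all eigenvalues of $(\Ib + \lambda^{-1}\Hb_t)^{-1}$ lie in $(0,1]$. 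I then invoke a standard per-entry NTK concentration bound at random initialization: for $m$ sufficiently large, $\big|\la\gb(\xb^i;\btheta_0),\gb(\xb^j;\btheta_0)\ra/m - \Hb_{ij}\big| \le \epsilon_0$ uniformly over $i,j \in [TK]$ with probability at least $1-\delta$, where $\epsilon_0$ has an inverse-polynomial dependence on $m$ (and polylog in $TKL/\delta$ after union bounding over $(TK)^2$ pairs). Summing entrywise gives $\|\Kb_t - \Hb_t\|_F \le t\epsilon_0$, so the Step~2 bound becomes $\lambda^{-1}t^{3/2}\epsilon_0$; the stated width requirement $m \ge CT^6K^6L^6\log(TKL/\delta)$ is calibrated so that $\lambda^{-1}t^{3/2}\epsilon_0 \le 1$ for all $t \le T$.

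The main obstacle is the quantitative NTK concentration at initialization: obtaining a bound of the form $\epsilon_0 = \tilde O(\mathrm{poly}(L)\, m^{-c})$ that is tight enough to reproduce the specific polynomial width requirement, which requires controlling the random ReLU covariance recursion in Definition~\ref{def:ntk} layer by layer. Steps~1 and~2 are pure linear algebra (Schur complement monotonicity and first-order $\log\det$ concavity), and the union bound needed to go from per-entry to Frobenius error is routine; all the technical work sits in tracking the depth-dependent concentration of the empirical tangent Gram matrix.
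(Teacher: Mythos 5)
Your proposal is correct, but it reverses the order of the two steps relative to the paper, and the "enlargement" step is done with a different tool. The paper first enlarges on the \emph{empirical} side: writing $\log\det(\Ib+\lambda^{-1}\Kb_t)$ in the primal form $\log\det\big(\Ib+\sum_{i\le t}\gb(\xb_{i,a_i};\btheta_0)\gb(\xb_{i,a_i};\btheta_0)^\top/(m\lambda)\big)$ via $\det(\Ib+\Ab^\top\Ab)=\det(\Ib+\Ab\Ab^\top)$, it adds the remaining $TK-t$ rank-one PSD terms to pass to the full Gram matrix $\Kb$, and only then compares $\Kb$ with $\Hb$ at the $TK\times TK$ level using concavity of $\log\det$, trace--Frobenius Cauchy--Schwarz, $\|(\Ib+\Hb/\lambda)^{-1}\|_F\le\sqrt{TK}$, and the Frobenius concentration bound $\|\Kb-\Hb\|_F\le TK\epsilon$ (Lemma~\ref{lm:bd-kernel}) with $\epsilon=(TK)^{-3/2}$. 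You instead compare $\Kb_t$ with the principal submatrix $\Hb_t$ first (same concavity plus Cauchy--Schwarz core, with per-entry concentration summed to a Frobenius bound), and then enlarge on the \emph{NTK} side via the Schur-complement argument showing $\det(\Ib+\lambda^{-1}\Hb_t)\le\det(\Ib+\lambda^{-1}\Hb)$; that monotonicity claim is valid (under Assumption~\ref{assumption:input} the block $\Hb_t$ is even positive definite, so the pseudoinverse subtlety disappears), and your union bound over all $(TK)^2$ pairs correctly handles the adaptivity of which indices get pulled. What each buys: the paper's ordering makes the enlargement step completely trivial (PSD rank-one updates in the explicit finite-dimensional feature space), avoiding any Schur-complement lemma, whereas your ordering keeps the perturbation analysis at size $t\times t$ and yields the slightly sharper error $\lambda^{-1}t^{3/2}\epsilon_0$ in place of $\sqrt{TK}\,\|\Kb-\Hb\|_F$ (in principle relaxing the $K$-dependence of the width needed for this particular lemma), though under the stated width requirement both give the additive constant $1$. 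Note also that both arguments quietly use $\lambda\ge 1$ (as guaranteed by $\lambda=1+1/T$ in Theorem~\ref{thm:main}) to discard the $\lambda^{-1}$ factor, and both ultimately rest on the same NTK concentration at initialization, which you defer to a "standard" per-entry bound exactly as the paper defers to Lemma~\ref{lm:bd-kernel}.
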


We are now ready to prove Lemma~\ref{lm:bounddifference}.
\begin{proof}[Proof of Lemma~\ref{lm:bounddifference}]
First of all, since $m$ satisfies Condition~\ref{cond:ntk}, then with the choice of $\eta$
,the condition required in Lemmas~\ref{lm:equal}--\ref{lm:kernel-1} are satisfied.  Thus, taking a union bound, we have with probability at least $1 - 5\delta$, that the bounds provided by these lemmas hold. Then
for any $t \in [T]$, we will first provide the difference between the target function and the linear function $\big\la \gb(x_{t, k}; \btheta_0), \bar \Ub_{t-1}^{-1}\bar \Jb_{t-1}\rb_{t-1} / m\big\ra$ as:
\begin{align}
    &\big|h(\xb_{t, k}) - \big\la \gb(x_{t, k}; \btheta_0), \bar \Ub_{t-1}^{-1}\bar \Jb_{t-1}\rb_{t-1} / m\big\ra\big| \notag \\
    &\le \big|h(\xb_{t, k}) - \big\la \gb(x_{t, k}; \btheta_0), \bar \Ub_{t-1}^{-1}\bar \Jb_{t-1}\hb_{t-1} / m\big\ra\big| + \big|\big\la \gb(x_{t, k}; \btheta_0), \bar \Ub_{t-1}^{-1}\bar \Jb_{t-1}\bepsilon_{t-1} / m\big\ra\big| \notag\\
    &= \big|\big\la \gb(\xb_{t, k}; \btheta_0), \btheta^* - \btheta_0 - \bar \Ub_{t-1}^{-1}\bar \Jb_{t-1}\bar \Jb_{t-1}^\top(\btheta^* - \btheta_0) / m\big\ra\big| + \big|\gb(\xb_{t, k}; \btheta_0)^\top\bar \Ub_{t-1}^{-1}\bar \Jb_{t-1}\bepsilon_{t-1} / m\big| \notag\\
    &= \big|\big\la \gb(\xb_{t, k}; \btheta_0), (\Ib - \bar \Ub_{t-1}^{-1}(\bar \Ub_{t-1} - \lambda \Ib))(\btheta^* - \btheta_0)\big\ra\big| + \big|\gb(\xb_{t, k}; \btheta_0)^\top\bar \Ub_{t-1}^{-1}\bar \Jb_{t-1}\bepsilon_{t-1} / m\big| \notag\\
    &= \lambda\big|\gb(\xb_{t, k}; \btheta_0)^\top\bar \Ub_{t-1}^{-1}(\btheta^* - \btheta_0)\big| + \big|\gb(\xb_{t, k}; \btheta_0)^\top\bar \Ub_{t-1}^{-1}\bar \Jb_{t-1}\bepsilon_{t-1} / m\big| \notag\\
    &\le \lambda \sqrt{\gb(\xb_{t, k}; \btheta_0)^\top\bar \Ub_{t-1}^{-1}\gb(\xb_{t, k}; \btheta_0)}\sqrt{(\btheta^* - \btheta_0)^\top \bar \Ub_{t-1}^{-1}(\btheta^* - \btheta_0)} \notag \\
    &\quad+ \sqrt{\gb(\xb_{t, k}; \btheta_0)^\top\bar \Ub_{t-1}^{-1}\gb(\xb_{t, k}; \btheta_0)}\sqrt{\bepsilon_{t-1}^\top\bar \Jb_{t-1}^\top\bar\Ub_{t-1}^{-1}\bar \Jb_{t-1}\bepsilon_{t-1}} / m\notag\\
    &\le \sqrt m\|\btheta^* - \btheta_0\|_2\bar \sigma_{t, k} + \bar \sigma_{t, k}\lambda^{-1/2}\sqrt{\bepsilon_{t-1}^\top\bar \Jb_{t-1}^\top\bar \Ub_{t-1}^{-1}\bar \Jb_{t-1}\bepsilon_{t-1}} / m\label{eq:b6}
\end{align}
where the first inequality uses triangle inequality and the fact that $\rb_{t-1} = \hb_{t-1} + \bepsilon_{t-1}$; the first equality is from Lemma~\ref{lm:equal} 
and the second equality uses the fact that $\bar \Jb_{t-1}\bar \Jb_{t-1}^\top = m(\bar \Ub_{t-1} - \lambda \Ib)$ which can be verified using Definition~\ref{def:linear};
the second inequality is from the fact that $|\balpha^\top \Ab\bbeta| \le \sqrt{\balpha^\top\Ab\balpha}\sqrt{\bbeta^\top\Ab\bbeta}$. Since $\Ub_{t-1}^{-1} \preceq \frac1\lambda \Ib$ and $\bar \sigma_{t, k}$ defined in Definition~\ref{def:linear}, we obtain the last inequality.

Furthermore, by obtaining
\begin{align}
    \bar \Jb^\top_{t-1}\Ub_{t-1}^{-1}\bar \Jb_{t-1} / m &= \bar \Jb^\top_{t-1}(\lambda\Ib + \bar \Jb_{t-1}\bar \Jb^\top_{t-1} / m)^{-1}\Jb_{t-1} \notag \\
    &= \bar \Jb^\top_{t-1}(\lambda^{-1}\Ib - \lambda^{-2}\bar \Jb_{t-1}(\Ib + \lambda^{-1}\bar \Jb^\top_{t-1}\bar \Jb_{t-1} / m)^{-1}\bar \Jb^\top_{t-1} / m)\bar \Jb_{t-1} / m \notag\\
    &= \lambda^{-1}\bar \Jb^\top_{t-1}\bar \Jb_{t-1}/ m - \lambda^{-1} \Jb^\top_{t-1}\bar \Jb_{t-1}(\lambda \Ib + \bar \Jb^\top_{t-1}\bar \Jb_{t-1} / m)^{-1}
    \bar\Jb^\top_{t-1}\bar \Jb_{t-1}/m^2\notag\\
    &= \lambda^{-1}\Kb_{t-1}(\Ib - (\lambda \Ib + \Kb_{t-1})^{-1}\Kb_{t-1}) = \Kb_{t-1}(\lambda \Ib + \Kb_{t-1})^{-1}\notag,
\end{align}
where the first equality is from the Sherman-Morrison formula, and the second equality uses Definition~\ref{def:linear} and the fact that $(\lambda \Ib + \Kb_{t-1})^{-1}\Kb_{t-1} = \Ib - \lambda(\lambda \Ib + \Kb_{t-1})^{-1}$ which could be verified by multiplying the LHS and RHS together, we have that 
\begin{align}
    \sqrt{\bepsilon_{t-1}^\top\bar \Jb_{t-1}^\top\Ub_{t-1}^{-1}\bar \Jb_{t-1}\bepsilon_{t-1}} / m &\le \sqrt{\bepsilon_{t-1}^\top\Kb_{t-1}(\lambda \Ib + \Kb_{t-1})^{-1}\bepsilon_{t-1}}\notag\\
    &\le \sqrt{\bepsilon_{t-1}^\top(\Kb_{t-1}+ (\lambda - 1)\Ib)(\lambda \Ib + \Kb_{t-1})^{-1}\bepsilon_{t-1}} \notag\\
    &= \sqrt{\bepsilon_{t-1}^\top(\Ib + (\Kb_{t-1} + (\lambda - 1)\Ib)^{-1})^{-1}\bepsilon_{t-1}}\label{eq:e7}
\end{align}
where the second inequality is because $\lambda = 1 + 1 / T \ge 1$ set in Theorem~\ref{thm:main}.

Based on~\eqref{eq:b6} and~\eqref{eq:e7}, by utilizing the bound on $\|\btheta^* - \btheta\|_2$ provided in Lemma~\ref{lm:equal}, as well as the bound given in Lemma~\ref{lm:det}, and $\lambda \ge 1$, we have
\begin{align*}
    \big|h(\xb_{t, k}) - \big\la \gb(\xb_{t, k}; \btheta_0), \bar \Ub_{t-1}^{-1}\bar \Jb_{t-1}\rb_{t-1} m\big\ra\big| \le \Big(B + R\sqrt{\log\det(\lambda\Ib + \Kb_{t-1}) + 2\log(1 / \delta)}\Big)\bar \sigma_{t, k},
\end{align*}
since it is obvious that 
\begin{align*}
    \log\det(\lambda\Ib + \Kb_{t-1}) &= \log\det(\Ib + \lambda^{-1}\Kb_{t-1}) + (t -1) \log \lambda\\
    &\le \log\det(\Ib + \lambda^{-1}\Kb_{t-1}) + t(\lambda -1)\\
    &\le \log\det(\Ib + \lambda^{-1}\Hb) + 2,
\end{align*}
where the first equality moves the $\lambda$ outside the $\log\det$, the first inequality is due to $\log \lambda \le \lambda - 1$, and the second inequality is from Lemma~\ref{lm:kernel-1} and the fact that $\lambda = 1 + 1 / T$ (as set in Theorem~\ref{thm:main}).  Thus, we have 
\begin{align*}
    \big|h(\xb_{t, k}) - \big\la \gb(x_{t, k}; \btheta_0), \bar \Ub_{t-1}^{-1}\bar \Jb_{t-1}\rb_{t-1} m\big\ra\big| \le \nu\bar \sigma_{t, k},
\end{align*}
where we set $\nu = B + R\sqrt{\log\det (\Ib + \Hb / \lambda) + 2 + 2\log(1 / \delta)}$.  Then, by combining this bound with Lemma~\ref{lm:bd-value}, we conclude that there exist positive constants $\bar C_1,  \bar C_2, \bar C_3$ so that
\begin{align*}
    |f(\xb_{t, k}; \btheta_{t-1}) - h(\xb_{t, k})| &\le \nu\bar \sigma_{t, k} +  \bar C_1t^{2/3}m^{-1/6}\lambda^{-2/3}L^3\sqrt{\log m} +  \bar C_2(1 - \eta m\lambda)^J\sqrt{tL / \lambda} \\
    &\quad+  \bar C_3m^{-1/6}\sqrt{\log m}L^4t^{5/3}\lambda^{-5/3}(1 + \sqrt{t / \lambda}), \\
    &\le \nu \sigma_{t, k} +  \bar C_1t^{2/3}m^{-1/6}\lambda^{-2/3}L^3\sqrt{\log m} +  \bar C_2(1 - \eta m\lambda)^J\sqrt{tL / \lambda} \\
    &\quad+  \bar C_3m^{-1/6}\sqrt{\log m}L^4t^{5/3}\lambda^{-5/3}(1 + \sqrt{t / \lambda})\\
    &\quad + \Big(B + R\sqrt{\log\det (\Ib + \Hb / \lambda) + 2 + 2\log(1 / \delta)}\Big)(\bar \sigma_{t, k} - \sigma_{t, k}).
\end{align*}
Finally, by utilizing the bound of $|\bar \sigma_{t, k} - \sigma_{t, k}|$ provided in Lemma~\ref{lm:bd-sigma}, we conclude that
\begin{align*}
    |f(\xb_{t, k}; \btheta_{t-1}) - h(\xb_{t, k})| \le \nu\sigma_{t, k} + \epsilon(m),
\end{align*}
where $\epsilon(m)$ is defined by adding all of the additional terms and taking $t = T$:
\begin{align*}
    \epsilon(m) &= \bar C_1T^{2/3}m^{-1/6}\lambda^{-2/3}L^3\sqrt{\log m} + \bar C_2(1 - \eta m\lambda)^J\sqrt{TL / \lambda} + \\
    &\quad + \bar C_3m^{-1/6}\sqrt{\log m}L^4T^{5/3}\lambda^{-5/3}(1 + \sqrt{T / \lambda}) \\
    &\quad+ \bar C_4\Big(B + R\sqrt{\log\det (\Ib + \Hb / \lambda) + 2 + 2\log(1 / \delta)}\Big)\sqrt{\log m}T^{7/6}m^{-1/6}\lambda^{-2/3}L^{9/2},
\end{align*}
where is exactly the same form defined in~\eqref{eq:eps}. By setting $\delta$ to $\delta / 5$ (required by the union bound discussed at the beginning of the proof), we get the result presented in Lemma~\ref{lm:bounddifference}.
\end{proof}

\subsection{Proof of Lemma~\ref{lm:anti}}

Our proof requires an anti-concentration bound for Gaussian distribution, as stated below:
\begin{lemma}[Gaussian anti-concentration]
For a Gaussian random variable $X$ with mean $\mu$ and standard deviation $\sigma$, for any $\beta > 0$,
\begin{align*}
    \Pr\bigg(\frac{X - \mu}{\sigma} > \beta\bigg) \ge \frac{\exp(-\beta^2)}{4\sqrt{\pi}\beta}.
\end{align*}
\end{lemma}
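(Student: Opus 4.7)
The plan is to reduce to the standard normal and then establish the desired tail bound via a direct integral estimate.

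First set $Z := (X - \mu)/\sigma$, so $Z \sim \mathcal{N}(0,1)$. The target inequality becomes
$$\Pr(Z > \beta) \;=\; \frac{1}{\sqrt{2\pi}} \int_\beta^\infty e^{-z^2/2}\,dz \;\ge\; \frac{e^{-\beta^2}}{4\sqrt{\pi}\,\beta}.$$

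The main step is a lower bound on the tail integral. I would restrict integration to the subinterval $[\beta,2\beta]$ and exploit the pointwise bound $z^2 = z\cdot z \le 2\beta z$, valid for $z\in[\beta,2\beta]$, which is equivalent to $e^{-z^2/2} \ge e^{-\beta z}$. Integrating the resulting exponential explicitly gives
$$\int_\beta^\infty e^{-z^2/2}\,dz \;\ge\; \int_\beta^{2\beta} e^{-\beta z}\,dz \;=\; \frac{e^{-\beta^2} - e^{-2\beta^2}}{\beta} \;=\; \frac{e^{-\beta^2}\bigl(1 - e^{-\beta^2}\bigr)}{\beta}.$$
This isolates the desired $e^{-\beta^2}/\beta$ factor. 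Dividing by $\sqrt{2\pi}$ and matching against the claimed prefactor $1/(4\sqrt{\pi})$ reduces the inequality to the one-variable numerical estimate $(1 - e^{-\beta^2})/\sqrt{2\pi} \ge 1/(4\sqrt{\pi})$, i.e.\ $1 - e^{-\beta^2} \ge 1/(2\sqrt{2})$.

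An alternative route is to invoke the classical Mills-ratio (Gordon) inequality $\Pr(Z>\beta) \ge \frac{\beta}{(\beta^2+1)\sqrt{2\pi}}\,e^{-\beta^2/2}$, proved by integrating $z\,e^{-z^2/2}$ by parts twice and discarding the resulting non-negative remainder integral, and then trivially weakening $e^{-\beta^2/2}$ to $e^{-\beta^2}$ while folding the rational prefactor $\beta^2/(\beta^2+1)$ into the absolute constant $1/(4\sqrt{\pi})$. The main obstacle under either route is exactly this final constant comparison: the prefactor $(1-e^{-\beta^2})$ (respectively, $\beta^2/(\beta^2+1)$) must be bounded below by a fixed positive constant, which only holds once $\beta$ is bounded away from $0$. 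This is not a real restriction, since the downstream application in the proof of Lemma~\ref{lm:anti} invokes the inequality with $\beta\ge 1$, where the bound is robustly satisfied; I expect the proof in the paper to pass through this region either implicitly or by stating the hypothesis $\beta\ge 1$.
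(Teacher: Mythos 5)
The paper states this lemma without proof (and without citation), so there is no in-paper argument to compare yours against; your derivation has to stand on its own, and it does. The reduction to $Z\sim\cN(0,1)$, the restriction of the tail integral to $[\beta,2\beta]$, the pointwise bound $e^{-z^2/2}\ge e^{-\beta z}$ on that interval, and the resulting lower bound $\tfrac{1}{\sqrt{2\pi}}\cdot\tfrac{e^{-\beta^2}(1-e^{-\beta^2})}{\beta}$ are all correct, and the comparison with the claimed constant does reduce to $1-e^{-\beta^2}\ge 1/(2\sqrt{2})$, which holds precisely for $\beta\ge\sqrt{\log\bigl(2\sqrt{2}/(2\sqrt{2}-1)\bigr)}\approx 0.661$. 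The obstruction you flag at the end is not a defect of your argument but of the lemma as stated: for every $\beta>0$ the left-hand side is at most $1/2$ while the right-hand side diverges as $\beta\to 0^{+}$, so the inequality is genuinely false for small $\beta$ (numerically it first fails around $\beta\approx 0.34$), and no proof can close that gap. The statement needs a hypothesis such as $\beta\ge 1$ (which is how Agrawal and Goyal state the same bound), and that is exactly the regime in which the paper uses it: the proof of Lemma~\ref{lm:anti} invokes the lemma only at $\beta=1$, where $e^{-1}/(4\sqrt{\pi})=1/(4\mathrm{e}\sqrt{\pi})$. So your proof, supplemented by the explicit restriction $\beta\ge 1$ (or indeed $\beta\ge 0.67$), is a complete and correct justification of the bound as it is actually applied; either of your two routes (the $[\beta,2\beta]$ truncation or the Mills-ratio bound) works equally well there.
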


\begin{proof}[Proof of Lemma~\ref{lm:anti}]
Since $\tilde r_{t,k} \sim \cN(f(\xb_{t, k}; \btheta_{t-1}), \nu^2_t\sigma^2_{t, k})$ conditioned on $\cF_t$, we have
\begin{align*}
    &\Pr\big(\tilde r_{t, k} + \epsilon(m) > h(\xb_{t,k})\big|\cF_t, \cE^\mu_t\big) \\
    &= \Pr\bigg(\frac{\tilde r_{t,k} - f(\xb_{t, k}; \btheta_{t-1}) + \epsilon(m)}{\nu\sigma_{t,k}} > \frac{h(x_{t, k})- f(\xb_{t, k}; \btheta_{t-1})}{\nu\sigma_{t,k}}\bigg|\cF_t, \cE^\mu_t\bigg)\\
    &\ge \Pr\bigg(\frac{\tilde r_{t,k} - f(\xb_{t, k}; \btheta_{t-1}) + \epsilon(m)}{\nu\sigma_{t,k}} > \frac{|h(x_{t, k})- f(\xb_{t, k}; \btheta_{t-1})|}{\nu\sigma_{t,k}}\bigg|\cF_t, \cE^\mu_t\bigg)\\
    &= \Pr\bigg(\frac{\tilde r_{t,k} - f(\xb_{t, k}; \btheta_{t-1})}{\nu\sigma_{t,k}} > \frac{|h(x_{t, k})- f(\xb_{t, k}; \btheta_{t-1})| - \epsilon(m)}{\nu\sigma_{t,k}}\bigg|\cF_t, \cE^\mu_t\bigg)\\
    &\ge \Pr\bigg(\frac{\tilde r_{t,k} - f(\xb_{t, k}; \btheta_{t-1})}{\nu\sigma_{t,k}} > 1\bigg|\cF_t, \cE^\mu_t\bigg) \ge \frac{1}{4\mathrm e\sqrt \pi },
\end{align*}
where the first inequality is due to $|x| \ge x$, and the second inequality follows from event $\cE^{\mu}_t$, i.e.,
\begin{align*}
    \forall k \in [K],\quad |f(\xb_{t, k}; \btheta_{t-1}) - h(\xb_{t, k})| \le \nu\sigma_{t, k} + \epsilon(m).
\end{align*}

\end{proof}

\subsection{Proof of Lemma~\ref{lm:sat}}
\begin{proof}[Proof of Lemma~\ref{lm:sat}]
Consider the following two events at round $t$:
\begin{align*}
\cA &= \{\forall k \in S_t, \tilde r_{t, k} < \tilde r_{t, a_t^*}|\cF_t, \cE^\mu_t\}, \\
\cB &= \{a_t \notin S_t|\cF_t, \cE^\mu_t\}. 
\end{align*}
Clearly, $\cA$ implies $\cB$, since $a_t = \argmax_k \tilde r_{t, k}$.
Therefore,
\begin{align*}
    \Pr\big(a_t \notin S_t \big| \cF_t, \cE^\mu_t\big) \ge \Pr\big(\forall k \in S_t, \tilde r_{t, k} < \tilde r_{t, a_t^*}\big|\cF_t, \cE^\mu_t\big).
\end{align*}
Suppose $\cE^\mu$ also holds, then it is easy to show that $\forall k\in [K]$, 
\begin{align}
    |h(\xb_{t, k}) - \tilde r_{t, k}| \le |h(\xb_{t, k}) - f(\xb_{t, k}; \btheta_t)| + |f(\xb_{t, k}; \btheta_t) - \tilde r_{t, k}| \le \epsilon(m) + (1 + \tilde c_t)\nu_t\sigma_{t, k}.\label{eq:a1}
\end{align}
Hence, for all $k \in S_t$, we have that 
\begin{align*}
    h(\xb_{t, a^*_t}) - \tilde r_{t, k} \ge h(\xb_{t, a^*_t}) - h(\xb_{t, k}) - |h(\xb_{t, k}) - \tilde r_{t, k}| \ge \epsilon(m),
\end{align*}
where we used the definitions of saturated arms in Definition~\ref{def:sat}, and of $\cE^\mu_t$ and $\cE^\sigma_t$ in~\eqref{eq:esigma}. 

Consider the following event 
$$
\cC = \{ h(\xb_{t, a_t^*}) - \epsilon(m) < \tilde r_{t, a^*_t}|\cF_t, \cE^\mu_t\}.
$$
Since $\cE^\sigma_t$ implies $h(\xb_{t, a_t^*}) - \epsilon(m) \ge \tilde r_{t,k}$, we have that if $\cC, \cE^\sigma_t$ holds, then $\cA$ holds, i.e. $\cE^\sigma_t \cap \cC \subseteq \cA$.  Taking union with $\bar{\cE^\sigma_t}$ we have that $\cC = \bar{\cE^\sigma_t} \cup \cE^\sigma_t \cap \cC \subseteq \cA\cup \bar{\cE^\sigma_t}$,
which implies
\begin{align}
    \Pr(\cA) + \Pr(\bar{\cE^\sigma_t}) \ge \Pr(\cC).\label{eq:j1}
\end{align}
Then,~\eqref{eq:j1} implies that
\begin{align*}
    \Pr\big(\forall k \in S_t, \tilde r_{t, k} < \tilde r_{t, a_t^*}\big|\cF_t, \cE^\mu_t\Big) &\ge \Pr\big(\tilde r_{t, a^*_t} + \epsilon(m) > h(\xb_{t,a^*_t})\big|\cF_t, \cE^\mu_t\big) - \Pr\big(\bar {\cE^\sigma_t}|\cF_t, \cE^\mu_t\big)\\
    &\ge \frac{1}{4\mathrm e\sqrt \pi} - \frac{1}{t^2},
\end{align*}
where the first inequality is from $a^*_t$ is a special case of $\forall k \in [K]$,
the second inequality is 

from Lemmas~\ref{lm:boundvariance} and \ref{lm:anti}.
\end{proof}

\subsection{Proof of Lemma~\ref{lm:exprwd}}
To prove Lemma~\ref{lm:exprwd}, we will need an upper bound bound on $\delta_{t, k}$.
\begin{lemma}\label{lm:bd1}
For any time $t \in [T]$, $k \in [K]$, and $\delta \in (0, 1)$, if the network width $m$ satisfies Condition~\ref{cond:ntk}, we have, with probability at least $1 - \delta$, that
\begin{align*}
    \sigma_{t, k} \le C\sqrt L,
\end{align*}
where $C$ is a positive constant.
\end{lemma}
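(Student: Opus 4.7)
The plan is to upper bound $\sigma_{t,k}^2$ by a quantity that only depends on the gradient norm $\|\gb(\xb_{t,k}; \btheta_{t-1})\|_2$, and then use standard over-parameterized network bounds to show that this gradient norm, divided by $\sqrt m$, is of order $\sqrt L$ with high probability.

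First I would exploit the definition $\Ub_{t-1} = \lambda \Ib + \sum_{i=1}^{t-1} \gb(\xb_{i,a_i};\btheta_i)\gb(\xb_{i,a_i};\btheta_i)^\top/m \succeq \lambda \Ib$, which implies $\Ub_{t-1}^{-1} \preceq \lambda^{-1}\Ib$. Substituting into the definition of $\sigma_{t,k}^2$ in line~\ref{op:sigma} of Algorithm~\ref{alg:main} yields
\[
\sigma_{t,k}^2 \;=\; \lambda\, \gb^\top(\xb_{t,k};\btheta_{t-1})\,\Ub_{t-1}^{-1}\,\gb(\xb_{t,k};\btheta_{t-1})/m \;\le\; \frac{\|\gb(\xb_{t,k};\btheta_{t-1})\|_2^2}{m}.
\]
So it suffices to show $\|\gb(\xb_{t,k};\btheta_{t-1})\|_2/\sqrt m \le C\sqrt L$ with high probability.

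Next I would invoke three standard facts from the NTK / over-parameterized network literature (the same results already underlying Lemmas~\ref{lm:bd-sigma} and \ref{lm:bd-value}, see e.g.\ \citet{allen2018convergence, cao2019generalization, zhou2019neural}). (i) \emph{Gradient at initialization}: under Assumption~\ref{assumption:input} and the initialization scheme in Algorithm~\ref{alg:main}, with probability at least $1-\delta$ over $\btheta_0$, one has $\|\gb(\xb_{t,k};\btheta_0)\|_2/\sqrt m \le C_0\sqrt L$ for all $t,k$, provided $m \ge \mathrm{poly}(L,\log(TK/\delta))$, which is ensured by Condition~\ref{cond:ntk}. (ii) \emph{Iterate stays near initialization}: with $\eta = C_1(m\lambda+mLT)^{-1}$ and $J$ as in Theorem~\ref{thm:main}, the gradient-descent iterate obeys $\|\btheta_{t-1}-\btheta_0\|_2 \le C_2\sqrt{t/(m\lambda)}$ for all $t\in[T]$. (iii) \emph{Almost-Lipschitzness of the gradient around $\btheta_0$}: for any $\btheta$ with $\|\btheta-\btheta_0\|_2 \le \omega$ in a suitable regime, $\|\gb(\xb;\btheta)-\gb(\xb;\btheta_0)\|_2 \le C_3\sqrt{m\log m}\,\omega^{1/3} L^{7/2}$.

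Combining these via the triangle inequality and plugging in the bound from (ii),
\[
\frac{\|\gb(\xb_{t,k};\btheta_{t-1})\|_2}{\sqrt m} \;\le\; C_0\sqrt L + C_3\sqrt{\log m}\,L^{7/2}\Bigl(\tfrac{t}{m\lambda}\Bigr)^{1/6}.
\]
Under Condition~\ref{cond:ntk}, in particular the requirement that $m^{1/6} \ge C_{m,3}\sqrt{\log m}\,L^{7/2}T^{7/6}\lambda^{-7/6}(1+\sqrt{T/\lambda})$, the second term on the right-hand side is absorbed into an absolute constant times $\sqrt L$, so $\|\gb(\xb_{t,k};\btheta_{t-1})\|_2/\sqrt m \le C\sqrt L$ for all $t\in[T], k\in[K]$. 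A union bound over $t,k$ is absorbed into the polynomial conditions on $m$ (and $\delta$ rescaled accordingly). Returning to the opening display, this gives $\sigma_{t,k} \le C\sqrt L$, as claimed.

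The main obstacle I anticipate is the bookkeeping around the gradient perturbation bound in (iii): tracking the exact polynomial in $L$ and the appropriate exponent of $\|\btheta-\btheta_0\|_2$, and then verifying that these factors are indeed dominated once Condition~\ref{cond:ntk} is imposed. All the ingredients, however, already appear in the auxiliary lemmas cited above, so after invoking them this amounts to a careful algebraic verification rather than new probabilistic work.
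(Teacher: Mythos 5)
Your proposal is correct and takes essentially the same route as the paper: both first use $\Ub_{t-1}^{-1} \preceq \lambda^{-1}\Ib$ to reduce the claim to $\|\gb(\xb_{t,k};\btheta_{t-1})\|_2/\sqrt m \le C\sqrt L$, and then control this via Lemma~\ref{lm:bd-para} (the iterate stays within radius $2\sqrt{t/(m\lambda)}$ of $\btheta_0$) together with the NTK gradient bounds under Condition~\ref{cond:ntk}. The only cosmetic difference is that the paper invokes Lemma~\ref{lm:bd-g-norm} directly, which already bounds $\|\gb(\xb;\btheta)\|_F \le C\sqrt{mL}$ uniformly over the ball $\|\btheta-\btheta_0\|_2\le\tau$, so your triangle-inequality decomposition into the gradient at initialization plus the perturbation bound of Lemma~\ref{lm:bd-g} is not needed, though it yields the same conclusion.
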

\begin{proof}[Proof of Lemma~\ref{lm:exprwd}]
Recall that given $\cF_t$ and $\cE^\mu_t$, the only randomness comes from sampling $\tilde r_{t, k}$ for $k \in [K]$. Let $\bar k_t$ be the unsaturated arm with the smallest $\sigma_{t, \cdot}$, i.e. 
\begin{align*}
    \bar k_t = \argmin_{k \notin S_t}\sigma_{t, k},
\end{align*}
then we have that 
\begin{align}
    \EE[\sigma_{t, a_t} | \cF_t, \cE^\mu_t] &\ge \EE[\sigma_{t, a_t} | \cF_t, \cE^\mu_t, a_t \notin S_t]\Pr(a_t \notin S_t | \cF_t, \cE^\mu_t) \notag \\
    & \ge \sigma_{t, \bar k_t}\bigg(\frac{1}{4\mathrm e\sqrt \pi} - \frac1{t^2}\bigg),\label{eq:bdsigma}
\end{align}
where the first inequality ignores the case when $a_t \in S_t$, and the second inequality is from Lemma~\ref{lm:sat} and the definition of $\bar k_t$ mentioned above. 

If both $\cE^\sigma_t$ and $\cE^\mu_t$ hold, then
\begin{align}
    \forall k\in [K],~|h(\xb_{t, k}) - \tilde r_{t, k}| \le \epsilon(m) + (1 + c_t)\nu\sigma_{t, k}, \label{eq:bdall}
\end{align}
as proved in equation~\eqref{eq:a1}. Thus,
\begin{align}
    h(\xb_{t, a^*_t}) - h(\xb_{t, a_t}) &= h(\xb_{t, a^*_t}) - h(\xb_{t, \bar k_t}) + h(\xb_{t, \bar k_t}) - h(\xb_{t, a_t})\notag \\
    &\le (1 + c_t)\nu\sigma_{t,\bar k_t} + 2\epsilon(m) + h(\xb_{t, \bar k_t}) - \tilde r_{t, \bar k_t} - h(\xb_{t, a_t}) \notag\\ 
    &\quad+\tilde r_{t, a_t} + \tilde r_{t, \bar k_t} - \tilde r_{t, a_t} \notag \\
    &\le (1 + c_t)\nu(2\sigma_{t, \bar k_t} + \sigma_{t, a_t}) + 4\epsilon(m),\label{eq:reg1}
\end{align}
where the first inequality is from Definition~\ref{def:sat} and $\bar k_t \notin S_t$, and the second inequality comes from equation~\eqref{eq:bdall}. Since a trivial bound on $h(\xb_{t, a^*_t}) - h(\xb_{t, a_t})$ could be get by  $h(\xb_{t, a^*_t}) - h(\xb_{t, a_t}) \le |h(\xb_{t, a^*_t})| + |h(\xb_{t, a_t})| \le 2$, then we have 
\begin{align*}
    \EE [h(\xb_{t, a^*_t}) - h(\xb_{t, a_t}) |\cF_t, \cE^\mu_t] &= \EE [h(\xb_{t, a^*_t}) - h(\xb_{t, a_t}) |\cF_t, \cE^\mu_t, \cE^\sigma_t]\Pr(\cE^\sigma_t) \\
    &\quad+ \EE [h(\xb_{t, a^*_t}) - h(\xb_{t, a_t}) |\cF_t, \cE^\mu_t, \bar{\cE^\sigma_t}]\Pr(\bar{\cE^\sigma_t})\\
    &\le (1 + c_t)\nu(2\sigma_{t, \bar k_t} + \EE [\sigma_{t, a_t}|\cF_t, \cE^\mu_t]) + 4\epsilon(m) + \frac{2}{t^2}\\
    &\le (1 + c_t)\nu\Bigg(\frac{2\EE[\sigma_{t, a_t} | \cF_t, \cE^\mu_t]}{\frac{1}{4\mathrm e\sqrt \pi} - \frac1{t^2}}+\EE [\sigma_{t, a_t}|\cF_t, \cE^\mu_t]\Bigg) + 4\epsilon(m) + \frac{2}{t^2}\\
    &\le 44\mathrm e\sqrt \pi(1 + c_t)\nu\EE[\sigma_{t, a_t} | \cF_t, \cE^\mu_t] + 4\epsilon(m) + 2t^{-2},
\end{align*}
where the inequality on the second line uses the bound provide in~\eqref{eq:reg1} and the trivial bound of $h(\xb_{t, a^*_t}) - h(\xb_{t, a_t})$ for the second term plus Lemma~\ref{lm:boundvariance}, the inequality on the third line uses the bound of $\sigma_{t, \bar k_t}$ provide in~\eqref{eq:bdsigma}, inequality on the forth line is directly calculated by $1 \le 4\mathrm e\sqrt \pi$ and
\begin{align*}
    \frac1{\frac{1}{4\mathrm e\sqrt \pi} - \frac1{t^2}} \le 20\mathrm e\sqrt \pi,
\end{align*}
which trivially holds since LHS is negative when $t \le 4$ and when $t = 5$, the LHS reach its maximum as $\approx 84.11 < 96.36\approx \mathrm{RHS}$. 

Noticing that $|h(\xb)| \le 1$, it is trivial to further extend the bound as 
\begin{align*}
    \EE [h(\xb_{t, a^*_t}) - h(\xb_{t, a_t}) |\cF_t, \cE^\mu_t] &\le \min\{44\mathrm e\sqrt \pi(1 + c_t)\nu\EE[\sigma_{t, a_t} | \cF_t, \cE^\mu_t], 2\} + 4\epsilon(m) + 2t^{-2},
\end{align*}
and since we have $1 + c_t \ge 1$ and $\nu = B + R\sqrt{\log\det(\Ib + \Hb / \lambda) + 2 + 2\log(1 / \delta)} \ge B$, recall $22\mathrm e\sqrt{\pi} B \ge 1$, it is easy to verify the following inequality also holds:
\begin{align*}
     \lefteqn{\EE [h(\xb_{t, a^*_t}) - h(\xb_{t, a_t}) |\cF_t, \cE^\mu_t]} \\
     &\le 44\mathrm e\sqrt \pi(1 + c_t)\nu\min\{\EE[\sigma_{t, a_t} | \cF_t, \cE^\mu_t], 1\} + 4\epsilon(m) + 2t^{-2}\\
     &\le 44\mathrm e\sqrt \pi(1 + c_t)\nu C_1\sqrt{L}\EE[\min\{\sigma_{t, a_t}, 1\} | \cF_t, \cE^\mu_t] + 4\epsilon(m) + 2t^{-2},
\end{align*}
where we use the fact that there exists a constant $C_1$ such that $\sigma_{t, a_t}$ is bounded by $C_1\sqrt L$  with probability $1 - \delta$ provided by Lemma~\ref{lm:bd1}. 
Merging the positive constant $C_1$ with $44\mathrm e\sqrt \pi$, we get the statement in Lemma~\ref{lm:exprwd}.
\end{proof}

\subsection{Proof of Lemma~\ref{lm:martingle}}
We start with introducing the Azuma-Hoeffding inequality for super-martingale:
\begin{lemma}[Azuma-Hoeffding Inequality for Super Martingale]\label{lm:azuma}
If a super-martingale $Y_t$, corresponding to filtration $\cF_t$ satisfies that $|Y_t - Y_{t-1}| \le B_t$, then for any $\delta \in (0, 1)$, w.p. $1 - \delta$, we have 
\begin{align*}
    Y_t - Y_0 \le \sqrt{2\log(1 / \delta)\sum_{i=1}^t B_i^2}.
\end{align*}
\end{lemma}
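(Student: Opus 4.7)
The plan is to prove this via the classical exponential-moment (Chernoff) method adapted to super-martingales. Define the increments $X_i := Y_i - Y_{i-1}$, so that $Y_t - Y_0 = \sum_{i=1}^t X_i$. The super-martingale property gives $\EE[X_i \mid \cF_{i-1}] \le 0$, and by hypothesis $|X_i| \le B_i$. For any $\lambda > 0$, Markov's inequality applied to the random variable $\exp(\lambda(Y_t - Y_0))$ yields
\begin{align*}
    \Pr\bigl(Y_t - Y_0 \ge a\bigr) \le e^{-\lambda a}\, \EE\bigl[\exp\bigl(\lambda(Y_t - Y_0)\bigr)\bigr].
\end{align*}

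Next, I would control the moment generating function by peeling off one increment at a time using the tower property:
\begin{align*}
    \EE\Bigl[\exp\Bigl(\lambda \sum_{i=1}^t X_i\Bigr)\Bigr]
    = \EE\Bigl[\exp\Bigl(\lambda \sum_{i=1}^{t-1} X_i\Bigr) \cdot \EE\bigl[\exp(\lambda X_t) \bigm| \cF_{t-1}\bigr]\Bigr].
\end{align*}
The key ingredient is a Hoeffding-style bound on the conditional MGF. Since $X_t \in [-B_t, B_t]$, convexity of $x \mapsto \exp(\lambda x)$ gives $\exp(\lambda x) \le \tfrac{B_t - x}{2B_t} e^{-\lambda B_t} + \tfrac{B_t + x}{2B_t} e^{\lambda B_t}$; taking conditional expectation and using $\EE[X_t\mid \cF_{t-1}] \le 0$ together with $e^{\lambda B_t} \ge e^{-\lambda B_t}$ (so enlarging the coefficient of $e^{\lambda B_t}$ only increases the bound), I obtain $\EE[\exp(\lambda X_t) \mid \cF_{t-1}] \le \cosh(\lambda B_t) \le \exp(\lambda^2 B_t^2/2)$. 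Iterating this peeling argument through $t, t-1, \dots, 1$ yields
\begin{align*}
    \EE\bigl[\exp(\lambda(Y_t - Y_0))\bigr] \le \exp\Bigl(\tfrac{\lambda^2}{2}\sum_{i=1}^t B_i^2\Bigr).
\end{align*}

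Combining with the Markov step gives $\Pr(Y_t - Y_0 \ge a) \le \exp\bigl(-\lambda a + \tfrac{\lambda^2}{2}\sum_{i=1}^t B_i^2\bigr)$. Optimizing by choosing $\lambda = a / \sum_{i=1}^t B_i^2$ produces the Gaussian-type bound $\Pr(Y_t - Y_0 \ge a) \le \exp\bigl(-a^2 / (2\sum_{i=1}^t B_i^2)\bigr)$. Finally, setting the right-hand side equal to $\delta$ and solving for $a$ gives $a = \sqrt{2\log(1/\delta)\sum_{i=1}^t B_i^2}$, which is exactly the claimed bound.

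The main obstacle is establishing the per-step MGF bound $\EE[\exp(\lambda X_t) \mid \cF_{t-1}] \le \exp(\lambda^2 B_t^2/2)$ in the \emph{super-}martingale setting rather than the martingale one. The standard Hoeffding lemma presumes zero conditional mean; here we only have $\EE[X_t \mid \cF_{t-1}] \le 0$, so I must argue that replacing the actual conditional mean by $0$ can only increase the MGF bound (for $\lambda > 0$). The convexity/linear-interpolation argument outlined above handles this cleanly, after which the rest of the proof is a routine Chernoff optimization.
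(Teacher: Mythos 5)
Your proof is correct: the convexity/linear-interpolation bound on the conditional MGF properly handles the super-martingale case (the term $\tfrac{\mu}{2B_t}(e^{\lambda B_t}-e^{-\lambda B_t})$ with $\mu=\EE[X_t\mid\cF_{t-1}]\le 0$ is nonpositive, so the bound reduces to $\cosh(\lambda B_t)\le e^{\lambda^2 B_t^2/2}$), and the peeling plus Chernoff optimization is routine. The paper does not prove this lemma at all --- it invokes it as the classical Azuma--Hoeffding inequality (citing Hoeffding, 1963) --- so there is no in-paper argument to compare against; your derivation is the standard one and fills that gap correctly.
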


\begin{proof}[Proof of Lemma~\ref{lm:martingle}]
From Lemma~\ref{lm:bd1}, we have that there exists a positive constant $C_1$ such that $X_t$ defined in~\eqref{eq:martingle} is bounded with probability $1 - \delta$ by 
\begin{align*}
    |X_t| &\le |\bar \Delta_t| + C_1(1 + c_t)\nu \sqrt{L}\min\{\sigma_{t, a_t}, 1\} + 4\epsilon(m) + 2t^{-2} \\
    &\le 2 + 2t^{-2} + C_1C_2(1 + c_t)\nu L + 4\epsilon(m)\\
    &\le 4 + C_1C_2(1 + c_t)\nu L + 4\epsilon(m)
\end{align*}
where the first inequality uses the fact that $|a - b| \le |a| + |b|$; the second inequality is from Lemma~\ref{lm:bd1} and the fact that $h \leq 1$, where $C_2$ is a positive constant used in Lemma~\ref{lm:bd1}; the third inequality uses the fact that $t^{-2} \le 1$. Noticing the fact that $c_t \le c_T$, and from Lemma~\ref{lm:exprwd}, we know that with probability at least $1 - \delta$, $Y_t$ is a super martingale.  From Lemma~\ref{lm:azuma}, we have
\begin{align}
    Y_T - Y_0 \le (4 + C_1C_2(1 + c_T)\nu L + 4\epsilon(m))\sqrt{2\log(1 / \delta)T}.\label{eq:zhou-1}
\end{align}
Considering the definition of $Y_T$ in~\eqref{eq:martingle}, \eqref{eq:zhou-1} is equivalent to
\begin{align*}
    \sum_{i=1}^T \bar \Delta_i &\le 4T\epsilon(m) + 2\sum_{i=1}^Tt^{-2} + C_1(1 + c_T)\nu\sqrt{L}\sum_{i=1}^T\min\{\sigma_{t, a_t}, 1\} \\
    &\quad +(4 + C_1C_2(1 + c_T)\nu L + 4\epsilon(m))\sqrt{2\log(1 / \delta)T},
\end{align*}
then by utilizing $\sum_{i=1}^\infty t^{-2} = \pi^2 / 6$, and merge the constant $C_1$ with $44\mathrm e\sqrt\pi$, taking union bound of the probability bound of Lemma~\ref{lm:exprwd},~\ref{lm:azuma},~\ref{lm:bd1}, we have the inequality above hold with probability at least $1 - 3\delta$. Re-scaling $\delta$ to $\delta / 3$ and merging the product of $C_1C_2$ as a new positive constant leads to the desired result.
\end{proof}

\subsection{Proof of Lemma~\ref{lm:summation}}
We first state a technical lemma that will be useful:
\begin{lemma}[Lemma 11,~\citet{abbasi2011improved}]\label{lm:linear_summation}
    Let $\{\vb_t\}_{t=1}^\infty$ be a sequence in $\RR^d$, and define $\Vb_t = \lambda \Ib + \sum_{i=1}^t \vb_i\vb_i^\top$. If $\lambda \ge 1$, then
    \begin{align*}
        \sum_{i=1}^T\min\{\vb_t^\top \Vb_{t-1}^{-1}\vb_{t-1},1\} \le 2\log\det\bigg(\Ib + \lambda^{-1}\sum_{i=1}^t \vb_i\vb_i^\top\bigg).
    \end{align*}
\end{lemma}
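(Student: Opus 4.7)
The plan is to prove this elliptic potential bound by combining an elementary scalar inequality with a determinantal telescoping identity, following the standard argument of Abbasi-Yadkori et al. First I would establish the scalar inequality $\min\{u,1\}\le 2\log(1+u)$ for all $u\ge 0$. On $u\in[0,1]$ the function $g(u)=2\log(1+u)-u$ has $g(0)=0$ and $g'(u)=(1-u)/(1+u)\ge 0$, so $g\ge 0$; on $u\ge 1$ we have $2\log(1+u)\ge 2\log 2>1=\min\{u,1\}$. Applying this to $u_t:=\vb_t^\top\Vb_{t-1}^{-1}\vb_t$ (interpreting the $\vb_{t-1}$ on the right of the inner product in the statement as a typographical slip for $\vb_t$) immediately yields $\min\{u_t,1\}\le 2\log(1+u_t)$ termwise.

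Next I would use the matrix determinant lemma to relate each $1+u_t$ to a ratio of determinants. Since $\Vb_{t}=\Vb_{t-1}+\vb_t\vb_t^\top$ with $\Vb_{t-1}\succ 0$, the standard identity gives
\begin{align*}
\det(\Vb_t)=\det(\Vb_{t-1})\,(1+\vb_t^\top\Vb_{t-1}^{-1}\vb_t),
\end{align*}
so $1+u_t=\det(\Vb_t)/\det(\Vb_{t-1})$. Summing the logarithms from $t=1$ to $T$ telescopes to $\log(\det(\Vb_T)/\det(\Vb_0))$, and since $\Vb_0=\lambda\Ib$ we get
\begin{align*}
\sum_{t=1}^{T}\log\!\bigl(1+\vb_t^\top\Vb_{t-1}^{-1}\vb_t\bigr)=\log\!\frac{\det(\Vb_T)}{\det(\lambda\Ib)}=\log\det\!\Bigl(\Ib+\lambda^{-1}\sum_{i=1}^{T}\vb_i\vb_i^\top\Bigr),
\end{align*}
where the last step uses $\det(\Vb_T)/\lambda^d=\det(\lambda^{-1}\Vb_T)$. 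Multiplying through by $2$ and combining with the pointwise inequality from the first step yields the claimed bound.

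There is no serious obstacle here, since this is a well-known result and both ingredients (the elementary concave bound on $\min\{u,1\}$ and the determinant recursion) are routine. The only points worth stating carefully in the write-up are (i) that $\Vb_{t-1}$ is positive definite because $\lambda\ge 1>0$, which legitimizes $\Vb_{t-1}^{-1}$ and the matrix determinant lemma, and (ii) that the inequality $\min\{u,1\}\le 2\log(1+u)$ is tight only at $u=0$, so no hypothesis on the magnitude of $\vb_t$ is needed—the truncation by $1$ inside the sum already absorbs the cases where $u_t$ is large. The assumption $\lambda\ge 1$ is used only to guarantee $\Vb_0\succeq\Ib$, which is convenient when this lemma is later invoked with the empirical gradient covariance $\Ub_t$ of \algname.
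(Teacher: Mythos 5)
Your proof is correct and is essentially the standard argument for this elliptic potential lemma; the paper itself offers no proof, importing the result directly from \citet{abbasi2011improved} (their Lemma 11), whose proof proceeds exactly as you describe via $\min\{u,1\}\le 2\log(1+u)$ and the telescoping determinant identity. Your side remarks are also accurate: the $\vb_{t-1}$ in the quadratic form is indeed a typo for $\vb_t$, and with the truncation by $1$ the hypothesis $\lambda\ge 1$ is not needed for the inequality itself, only $\lambda>0$ for invertibility of $\Vb_{t-1}$.
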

\begin{proof}[Proof of Lemma~\ref{lm:summation}]
First, recall $\bar \sigma_{t, k}$ defined in Definition~\ref{def:linear} and the bound of $\bar \sigma_{t, k} - \sigma_{t, k}$ provided in Lemma~\ref{lm:bd-sigma}. We have that there exists a positive constants $C_1$ such that 
\begin{align*}
    \sum_{i=1}^T\min\{\sigma_{t, a_t}, 1\} &= \sum_{i=1}^T\min\{\bar \sigma_{t, a_t}, 1\} + \sum_{i=1}^T(\sigma_{t, a_t} - \bar \sigma_{t, a_t}) \\
    &\le \sqrt{T\sum_{i=1}^T\min\{\bar \sigma^2_{t, a_t}, 1\}} + C_1T^{13/6}\sqrt{\log m}m^{-1/6}\lambda^{-2/3}L^{9/2},
\end{align*}
where the first term in the inequality on the second line is from Cauchy-Schwartz inequality, and the second term is from Lemma~\ref{lm:bd-sigma}. 

From Definition~\ref{def:linear}, we have 
\begin{align}
    \sum_{i=1}^T \min \{\bar \sigma^2_{t, a_t}, 1\} &\le \lambda\sum_{i=1}^T \min \{ \gb(\xb_{t, a_t}, \btheta_{0})^\top \bar \Ub_{t-1}^{-1}\gb(\xb_{t, a_t}, \btheta_{0}) / m, 1\}\notag\\
    &\le 2\lambda \log\det\bigg(\Ib + \lambda^{-1}\sum_{i=1}^T\gb(\xb_{t, a_t}; \btheta_0)\gb(\xb_{t, a_t}; \btheta_0)^\top / m\bigg)\notag\\
    &= 2\lambda \log\det(\Ib + \lambda^{-1}\bar \Jb_T\bar \Jb_T^\top / m)\notag\\
    &= 2\lambda \log\det(\Ib + \lambda^{-1}\bar \Jb_T^\top \Jb_T / m) \notag\\
    &= 2\lambda \log\det(\Ib + \lambda^{-1}\Kb_T)\notag
\end{align}
where the first inequality moves the positive parameter $\lambda$ outside the $\min$ operator and uses the definition of $\bar \sigma_{t, k}$ in Definition~\ref{def:linear}, then the second inequality utilizes Lemma~\ref{lm:linear_summation}, the first equality use the definition of $\bar \Jb_t$ in Definition~\ref{def:linear}, the second equality is from the fact that $\det (\Ib + \Ab\Ab^\top) = \det(\Ib + \Ab^\top\Ab)$, and the last equality uses the definition 
of $\Kb_t$ in Definition~\ref{def:linear}. From Lemma~\ref{lm:kernel-1}, we have that \begin{align*}
    \log\det (\Ib + \lambda^{-1}\Kb_T) \le \log\det (\Ib + \lambda^{-1}\Hb) + 1
\end{align*}
under condition on $m$ and $\eta$ presented in Theorem~\ref{thm:main}. By taking a union bound we have, with probability $1 - 2\delta$, that
\begin{align*}
    \sum_{i=1}^T \min\{\bar \sigma_{t, a_t}, 1\} \le \sqrt{2\lambda T(\tilde d\log(1 + TK) + 1)} + C_1T^{13/6}\sqrt{\log m}m^{-1/6}\lambda^{-2/3}L^{9/2},
\end{align*}
where we use the definition of $\tilde d$ in Definition~\ref{def:eff-dim}.  Replacing $\delta$ with $\delta/2$ completes the proof. 
\end{proof}
\section{Proof of Auxiliary Lemmas in Appendix \ref{app:a}}
In this section, we are about to show the proof of the Lemmas used in Appendix~\ref{app:a}, we will start with the following NTK Lemmas. Among them, the first is to control the difference between the parameter learned via Gradient Descent and the theoretical optimal solution to linearized network.
\begin{lemma}[Lemma B.2,~\citet{zhou2019neural}]\label{lm:bd-para}
There exist constants $\{C_i\}_{i=1}^5>0$ such that for any $\delta > 0$, if $\eta, m$ satisfy that for all $t \in [T]$,
\begin{align}
    &2\sqrt{t/\lambda} \geq C_1m^{-1}L^{-3/2}[\log(TKL^2/\delta)]^{3/2},\notag \\
    &2\sqrt{t/\lambda} \leq C_2\min\big\{ m^{1/2}L^{-6}[\log m]^{-3/2},m^{7/8}\big((\lambda\eta)^2L^{-6}t^{-1}(\log m)^{-1}\big)^{3/8} \big\},\notag \\
    &\eta \leq C_3(m\lambda + tmL)^{-1},\notag \\
    &m^{1/6}\geq C_4\sqrt{\log m}L^{7/2}t^{7/6}\lambda^{-7/6}(1+\sqrt{t/\lambda}),\notag
\end{align}
then with probability at least $1-\delta$ over the random initialization of $\btheta_0$, for any $t \in [T]$, 
we have that $\|\btheta_{t-1} -\btheta_0\|_2 \leq  2\sqrt{t/(m\lambda )}$ and
\begin{align}
   &\|\btheta_{t-1} - \btheta_0 - \bar \Ub_{t-1}^{-1}\bar \Jb_{t-1}\rb_{t-1}/m\|_2 \notag \\
   & \leq (1- \eta m \lambda)^J \sqrt{t/(m\lambda)} + C_5m^{-2/3}\sqrt{\log m}L^{7/2}t^{5/3}\lambda^{-5/3}(1+\sqrt{t/\lambda}).\notag
\end{align}
\end{lemma}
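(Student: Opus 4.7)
The plan is to compare the actual gradient-descent iterates $\btheta^{(j)}$ on $L(\btheta)$ in \eqref{eq:lossfunction} with the iterates of gradient descent on the linearized surrogate
\begin{align*}
\tilde L(\btheta) = \frac12\sum_{i=1}^{t-1}\bigl(\la \gb(\xb_{i,a_i};\btheta_0),\btheta-\btheta_0\ra - r_i\bigr)^2 + \frac{m\lambda}{2}\|\btheta-\btheta_0\|_2^2,
\end{align*}
whose unique minimizer is $\btheta^\star = \btheta_0 + \bar\Ub_{t-1}^{-1}\bar\Jb_{t-1}\rb_{t-1}/m$. The function $\tilde L$ is $m\lambda$-strongly convex with explicit smoothness, and using $\|\rb_{t-1}\|_2 = O(\sqrt{t})$ together with $\bar\Ub_{t-1}^{-1}\preceq (m\lambda)^{-1}\Ib$ yields the a priori bound $\|\btheta^\star-\btheta_0\|_2 \le \sqrt{t/(m\lambda)}$.

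Write $\tilde\btheta^{(j)}$ for the linearized GD iterates with the same step size $\eta$ starting at $\btheta_0$. Standard strongly-convex/smooth analysis gives the contraction $\|\tilde\btheta^{(j)} - \btheta^\star\|_2 \le (1-\eta m\lambda)^j\|\btheta^\star-\btheta_0\|_2$ whenever $\eta$ is below the inverse smoothness, so every $\tilde\btheta^{(j)}$ lies in $B(\btheta_0,\sqrt{t/(m\lambda)})$. I then invoke the NTK-regime estimates of \citet{allen2018convergence, cao2019generalization}: under the width inequalities stated at the top of Appendix~\ref{app:a}, there exists a radius $\omega \asymp \sqrt{t/(m\lambda)}$ such that for every $\btheta\in B(\btheta_0,\omega)$ and every observed context $\xb$,
\begin{align*}
&\bigl|f(\xb;\btheta)-f(\xb;\btheta_0)-\la\gb(\xb;\btheta_0),\btheta-\btheta_0\ra\bigr| = O\bigl(\omega^{4/3}L^{3}\sqrt{m\log m}\bigr),\\
&\|\gb(\xb;\btheta)-\gb(\xb;\btheta_0)\|_2 = O\bigl(\omega^{1/3}L^{7/2}\sqrt{m\log m}\bigr),\quad \|\gb(\xb;\btheta_0)\|_2 = O(\sqrt m).
\end{align*}
These facts translate into a pointwise gradient-mismatch bound $\|\nabla L(\btheta)-\nabla\tilde L(\btheta)\|_2 \le \delta(m)$ for an explicit small $\delta(m)$.

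With these tools in hand, I would run an induction on $j\le J$ that simultaneously maintains (i) $\btheta^{(j)}\in B(\btheta_0,2\sqrt{t/(m\lambda)})$ and (ii) $\|\btheta^{(j)}-\tilde\btheta^{(j)}\|_2 \le E_j$, where the recursion $E_{j+1} \le (1-\eta m\lambda/2)E_j + \eta\delta(m)$ (contraction from strong convexity of $\tilde L$ on the ball, additive term from the NTK-induced gradient mismatch) unrolls to $E_J = O(\delta(m)/(m\lambda))$. Plugging in $\delta(m)$ and $\omega = 2\sqrt{t/(m\lambda)}$ produces exactly the polynomial $m^{-2/3}\sqrt{\log m}L^{7/2}t^{5/3}\lambda^{-5/3}(1+\sqrt{t/\lambda})$ appearing in the statement. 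The first claim then follows from $\|\btheta^{(J)}-\btheta_0\|_2\le E_J+\|\tilde\btheta^{(J)}-\btheta^\star\|_2+\|\btheta^\star-\btheta_0\|_2 < 2\sqrt{t/(m\lambda)}$, which closes the induction under the width conditions. The second claim follows from the decomposition $\btheta^{(J)}-\btheta^\star = (\btheta^{(J)}-\tilde\btheta^{(J)})+(\tilde\btheta^{(J)}-\btheta^\star)$: the first summand contributes the NTK term $E_J$, and the second the optimization term $(1-\eta m\lambda)^J\sqrt{t/(m\lambda)}$.

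The main obstacle is the simultaneous induction: both trajectories must be forced to stay in a common ball where the NTK linearization remains tight, while the ball radius must be small enough to keep $\delta(m)$ (hence $E_J$) from exploding. This balancing is precisely what the quantitative width inequalities at the start of Appendix~\ref{app:a} enforce, and the $m^{-2/3}$ scaling is the best one obtains from current NTK tools. Because the statement is Lemma~B.2 of \citet{zhou2019neural} imported verbatim, I would reuse their polynomial bookkeeping and only verify that the constants align with our parametrization of $\lambda,\eta,J$ and the scaling conventions in \eqref{def:function}.
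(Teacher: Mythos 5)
The paper does not prove this lemma itself---it is imported verbatim as Lemma B.2 of \citet{zhou2019neural}---and your sketch reconstructs essentially the argument used there: compare the gradient-descent iterates on \eqref{eq:lossfunction} with those on the linearized ridge-regression surrogate whose minimizer is $\btheta_0+\bar\Ub_{t-1}^{-1}\bar\Jb_{t-1}\rb_{t-1}/m$, get the $(1-\eta m\lambda)^J$ term from strong convexity/smoothness of the surrogate, and control the trajectory mismatch via the NTK perturbation bounds (the analogues of Lemmas~\ref{lm:func-value}--\ref{lm:bd-g-norm}) together with an induction keeping both sequences in a ball of radius $O(\sqrt{t/(m\lambda)})$. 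At the level of detail given, the outline is sound and matches the cited proof; the only points needing care are that $\|\rb_{t-1}\|_2=O(\sqrt{t})$ requires boundedness or a high-probability bound on the sub-Gaussian rewards, and that the gradient-mismatch bound $\delta(m)$ must combine both the function-value linearization error and the gradient drift, which your sketch implicitly does.
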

And the next lemma, controls the difference between the function value of neural network and the linearized model:
\begin{lemma}[Lemma 4.1, \citet{cao2019generalization}]\label{lm:func-value}
There exist constants $\{C_i\}_{i=1}^3 >0$ such that for any $\delta > 0$, if $\tau$ satisfies that
\begin{align}
    C_1m^{-3/2}L^{-3/2}[\log(TKL^2/\delta)]^{3/2}\leq\tau \leq C_2 L^{-6}[\log m]^{-3/2},\notag
\end{align}
then with probability at least $1-\delta$ over the random initialization of $\btheta_0$,  for all $\tilde\btheta, \hat\btheta$ satisfying $ \|\tilde\btheta - \btheta_0\|_2 \leq \tau, \|\hat\btheta - \btheta_0\|_2 \leq \tau$ and $j \in [TK]$ we have
\begin{align}
    \bigg|f(\xb^j; \tilde\btheta) - f(\xb^j;  \hat\btheta) - \la \gb(\xb^j; 
    \hat\btheta),\tilde\btheta - \hat\btheta\ra\bigg| \leq C_3\tau^{4/3}L^3\sqrt{m \log m}.\notag
\end{align}
\end{lemma}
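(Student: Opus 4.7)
The plan is to express the linearization error as a path integral and then bound the pathwise gradient deviation using standard NTK-regime stability arguments. Concretely, for $s\in[0,1]$, define $\btheta(s)=\hat\btheta+s(\tilde\btheta-\hat\btheta)$. Since $\|\btheta(s)-\btheta_0\|_2\le \tau$ for every $s\in[0,1]$ by convexity of the ball, and since $f(\xb^j;\cdot)$ is almost-everywhere differentiable (ReLU induces a piecewise-linear map), the fundamental theorem of calculus gives
\begin{align*}
f(\xb^j;\tilde\btheta)-f(\xb^j;\hat\btheta)=\int_0^1\bigl\langle \gb(\xb^j;\btheta(s)),\tilde\btheta-\hat\btheta\bigr\rangle\,ds,
\end{align*}
so the quantity to bound equals $\bigl|\int_0^1\langle \gb(\xb^j;\btheta(s))-\gb(\xb^j;\hat\btheta),\tilde\btheta-\hat\btheta\rangle\,ds\bigr|$, which by Cauchy--Schwarz is at most $\|\tilde\btheta-\hat\btheta\|_2 \cdot \sup_{s\in[0,1]}\|\gb(\xb^j;\btheta(s))-\gb(\xb^j;\hat\btheta)\|_2$, with $\|\tilde\btheta-\hat\btheta\|_2\le 2\tau$.

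\paragraph{Gradient Stability.}
The key step is then to establish the gradient perturbation bound
\begin{align*}
\sup_{\|\btheta-\btheta_0\|_2\le\tau}\|\gb(\xb^j;\btheta)-\gb(\xb^j;\btheta_0)\|_2 \;\le\; C\,\tau^{1/3}L^{5/2}\sqrt{m\log m},
\end{align*}
under the stated range of $\tau$, for every fixed input $\xb^j$. This is precisely where the lower bound $\tau\ge C_1 m^{-3/2}L^{-3/2}[\log(TKL^2/\delta)]^{3/2}$ and the upper bound $\tau\le C_2 L^{-6}[\log m]^{-3/2}$ enter: the first guarantees a meaningful perturbation radius relative to the random-initialization scale, while the second keeps the perturbation in the regime where most ReLU activation patterns are preserved. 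The proof of this inequality decomposes the gradient layerwise, and for each layer bounds (i) the spectral norm change of the product of weight matrices using the triangle inequality around the Gaussian-initialized weights, and (ii) the number of neurons whose activation sign flips between $\btheta_0$ and $\btheta$, which in the NTK regime is of order $O(\tau^{2/3}L\sqrt{m})$ per layer with high probability, by concentration of the pre-activations and a Paley--Zygmund-style argument. A triangle-inequality step then passes from $\hat\btheta$ and $\btheta(s)$ to $\btheta_0$ at the cost of at most a factor of $2$.

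\paragraph{Assembling the Bound.}
Combining the path-integral representation, Cauchy--Schwarz, and the gradient stability estimate yields
\begin{align*}
\bigl|f(\xb^j;\tilde\btheta)-f(\xb^j;\hat\btheta)-\langle\gb(\xb^j;\hat\btheta),\tilde\btheta-\hat\btheta\rangle\bigr|
\;\le\; 2\tau\cdot C\,\tau^{1/3}L^{5/2}\sqrt{m\log m}\cdot L^{1/2}
\;\le\; C_3\,\tau^{4/3}L^{3}\sqrt{m\log m},
\end{align*}
where the extra $L^{1/2}$ absorbs lower-order dependence arising from the $\sqrt{m}$ rescaling in \eqref{def:function} and the layer summation. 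A union bound over the $TK$ inputs $\{\xb^j\}_{j=1}^{TK}$, together with the logarithmic dependence $[\log(TKL^2/\delta)]^{3/2}$ allowed by the lower threshold on $\tau$, extends the bound to hold simultaneously for all $j\in[TK]$ with probability at least $1-\delta$.

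\paragraph{Main Obstacle.}
The genuinely hard part is the gradient stability inequality itself, and in particular the accurate accounting of how many neuron activation patterns flip when the parameters move by at most $\tau$ in Euclidean norm. Standard arguments rely on anti-concentration of the Gaussian pre-activations combined with Lipschitz continuity of the map $\btheta\mapsto f_l(\xb;\btheta)$ at every intermediate layer, and the exponent $1/3$ in $\tau^{1/3}$ is a direct consequence of the balance between the probability that a neuron is near zero and the induced change in its activation. Since this lemma is cited verbatim from Lemma~4.1 of \citet{cao2019generalization}, the cleanest proof strategy is to invoke that result directly; the plan above merely sketches its essential structure for completeness.
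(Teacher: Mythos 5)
The paper does not prove this lemma at all: it is imported verbatim as Lemma~4.1 of \citet{cao2019generalization}, so your closing remark --- that the cleanest strategy is to invoke the cited result directly --- is exactly what the paper does, and your sketch of the underlying semi-smoothness argument (path integral along the segment, Cauchy--Schwarz, gradient stability via counting flipped ReLU activation patterns) is a faithful outline of how that result is actually proved in the source literature.

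One caution if you were to carry the sketch out in full: your quantitative bookkeeping of the depth dependence does not close as written. The gradient-stability bound available off the shelf (and restated in this paper as Lemma~\ref{lm:bd-g} combined with Lemma~\ref{lm:bd-g-norm}) gives $\|\gb(\xb^j;\btheta)-\gb(\xb^j;\btheta_0)\|_2 \le C\sqrt{\log m}\,\tau^{1/3}L^3\|\gb(\xb^j;\btheta_0)\|_2 \le C'\tau^{1/3}L^{7/2}\sqrt{m\log m}$, not the $\tau^{1/3}L^{5/2}\sqrt{m\log m}$ you assert; plugging that into your Cauchy--Schwarz assembly yields $\tau^{4/3}L^{7/2}\sqrt{m\log m}$, which is weaker in $L$ than the claimed $C_3\tau^{4/3}L^3\sqrt{m\log m}$. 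Recovering the $L^3$ requires the sharper semi-smoothness accounting of \citet{allen2018convergence} and \citet{cao2019generalization} rather than a black-box combination of the two stability lemmas, so the ``extra $L^{1/2}$ absorbs lower-order dependence'' step hides the genuinely delicate part of the argument. Since the lemma is cited rather than reproved, this does not affect the paper, but it is the gap you would need to fill to make your sketch self-contained.
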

Furthermore, to continue with, next lemma is proposed to control the difference between the gradient and the gradient on the initial point.
\begin{lemma}[Theorem 5, \citet{allen2018convergence}]\label{lm:bd-g}
There exist constants $\{C_i\}_{i=1}^3>0$ such that for any $\delta \in (0,1)$, if $\tau$ satisfies that
\begin{align}
    C_1m^{-3/2}L^{-3/2}[\log(TKL^2/\delta)]^{3/2}\leq\tau \leq C_2 L^{-6}[\log m]^{-3/2},\notag
\end{align}
then with probability at least $1-\delta$ over the random initialization of $\btheta_0$,  for all $\|\btheta - \btheta_0\|_2 \leq \tau$ and $j \in [TK]$ we have
\begin{align}
  \| \gb(\xb^j; \btheta) - \gb(\xb^j; \btheta_0)\|_2 \leq C_3\sqrt{\log m}\tau^{1/3}L^3\|\gb(\xb^j;  \btheta_0)\|_2.\notag
\end{align}
\end{lemma}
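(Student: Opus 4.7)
The plan is to prove this gradient-perturbation bound by the standard three-stage neural tangent kernel argument around the Gaussian random initialization $\btheta_0$: (i) forward-pass stability, (ii) sparsity of activation-pattern flips, (iii) a telescoping decomposition of the gradient across layers. This is precisely the strategy used in \citet{allen2018convergence}, and I would rederive it for the present setting.

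First, using standard Gaussian concentration on $\btheta_0$, the spectral norms $\|\Wb_l\|_2$ are $O(1)$ and the layer norms $\|f_l(\btheta_0)\|_2$ are $\Theta(1)$ at initialization with high probability. Then a simple induction on $l$ shows that for any $\btheta$ with $\|\btheta-\btheta_0\|_2 \le \tau$, the perturbed pre-activations satisfy $\|f_l(\btheta) - f_l(\btheta_0)\|_2 \le O(\tau L)$. This forward-pass stability step reduces to tracking how a norm-$\tau$ weight perturbation propagates through $L$ layers of $O(1)$-spectral-norm matrices composed with $1$-Lipschitz ReLUs.

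Second, I would bound the set of neurons whose ReLU sign differs between $\btheta$ and $\btheta_0$. At each layer $l$, the coordinates of $f_l(\btheta_0)$ have Gaussian-like marginal distributions (this is where the random initialization of Algorithm~\ref{alg:main} is crucial), so by anti-concentration the number of coordinates with magnitude below the forward perturbation size is at most $O(\tau^{2/3} m)$ per layer with high probability; this is the standard sparse-sign-flip rate that ultimately yields the $\tau^{1/3}$ exponent in the conclusion. Only such neurons can possibly flip, so each layer's sign-flip set has size $\tilde O(\tau^{2/3} m)$. Combining this with Stage~1, I would then write $\gb(\xb^j;\btheta)$ explicitly as a product of matrices $\Wb_{l'}$ and diagonal sign matrices $\mathrm{diag}(\mathbf{1}[f_{l'}(\btheta) \ge 0])$ across layers, and expand $\gb(\xb^j;\btheta) - \gb(\xb^j;\btheta_0)$ as a telescoping sum that swaps each layer's weight-and-sign pair for its perturbed version one layer at a time. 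Each telescoping term is controlled using the sparse-flip bound (which converts a cardinality bound into a spectral-norm bound on the sign-difference diagonal), the spectral bound on $\Wb_l$ near initialization, and the $\Theta(\sqrt{m})$-type normalization of $\|\gb(\xb^j;\btheta_0)\|_2$. Summing over $L$ layers and absorbing the $L$ factors arising from the backprop chain yields the $L^3$ factor in the final bound, while $\sqrt{\log m}$ is collected from the tail bounds applied throughout.

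The main obstacle is Stage~2: making the sign-flip bound uniform over all perturbations $\btheta$ in the $\tau$-ball around $\btheta_0$. A pointwise anti-concentration argument gives the bound only for a single fixed $\btheta$, so one needs either an $\varepsilon$-net argument over the ball (as in \citet{allen2018convergence}) or a direct coupling that conditions on the initialization randomness in a way that decouples the forward pass from the sign flips. The two-sided hypothesis on $\tau$ is precisely what makes these arguments go through: the lower bound $\tau \ge C_1 m^{-3/2}L^{-3/2}[\log(TKL^2/\delta)]^{3/2}$ ensures the perturbation dominates fluctuations that are too small for anti-concentration to detect, while the upper bound $\tau \le C_2 L^{-6}[\log m]^{-3/2}$ keeps the forward-pass perturbation within the linearization regime where weight matrices at each layer retain their $O(1)$ spectral norm.
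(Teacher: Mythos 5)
The paper does not prove this lemma at all: it is imported verbatim as Theorem~5 of \citet{allen2018convergence}, and your three-stage plan (forward-pass stability, sparse activation flips at the $\tau^{2/3}m$ rate, telescoping the layer-wise weight-and-sign products, with an $\varepsilon$-net to get uniformity over the $\tau$-ball) is exactly the argument of that cited proof, so you are taking essentially the same route as the source the paper relies on. One sketch-level caveat: the flip-count bound does not follow from anti-concentration with band width equal to the forward-perturbation size as you phrase it; it comes from balancing the anti-concentration band $s$ (giving $\approx m^{3/2}s$ near-zero coordinates) against the $\ell_2$-budget count $(\tau'/s)^2$ of coordinates that move by more than $s$, which is how the $\tau^{2/3}m$ rate, and hence the $\tau^{1/3}$ in the conclusion, actually arises.
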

Also, we need the next lemma to control the gradient norm of the neural network with the help of NTK.
\begin{lemma}[Lemma B.3, \citet{cao2019generalization}]\label{lm:bd-g-norm}
There exist constants $\{C_i\}_{i=1}^3>0$ such that for any $\delta > 0$, if $\tau$ satisfies that
\begin{align}
    C_1m^{-3/2}L^{-3/2}[\log(TKL^2/\delta)]^{3/2}\leq\tau \leq C_2 L^{-6}[\log m]^{-3/2},\notag
\end{align}
then with probability at least $1-\delta$ over the random initialization of $\btheta_0$, for any $\|\btheta - \btheta_0\|_2 \leq \tau$ and $j \in[TK]$ 
we have $\|\gb(\xb^j; \btheta)\|_F\leq C_3\sqrt{mL}$.
\end{lemma}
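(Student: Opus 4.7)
The plan is to prove the bound in two stages, reducing to a known perturbation result (Lemma~\ref{lm:bd-g}) plus a concentration argument at initialization. First I would establish that, with high probability over $\btheta_0$, $\|\gb(\xb^j; \btheta_0)\|_F = O(\sqrt{mL})$ simultaneously for all $j\in[TK]$. Second, I would use Lemma~\ref{lm:bd-g} to lift this bound to any $\btheta$ inside the ball of radius $\tau$ around $\btheta_0$, since the upper bound on $\tau$ assumed in the hypothesis exactly kills the $\sqrt{\log m}\,\tau^{1/3}L^3$ factor.

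For the first stage, I would use the recursive form \eqref{def:function}: the block of $\gb(\xb^j;\btheta_0)$ corresponding to $\Wb_\ell$ is $\sqrt m\, \bb_\ell\, \ab_{\ell-1}^\top$, where $\ab_{\ell-1} = \relu(f_{\ell-1})$ is the forward activation and $\bb_\ell$ is the corresponding back-propagated vector (a product of subsequent $\Wb_k$ and ReLU indicator diagonals, terminating with $\Wb_L$). Under the specific Gaussian initialization of Algorithm~\ref{alg:main}, standard NTK-style concentration (Hanson--Wright / $\chi^2$ concentration for Gaussian quadratic forms, applied layer by layer) yields $\|\ab_\ell\|_2 = \Theta(1)$ and $\|\bb_\ell\|_2 = O(1/\sqrt m)$ with high probability, so each per-layer block contributes $O(m)$ to $\|\gb(\xb^j;\btheta_0)\|_F^2$. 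Summing the $L$ block contributions gives $\|\gb(\xb^j;\btheta_0)\|_F^2 = O(mL)$, and a union bound over $j\in[TK]$ controls all contexts simultaneously using the condition $m\ge CT^6K^6L^6\log(TKL/\delta)$ built into Condition~\ref{cond:ntk}.

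For the second stage, apply Lemma~\ref{lm:bd-g} under its stated range of $\tau$:
\begin{align*}
\|\gb(\xb^j;\btheta)-\gb(\xb^j;\btheta_0)\|_2 \;\le\; C\sqrt{\log m}\,\tau^{1/3}L^3\,\|\gb(\xb^j;\btheta_0)\|_2.
\end{align*}
Since $\tau \le C_2 L^{-6}[\log m]^{-3/2}$, the scalar factor $\sqrt{\log m}\,\tau^{1/3}L^3$ is bounded by an absolute constant, so by the triangle inequality and $\|\cdot\|_F = \|\cdot\|_2$ for vectors,
\begin{align*}
\|\gb(\xb^j;\btheta)\|_F \;\le\; \|\gb(\xb^j;\btheta_0)\|_F + \|\gb(\xb^j;\btheta)-\gb(\xb^j;\btheta_0)\|_2 \;\le\; C_3\sqrt{mL}.
\end{align*}
A final union bound over the initialization event and Lemma~\ref{lm:bd-g}'s event, absorbing the failure probability into $\delta$, completes the proof.

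The main obstacle is the first stage: controlling $\|\gb(\xb^j;\btheta_0)\|_F$. The difficulty is the coupling introduced by ReLU activation patterns, which depend on the same random weights that appear in both the forward and backward products, so the forward and backward concentrations are not independent. The standard way to handle this is an inductive argument over $\ell$ that conditions on the activation patterns already revealed by the forward pass and exploits the fact that the remaining Gaussian mass in each $\Wb_k$ still acts isotropically on the fixed subspaces determined by those patterns. Once the layerwise $\Theta(\sqrt m)$ forward norms and $O(1/\sqrt m)$ backward norms are in hand, the gradient bound follows from a straightforward block sum; the bulk of the technical work goes into the concentration.
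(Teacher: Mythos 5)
This lemma is not proved in the paper at all: it is imported verbatim as Lemma B.3 of \citet{cao2019generalization}, so there is no internal proof to compare against, and your attempt has to be judged as a reconstruction of that external result. Your two-stage plan (control $\|\gb(\xb^j;\btheta_0)\|_2$ at initialization, then extend to the ball of radius $\tau$ via Lemma~\ref{lm:bd-g}) is the natural route, but two of its steps do not hold as written.

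First, the claim that the upper bound on $\tau$ ``exactly kills'' the factor $\sqrt{\log m}\,\tau^{1/3}L^3$ is false: with $\tau \le C_2 L^{-6}[\log m]^{-3/2}$ one gets $\sqrt{\log m}\,\tau^{1/3}L^3 \le C_2^{1/3} L$, i.e.\ a factor of order $L$, not an absolute constant. Your triangle inequality then only gives $\|\gb(\xb^j;\btheta)\|_2 \le C L\sqrt{mL} = C\sqrt{m}\,L^{3/2}$, which misses the stated $C_3\sqrt{mL}$ bound by a factor of $L$ --- and $L$-dependence is tracked explicitly throughout this paper, so it cannot be absorbed into a constant. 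To recover the stated bound you would need either a smaller radius ($\tau \lesssim L^{-9}[\log m]^{-3/2}$) or, as in the original Cao--Gu argument, a \emph{direct} bound at the perturbed parameter: show that inside the ball the hidden-layer activations retain $\Theta(1)$ norm and the backward products of (perturbed) weight matrices interlaced with ReLU indicator diagonals retain $\cO(\sqrt{L})$ spectral norm, which bounds each per-layer block $\sqrt{m}\, b_\ell a_{\ell-1}^\top$ of the gradient directly, without routing through the multiplicative perturbation bound of Lemma~\ref{lm:bd-g}. Second, your stage-one scalings are internally inconsistent: if $\|b_\ell\|_2 = \cO(1/\sqrt{m})$ and $\|a_{\ell-1}\|_2 = \Theta(1)$, then the block $\sqrt{m}\,b_\ell a_{\ell-1}^\top$ has Frobenius norm $\cO(1)$ and contributes $\cO(1)$, not $\cO(m)$, to $\|\gb\|_F^2$, which would give only $\cO(\sqrt{L})$ overall (and is anyway false, since the last-layer block alone has norm $\Theta(\sqrt{m})$). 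The correct statement is that the backward vector has \emph{entries} of size $\cO(1/\sqrt{m})$ but norm $\cO(1)$ (up to $\cO(\sqrt{L})$ from the matrix products), which is what produces the $\cO(m)$ per-layer contribution you assert. Both issues are fixable, but as written the argument does not deliver the claimed $C_3\sqrt{mL}$ bound.
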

Finally, as literally shows, we can also provide bounds on the kernel provided by the linearized model and the NTK kernel if the network is width enough.
\begin{lemma}[Lemma B.1,~\citet{zhou2019neural}]\label{lm:bd-kernel}
Set $\Kb = \sum_{t=1}^T\sum_{k=1}^K \gb(\xb_{t, k}; \btheta_0)\gb(\xb_{t, k}; \btheta_0) / m$, recall the definition of $\Hb$ in Definition~\ref{def:ntk},then there exists a constant $C_1$ such that 
\begin{align*}
    m \ge C_1 L^6\log(TKL/\delta)\epsilon^{-4},
\end{align*}
we could get that $\|\Kb - \Hb\|_{\mathrm F} \le TK\epsilon$.
\end{lemma}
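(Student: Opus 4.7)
The plan is to prove Lemma~\ref{lm:bd-kernel} by establishing an entrywise concentration bound for the empirical NTK evaluated at initialization and then upgrading it to a Frobenius-norm bound through a union bound over the $TK$ observed contexts. I read $\Kb$ as the $TK \times TK$ Gram matrix whose $(i,j)$ entry equals $\langle \gb(\xb^i;\btheta_0),\gb(\xb^j;\btheta_0)\rangle/m$, so that the comparison with $\Hb$ from Definition~\ref{def:ntk} is dimensionally meaningful.

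The first and core step is a pointwise concentration statement: for any fixed pair $(i,j)\in [TK]^2$, with probability at least $1-\delta'$, $|\Kb_{ij} - \Hb_{ij}| \le \epsilon$ whenever $m$ is polynomially large. The standard NTK route proceeds by induction on the layer index $l = 1,\ldots,L$. At each layer, the empirical covariance of pre-activations concentrates around $\bSigma^{(l)}_{ij}$, and the empirical analogue of the derivative factor $\EE_{(u,v)\sim N(\zero,\Ab_{i,j}^{(l)})}[\ind(u\ge 0)\ind(v\ge 0)]$ concentrates around its Gaussian expectation. Each concentration contributes a factor of $m^{-1/2}$ via Hoeffding's inequality applied to bounded ReLU outputs and indicator functions, and the $1$-Lipschitz property of ReLU together with the symmetric initialization from Algorithm~\ref{alg:main} keeps layer-to-layer error propagation polynomially controlled in $L$.

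The second step is the union bound. There are $(TK)^2$ entries, and since we target $\|\Kb - \Hb\|_F \le TK\epsilon$, it suffices to demand entrywise deviation at most $\epsilon$. Paying $\log((TK)^2/\delta)$ in the probability budget and absorbing the logarithmic factor into the polynomial bound for $m$ yields the stated width condition $m \ge C_1 L^6 \log(TKL/\delta)\epsilon^{-4}$. Squaring the entrywise bound and summing over entries then gives the desired Frobenius estimate $\|\Kb - \Hb\|_F \le TK\epsilon$.

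The main obstacle is bookkeeping the sharp $L^6$ depth dependence and the quartic $\epsilon^{-4}$ scaling. The $\epsilon^{-4}$ arises because two layers of concentration must be composed --- activation-level covariance and gradient-level derivative factor --- each requiring $m = \Omega(\epsilon^{-2})$ via a Hoeffding argument on bounded summands, and these compose multiplicatively when tracking $\langle \gb(\xb^i;\btheta_0),\gb(\xb^j;\btheta_0)\rangle/m$. The $L^6$ depth factor tracks how a per-layer Gaussian-concentration error, amplified by Lipschitz constants of ReLU and the random weight matrices, compounds over $L$ layers. Rather than redoing this delicate accounting, the proof would simply invoke Lemma B.1 of \citet{zhou2019neural}, which establishes the identical statement under the same NTK setup; an independent proof would closely mirror the NTK concentration arguments of \citet{arora2019fine} and \citet{jacot2018neural}.
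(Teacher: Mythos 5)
Your proposal is correct and matches the paper's treatment: the paper does not prove this lemma itself but imports it verbatim as Lemma~B.1 of \citet{zhou2019neural}, whose proof is exactly the route you sketch --- the pointwise NTK concentration at initialization (Theorem~3.1 of \citet{arora2019fine}) applied to each of the $(TK)^2$ context pairs, a union bound absorbing the $\log$ factor into the width condition, and the entrywise-to-Frobenius conversion giving $\|\Kb-\Hb\|_{\mathrm F}\le TK\epsilon$. Since you ultimately invoke the same cited result, there is nothing further to reconcile.
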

Equipped with these lemmas, we could continue for our proof.
\subsection{Proof of Lemma~\ref{lm:bd-sigma}}
\begin{proof}[Proof of Lemma~\ref{lm:bd-sigma}]
Firstly, set $\tau = 2\sqrt{t/(m\lambda)}$,  then we have the condition on the network $m$ and learning rate $\eta$ satisfy all of the condition need from Lemma~\ref{lm:bd-para} to Lemma~\ref{lm:bd-kernel}. Thus from Lemma~\ref{lm:bd-para}, we have that there exists $\|\btheta_{t-1} - \btheta_0\|_2 \le \tau$, thus from Lemma~\ref{lm:bd-g-norm}, we have that there exists positive constant $\bar C_1$ such that  $\|\gb(\xb; \btheta_{t-1})\|_2 \le \bar C_1\sqrt{mL}$, $\|\gb(\xb; \btheta_0)\|_2 \le \bar C_1\sqrt{mL}$, consider the function defined as 
\begin{align*}
    \psi(\ab, \ab_1, \cdots, \ab_{t-1}) = \sqrt{\ab^\top \bigg(\sum_{i=1}^{t-1} \lambda\Ib + \ab_i\ab_i^\top\bigg)^{-1}\ab},
\end{align*}
it is then easy to verify that 
\begin{align*}
    \psi\bigg(\frac{\gb(\xb_{t,k}; \btheta_{t-1})}{\sqrt m}, \frac{\gb(\xb_{1,a_1}; \btheta_1)}{\sqrt m}, \cdots, \frac{\gb(\xb_{t-1,a_{t-1}}; \btheta_{t-1})}{\sqrt m}\bigg) &= \sigma_{t, k} \\
    \psi\bigg(\frac{\gb(\xb_{t,k}; \btheta_0)}{\sqrt m}, \frac{\gb(\xb_{1,a_1}; \btheta_0)}{\sqrt m}, \cdots, \frac{\gb(\xb_{t-1,a_{t-1}}; \btheta_0)}{\sqrt m}\bigg) &= \bar \sigma_{t, k},
\end{align*}
then we obtain that the function $\psi$ is defined under the domain $\|\ab\|_2\le \bar C_1\sqrt{L}, \|\ab_i\|_2 \le \bar C_1\sqrt{L}$
then by taking the derivation w.r.t. $\psi^2$, we have that \
\begin{align*}
    2\psi \partial \psi &= (\partial \ab)^\top  \bigg(\sum_{i=1}^{t-1} \lambda\Ib + \ab_i\ab_i^\top\bigg)^{-1}\ab + \ab^\top \bigg(\sum_{i=1}^{t-1} \lambda\Ib + \ab_i\ab_i^\top\bigg)^{-1}\partial \ab \\
    &\quad + \ab^\top \bigg(\sum_{i=1}^{t-1} \lambda\Ib + \ab_i\ab_i^\top\bigg)^{-1}\sum_{i=1}^{t-1}\Big((\partial \ab_i)\ab_i^\top + \ab_i\partial \ab_i^\top\Big)\bigg(\sum_{i=1}^{t-1} \lambda\Ib + \ab_i\ab_i^\top\bigg)^{-1}\ab,
\end{align*}
by taking trace with both side and utilizing $\tr(\Ab\Bb) = \tr(\Bb\Ab)$ and $\tr(\balpha^\top\bbeta) = \tr(\balpha\bbeta^\top)$, we have that
\begin{align*}
    2\tr(\psi\partial \psi) &= \tr\Bigg(2(\partial \ab)^\top\bigg(\sum_{i=1}^{t-1} \lambda\Ib + \ab_i\ab_i^\top\bigg)^{-1}\ab \\
    &\quad+ 2\sum_{j=1}^{t-1}(\partial \ab_j)^\top\Bigg(\bigg(\sum_{i=1}^{t-1} \lambda\Ib + \ab_i\ab_i^\top\bigg)^{-1}\ab\ab^\top\bigg(\sum_{i=1}^{t-1} \lambda\Ib + \ab_i\ab_i^\top\bigg)^{-1}\ab_j\Bigg),
\end{align*}
thus by setting $\Cb = \bigg(\sum_{i=1}^{t-1} \lambda\Ib + \ab_i\ab_i^\top\bigg)^{-1}$ for simplicity and decompose $\Cb = \Qb^\top \Db\Qb, \bbb = \Qb\ab$ where $\Db = \diag(\varrho_1, \cdots, \varrho_p)$ as the eigen-value of $\Cb$, we have that
\begin{align*}
    \nabla_{\ab}\psi = \frac{\Cb\ab}{\sqrt{\ab^\top \Cb\ab}}, \|\nabla_{\ab}\psi\|_2 = \sqrt{\frac{\ab^\top \Cb^2\ab}{\ab^\top \Cb\ab}} = \sqrt{\frac{\bbb^\top\Db^2\bbb}{\bbb^\top\Db\bbb}} = \sqrt{\frac{\sum_{i=1}^d b_i^2\varrho_i^2}{\sum_{i=1}^d b_i^2\varrho_i}} \le 1 / \sqrt{\lambda}
\end{align*}
where the last inequality is from the fact that $\Cb \preceq 1 / \lambda \Ib$, which indicates that all eigen-value $\varrho_i \le 1 / \lambda$, for the same reason, we have 
\begin{align*}
    \|\nabla_{\ab_i}\psi\|_2 = \frac{\|\Cb\ab\ab^\top\Cb\ab_i\|_2}{\sqrt{\ab^\top \Cb\ab}} \le \|\ab_i\|_2 \frac{\|\Cb\ab\ab^\top\Cb\|_2}{\sqrt{\ab^\top \Cb\ab}} = \|\ab_i\|_2 \|\Cb\ab\|_2 \frac{\|\Cb^\top \ab\|_2}{\sqrt{\ab^\top \Cb\ab}} \le \|\ab_i\|\|\ab\| / \sqrt{\lambda}.
\end{align*}
Thus under the domain that $\|\ab\|_2 \le \bar C_1\sqrt{L}, \|\ab_i\|_2 \le \bar C_1\sqrt{L}$, we have that 
\begin{align*}
    \|\nabla_{\ab}\psi\|_2 \le 1 / \sqrt{\lambda}, \|\nabla_{\ab_i}\psi\|_2 \le \bar C_1^2L/\sqrt{\lambda}.
\end{align*}
Then, Lipschitz continuity implies
\begin{align}
    |\sigma_{t, k} - \bar \sigma_{t, k}| &= \bigg|\psi\bigg(\frac{\gb(\xb_{t,k}; \btheta_{t-1})}{\sqrt m}, \frac{\gb(\xb_{1,a_1}; \btheta_1)}{\sqrt m}, \cdots, \frac{\gb(\xb_{t-1,a_{t-1}}; \btheta_{t-1})}{\sqrt m}\bigg) \notag \\
    &\quad - \psi\bigg(\frac{\gb(\xb_{t,k}; \btheta_0)}{\sqrt m}, \frac{\gb(\xb_{1,a_1}; \btheta_0)}{\sqrt m}, \cdots, \frac{\gb(\xb_{t-1,a_{t-1}}; \btheta_0)}{\sqrt m}\bigg)\bigg|\notag \\
    &\le \sup\{\|\nabla_{\ab}\psi\|_2\}\bigg\|\frac{\gb(\xb_{t,k}; \btheta_{t-1})}{\sqrt m} - \frac{\gb(\xb_{t,k}; \btheta_{0})}{\sqrt m}\bigg\|_2\notag  \\
    &\quad + \sum_{i=1}^{t-1}\sup\{\|\nabla_{\ab_i}\psi\|_2\}\bigg\|\frac{\gb(\xb_{i,a_i}; \btheta_{i})}{\sqrt m} - \frac{\gb(\xb_{i, a_i}; \btheta_{0})}{\sqrt m}\bigg\|_2\notag\\
    &\le \frac1{\sqrt \lambda}\bigg\|\frac{\gb(\xb_{t, k}; \btheta_t) - \gb(\xb_{t, k}; \btheta_0)}{\sqrt m}\bigg\|_2 + \frac{\bar C_1^2L}{\sqrt \lambda}\sum_{i=1}^{t-1} \bigg\|\frac{\gb(\xb_{i, a_i}; \btheta_i) - \gb(\xb_{i, a_i}; \btheta_0)}{\sqrt m}\bigg\|_2.\label{eq:h1}
\end{align}
By Lemma~\ref{lm:bd-g} with $\tau = 2\sqrt{t / m\lambda}$, there exist positive constants $\bar C_2$ and $\bar C_3$ so that each gradient difference in~\eqref{eq:h1} is bounded by
\begin{align*}
  \frac{1}{\sqrt{m}}\| \gb(\xb; \btheta) - \gb(\xb; \btheta_0)\|_2 &\le \bar C_2\sqrt{\log m}\tau^{1/3}L^3\|\gb(\xb;  \btheta_0)\|_2 / \sqrt{m} \\
  &\le \bar C_3\sqrt{\log m}t^{1/6}m^{-1/6}\lambda^{-1/6}L^{7/2}.
\end{align*}
Thus, since we obtain that there exists constant $C_5$ such that 
\begin{align*}
    |\sigma_{t, k} - \bar \sigma_{t, k}| \le C_1\sqrt{\log m} t^{7/6}m^{-1/6}\lambda^{-2/3}L^{9/2},
\end{align*}
where we use the fact that $C_1 = \max\{\bar C_3, \bar C_3\bar C_1^2\}$ and $L \ge 1$ to merge the first term into the summation. This inequality is based on Lemma~\ref{lm:bd-para}, Lemma~\ref{lm:bd-g} and Lemma~\ref{lm:bd-g-norm}, thus it holds with probability at least $1 - 3\delta$. Replacing $\delta$ with $\delta/3$ completes the proof. 
\end{proof}
\subsection{Proof of Lemma~\ref{lm:bd-value}}
\begin{proof}[Proof of Lemma~\ref{lm:bd-value}]
Setting $\tau = 2\sqrt{t/m\lambda}$,  we have the condition on the network $m$ and learning rate $\eta$ satisfy all of the condition needed by Lemmas~\ref{lm:bd-para} to~\ref{lm:bd-kernel}. From Lemma~\ref{lm:bd-para} we have $\|\btheta_{t-1} - \btheta_0\|_2 \le \tau$.  Then, by Lemma~\ref{lm:func-value}, there exists a constant $C_1$ such that
\begin{align}
    |f(\xb_{t, k}; \btheta_{t-1}) - \la \gb(\xb_{t, k}; \btheta_0), \btheta_{t-1} - \btheta_0\ra| \le C_1t^{2/3}m^{-1/6}\lambda^{-2/3}L^3\sqrt{\log m},\label{eq:c1}
\end{align}
Using the bound on $\btheta_{t-1} - \btheta_0 - \bar \Ub_{t-1}^{-1}\bar \Jb_{t-1} \rb_{t-1} / m$ provided in Lemma~\ref{lm:bd-para} and the norm of gradient bound given in Lemma~\ref{lm:bd-g-norm}, we have that there exist positive constants $\bar C_1, \bar C_2$ such that
\begin{align}
    &|\la \gb(\xb_{t, k}; \btheta_0), \btheta_{t-1} - \btheta_0\ra - \la \gb(\xb_{t, k}; \btheta_0), \bar \Ub_{t-1}^{-1}\bar \Jb_{t-1} \rb_{t-1} / m\ra| \notag \\
    &\le \|\gb(\xb_{t, k})\|_2 \|\btheta_{t-1} - \btheta_0 - \bar \Ub_{t-1}^{-1}\bar \Jb_{t-1} \rb_{t-1} / m\|_2\notag \\
    &\le \bar C_1\sqrt{mL}\Big((1- \eta m \lambda)^J \sqrt{t/(m\lambda)} + \bar C_2m^{-2/3}\sqrt{\log m}L^{7/2}t^{5/3}\lambda^{-5/3}(1+\sqrt{t/\lambda})\Big)\notag \\
    &= C_2(1 - \eta m\lambda)^J\sqrt{tL / \lambda} + C_3m^{-1/6}\sqrt{\log m}L^4t^{5/3}\lambda^{-5/3}(1 + \sqrt{t / \lambda})\label{eq:c2},
\end{align}
where $C_2 = \bar C_1, C_3 = \bar C_1\bar C_2$. Combining~\eqref{eq:c1} and~\eqref{eq:c2}, we have
\begin{align*}
    \big|f(\xb_{t, k}; \btheta_{t-1}) - \big\la \gb(x_{t, k}; \btheta_0), \bar \Ub_{t-1}^{-1}\bar \Jb_{t-1}\rb_{t-1} / m\big\ra\big| &\le C_1t^{2/3}m^{-1/6}\lambda^{-2/3}L^3\sqrt{\log m} \\
    &\quad+ C_2(1 - \eta m\lambda)^J\sqrt{tL / \lambda} \\
    &\quad+ C_3m^{-1/6}\sqrt{\log m}L^4t^{5/3}\lambda^{-5/3}(1 + \sqrt{t / \lambda}),
\end{align*}
which holds with probability $1 - 3\delta$ with a union bound (Lemma~\ref{lm:bd-g-norm}, Lemma~\ref{lm:bd-para}, and Lemma~\ref{lm:func-value}). Replacing $\delta$ with $\delta/3$ completes the proof. 
\end{proof}

\subsection{Proof of Lemma~\ref{lm:kernel-1}}
\begin{proof}[Proof of Lemma~\ref{lm:kernel-1}]
From the definition of $\Kb_t$, we have that
\begin{align}
    \log\det(\Ib + \lambda^{-1}\Kb_t) &= \log\det\bigg(\Ib + \sum_{i=1}^t \gb(\xb_{i, a_i}; \btheta_0)\gb(\xb_{i, a_i}; \btheta_0)^\top / (m\lambda)\bigg) \notag \\
    &\le \log\det\bigg(\Ib + \sum_{t=1}^T\sum_{k=1}^K \gb(\xb_{i, a_i}; \btheta_0)\gb(\xb_{i, a_i}; \btheta_0)^\top / (m\lambda))\bigg)\notag\\
    &= \log\det(\Ib + \Kb / \lambda)\notag\\
    & \le \log\det(\Ib + \Hb/\lambda + (\Hb - \Kb)\lambda) + T(\lambda - 1)\notag \\
    &\le \log\det(\Ib + \Hb / \lambda) + \la (\Ib + \Hb / \lambda)^{-1}, (\Kb - \Hb) / \lambda\ra \notag\\
    &\le \log\det(\Ib + \Hb / \lambda) + \|(\Ib + \Hb / \lambda)^{-1}\|_F\|(\Kb - \Hb) \|_F / \lambda \notag\\
    &\le \log\det(\Ib + \Hb / \lambda) + \sqrt{TK}\|(\Kb - \Hb) \|_F \notag\\
    &\le \log\det(\Ib + \Hb / \lambda) + 1\notag
\end{align}
where the the first inequality is because the double summation on the second line contains more elements than the summation on the first line. The second inequality utilizes the definition of $\Kb$ in Lemma~\ref{lm:bd-kernel} and $\Hb$ in Definition~\ref{def:ntk}, the third inequality is from the convexity of $\log\det(\cdot)$ function, and the forth inequality is from the fact that $\la \Ab, \Bb\ra \le \|\Ab\|_F\|\Bb\|_F$. Then the fifth inequality is from the fact that $\|\Ab\|_F \le \sqrt{TK}\|\Ab\|_2 $ if $\Ab \in \RR^{TK \times TK}$ and $\lambda \ge 0$. Finally, the sixth inequality utilizes Lemma~\ref{lm:bd-kernel} by setting $\epsilon  = (TK)^{-3/2}$ with $m \ge C_1L^6T^6K^6\log(TKL/\delta)$, where we conclude our proof.
\end{proof}

\subsection{Proof of Lemma~\ref{lm:bd1}}
\begin{proof}[Proof of Lemma~\ref{lm:bd1}]
Set $\tau$ in Lemma~\ref{lm:bd-g-norm} as $2\sqrt{t / (m\lambda)}$. Then the network width $m$ and learning rate $\eta$ satisfy all of the condition needed by Lemma~\ref{lm:bd-para} to~\ref{lm:bd-kernel}. Hence, there exists $C_1$ such that $\|\gb(\xb; \btheta)\|_2 \le \|\gb(\xb; \btheta)\|_F \le C_1\sqrt{mL}$ for all $\xb$, since it is easy to verify that $\Ub_t^{-1} \preceq \lambda^{-1}\Ib$. Thus we have that for all $t \in [T], k \in [K]$,
\begin{align*}
    \sigma_{t, k}^2 = \lambda \gb^\top(\xb_{t, k}; \btheta_{t-1})\Ub_{t-1}^{-1}\gb(\xb_{t, k}; \btheta_{t-1})/ m \le \|\gb(\xb_{t, k}; \btheta_{t-1})\|_2^2 / m \le C_5^2L.
\end{align*}
Therefore, we could get that $\sigma_{t, k} \le C_1\sqrt L$, with probability $1 - 2\delta$ by taking a union bound (Lemmas~\ref{lm:bd-para} and~\ref{lm:bd-g-norm}). Replacing $\delta$ with $\delta/2$ completes the proof.
\end{proof}

\section{An Upper Bound of Effective Dimension $\tilde d$}\label{app:tilded}

We now provide an example, showing when all contexts $\xb_i$ concentrate on a $d'$-dimensional nonlinear subspace of the RKHS space spanned by NTK, the effective dimension $\tilde{d}$ is bounded by $d'$.
We consider the case when $\lambda =1, L = 2$.
Suppose that there exists a constant $d'$ such that for any $i>d'$, $0<\lambda_i(\Hb) \leq 1/(TK)$. Then the effective dimension $\tilde d$ can be bounded as
\begin{align}
    \tilde d =  \frac{\log\det(\Ib + \Hb )}{\log(1 + TK )}  \leq \sum_{i=1}^{TK}\log(1+\lambda_i(\Hb)) \leq \sum_{i=1}^{TK}\lambda_i(\Hb) =\underbrace{\sum_{i=1}^{d'}\lambda_i(\Hb)}_{I_1} + \underbrace{\sum_{i=d'+1}^{TK}\lambda_i(\Hb)}_{I_2}. \notag
\end{align}
For $I_1$ and $I_2$ we have
\begin{align}
    I_1 \leq \sum_{i=1}^{d'}\|\Hb\|_2 = \Theta(d'),\ I_2 \leq TK\cdot 1/(TK) = 1,\notag
\end{align}
Therefore, the effective dimension satisfies that $\tilde d \leq d'+1$. To show how to satisfy the requirement, we first give a charcterization of the RKHS space spanned by NTK. By \citet{bietti2019inductive, cao2019towards} we know that each entry of $\Hb$ has the following formula:
\begin{align}
    \Hb_{i,s} = \sum_{k=0}^\infty \mu_k \sum_{j=1}^{N(d,k)} Y_{k,j}(\xb_i)Y_{k,j}(\xb_s),\notag
\end{align}
where $Y_{k,j}$ for $j = 1,\dots, N(d,k)$ are linearly independent  spherical harmonics of degree $k$ in $d$ variables, $d$ is the input dimension, $N(d,k) = (2k+d-2)/k \cdot C_{k+d-3}^{d-2}$, $\mu_k = \Theta(\max\{k^{-d}, (d-1)^{-k+1}\})$. In that case, the feature mapping $(\sqrt{\mu_k}Y_{k,j}(\xb))_{k,j}$ maps any context $\xb$ from $\RR^d$ to a RKHS space $\cR$ corresponding to $\Hb$. Let $\yb_i \in \cR$ denote the mapping for $\xb_i$. Then if there exists a $d'$-dimension subspace $\cR'$ such that for all $i$, $\|\yb_i - \zb_i\| \ll 1$ where $\zb_i$ is the projection of $\yb_i$ onto $\cR_{d'}$, the requirement for $\lambda_i(\Hb)$ holds.  
\end{document}